\definecolor{bg}{gray}{0.96}
\definecolor{bg}{gray}{0.96}
\begin{document}

% Define the TODO macro with a red background and black text
\newcommand{\TODO}[1]{\noindent\setlength{\fboxsep}{3pt}\colorbox{red!50}{\textbf{TODO:} \textcolor{black}{\parbox{\dimexpr\linewidth-50pt}{\raggedright\noindent #1}}}}

\newcommand{\TODOsarah}[1]{\noindent\setlength{\fboxsep}{3pt}\colorbox{yellow!50}{\textbf{@Sarah:} \textcolor{black}{\parbox{\dimexpr\linewidth-50pt}{\raggedright\noindent #1}}}}

% Probability and expectation
\newcommand{\E}{\mathbb{E}}
\renewcommand{\P}{\mathbb{P}}

% Floor and ceiling functions
\newcommand{\floor}[1]{\lfloor #1 \rfloor}
\newcommand{\ceil}[1]{\lceil #1 \rceil}

% Commonly used sets
\newcommand{\R}{\mathbb{R}}
\newcommand{\N}{\mathbb{N}}
\newcommand{\Z}{\mathbb{Z}}
\newcommand{\Q}{\mathbb{Q}}
\newcommand{\C}{\mathbb{C}}
\newcommand{\F}{\mathcal{F}}
\renewcommand{\cal}[1]{\mathcal{#1}}

% Arrows
\newcommand{\ra}{\rightarrow}
\newcommand{\Ra}{\Rightarrow}
\newcommand{\la}{\leftarrow}
\newcommand{\La}{\Leftarrow}
\newcommand{\LRa}{\LeftRightarrow}
\newcommand{\lra}{\leftrightarrow}
\newcommand{\lRa}{\Leftrightarrow}

% Other mathematical operators
\newcommand{\var}{\mathrm{Var}}
\newcommand{\cov}{\mathrm{Cov}}
\newcommand{\indic}[1]{\mathbb{I}\{#1\}}
\newcommand{\abs}[1]{\left| #1 \right|}
\newcommand{\norm}[1]{\left\| #1 \right\|}
\newcommand{\set}[1]{\left\{ #1 \right\}}
\newcommand{\inner}[2]{\langle #1, #2 \rangle}
\newcommand{\qvar}[1]{\langle #1 \rangle}
\newcommand{\ip}[1]{\langle #1 \rangle}
\newcommand{\tens}{\mathbin{\mathop{\otimes}}}
\newcommand{\kron}{\otimes_K}

% Derivatives
\newcommand{\pdiff}[2]{\frac{\partial #1}{\partial #2}}
\newcommand{\ddiff}[2]{\frac{\diff #1}{\diff #2}}
\newcommand{\grad}{\nabla}

% Other useful shortcuts
\newcommand{\eps}{\varepsilon}
\newcommand{\del}{\partial}
\newcommand{\suchthat}{\;\middle|\;}
\newcommand{\st}{\;\text{s.t.}\;}

% Limits and sums
\newcommand{\limn}{\lim_{n \to \infty}}
\newcommand{\sumn}{\sum_{n=1}^\infty}
\newcommand{\prodn}{\prod_{n=1}^\infty}

% Logic
\newcommand{\true}{\text{true}}
\newcommand{\false}{\text{false}}

% new commands
% make references nicer
\let\oldref\ref
\renewcommand{\ref}[1]{(\oldref{#1})}

% graphical models
\newcommand{\pa}[1]{\operatorname{pa}_G(#1) }
\newcommand{\ch}[1]{\operatorname{ch}_G(#1) }
\newcommand{\an}[1]{\operatorname{an}_G(#1) }
\newcommand{\de}[1]{\operatorname{de}_G(#1) }
\newcommand{\indepG}{\perp\!\!\perp}

% probability theory
\newcommand{\indep}{\perp\!\!\!\!\perp} 
\newcommand{\HSIC}{\operatorname{HSIC}}
\newcommand{\DO}[1]{\operatorname{do}(#1) }

\newcommand{\vect}[1]{\operatorname{vec}(#1) }

% Autoreferencing style
% \crefname{theorem}{theorem}{theorems}
% \crefname{corollary}{corollary}{corollaries}
% \crefname{lemma}{lemma}{lemmata}
% \crefname{proposition}{proposition}{propositions}
% \crefname{example}{example}{examples}
% \crefname{definition}{definition}{defs.}
% \crefname{remark}{remark}{remarks}
% \crefname{viewpoint}{viewpoint}{viewpoints}
% \crefname{algorithm}{algorithm}{algorithms}
% \crefname{table}{table}{tables}
% \crefname{figure}{figure}{figures}

\crefformat{equation}{(#2#1#3)}  % Only show the number, without additional space

% Define environments sharing counters
\theoremstyle{plain}
\newtheorem{theorem}{Theorem}
\newtheorem{corollary}[theorem]{Corollary}
\newtheorem{lemma}[theorem]{Lemma}
\newtheorem{proposition}[theorem]{Proposition}
\theoremstyle{definition}
\newtheorem{example}[theorem]{Example}
\newtheorem{definition}[theorem]{Definition}  % Share counter with theorem
\theoremstyle{remark}
\newtheorem{remark}[theorem]{Remark}
\newtheorem{viewpoint}[theorem]{Viewpoint}

% title for appendix without vertical spacing
\newcommand{\appendixtitle}[1]{%
  \hsize\textwidth
  \linewidth\hsize
  \toptitlebar
  {\centering {\Large\bfseries #1 \par}}
  \bottomtitlebar
}

% If your paper is accepted and the title of your paper is very long,
% the style will print as headings an error message. Use the following
% command to supply a shorter title of your paper so that it can be
% used as headings.
%
%\runningtitle{I use this title instead because the last one was very long}

% If your paper is accepted and the number of authors is large, the
% style will print as headings an error message. Use the following
% command to supply a shorter version of the authors names so that
% they can be used as headings (for example, use only the surnames)
%
%\runningauthor{Surname 1, Surname 2, Surname 3, ...., Surname n}

\twocolumn[

\aistatstitle{Efficient Learning of Stationary Diffusions with Stein-type Discrepancies}

\aistatsauthor{
Fabian Bleile \And
Sarah Lumpp \And
Mathias Drton
}

\aistatsaddress{
Technical University of Munich\\Munich Center for Machine Learning \And
Technical University of Munich \And
Technical University of Munich\\Munich Center for Machine Learning
}

]

\begin{abstract}
  Learning a stationary diffusion amounts to estimating the parameters of a stochastic differential equation whose stationary distribution matches a target distribution. We build on the recently introduced kernel deviation from stationarity (KDS), which enforces stationarity by evaluating expectations of the diffusion's generator in a reproducing kernel Hilbert space. Leveraging the connection between KDS and Stein discrepancies, we introduce the Stein-type KDS (SKDS) as an alternative formulation. We prove that a vanishing SKDS guarantees alignment of the learned diffusion’s stationary distribution with the target. Furthermore, under broad parametrizations, SKDS is convex with an empirical version that is $\epsilon$-quasiconvex with high probability. Empirically, learning with SKDS attains comparable accuracy to KDS while substantially reducing computational cost, and yields improvements over the majority of competitive baselines.
\end{abstract}
\section{Introduction}
Understanding cause-and-effect relationships is fundamental across many scientific disciplines, from the social sciences to natural sciences and engineering. Causal modeling provides a principled framework for reasoning about such relationships, enabling predictions under interventions and guiding decision-making in complex systems. Traditional approaches to causal inference, often built on structural causal models (SCMs), assume an underlying directed acyclic graph (DAG) that encodes causal dependencies. While this assumption facilitates theoretical analysis and algorithmic implementation, it poses challenges in systems with cyclic dependencies \citep{Spirtes:2000,Pearl:2009}.

A causal model is designed to represent observational, interventional, and sometimes counterfactual distributions.
Stationary Itô diffusions, modeled through stochastic differential equations (SDEs), provide a natural framework for representing causal relationships in systems where cyclic dependencies arise \citep{Lorch:2024, varando2020graphical,hansen2014causal}. Unlike traditional SCMs, this approach inherently accommodates cyclic dependencies while ensuring a well-defined stationary distribution that characterizes long-term system behavior. In the causal framework of stationary diffusions, the observational distribution of a $d$-dimensional random variable $X$ is modeled as the stationary distribution $P$ of a stochastic process $(X_t)_{t \geq 0}$, defined by the following SDE
\begin{equation}\label{eq:SDE}
    dX_t = b(X_t)dt + \sigma(X_t)dB_t,
\end{equation}
with drift function $b$ and diffusion function $\sigma$.
%The driving stochastic process is a Brownian motion $(B_t)_{t \geq 0}$.
Intuitively, the drift term $b$ describes the deterministic tendency of the process, while the diffusion term $\sigma$ models random fluctuations driven by Brownian motion.
The distribution $P$ is called stationary if $(X_t)_{t \geq 0}$ solves \cref{eq:SDE} with $X_t \sim P$ for every $t \geq 0$. %Under appropriate regularity and dissipation assumptions on the drift function $b$ and the diffusion function $\sigma$ there exists a unique stationary solution of \cref{eq:SDE}\cite{huang2015steady}.
Under mild conditions ensuring that the process is stable and does not diverge, such a stationary solution exists and is unique \citep{huang2015steady}.

Let $p$ be a probability density, then the Langevin diffusion $dX_t = \nabla \log p(X_t) dt + \sqrt{2}I_d dB_t$ is a stationary diffusion with stationary density $p$. This reduces the task of finding a causal model, as described, to a score estimation problem. However it is non-trivial how to incorporate interventional distributions into this model. Interventions in this causal framework are modeled by modifying the drift and diffusion functions. The interventional distribution is then, assuming existence and uniqueness, the respective stationary distribution \citep{hansen2014causal, Lorch:2024}.
% Note that, in general, distinct SDEs can admit the same stationary solution. For instance any stationary solution is invariant under appropriate scaling of drift and diffusion functions.
In contrast to traditional SCMs, which refer to structural equations among the observed variables directly, the stationary diffusion causal framework focuses on joint distributions that emerge from a (possibly intervened) system's equilibrium behavior.

\begin{comment}
    Interventions in this causal framework are modeled by modifying the drift and diffusion functions with intervention parameters $\phi$. The interventional distribution is then, assuming existence and uniqueness, the respective stationary distribution \cite{hansen2014causal, Lorch:2024}. Let $\set{D_1, \ldots, D_M}$ be interventional data sets with i.i.d. samples from an unknown system $X$. The goal is to learn a causal stationary diffusion by estimating the drift $b$, the diffusion $\sigma$, and the intervention parameters $(\phi_i)_{i \in [M]}$ of an SDE \cref{eq:SDE} that describes the training data. A well-trained model should not only fit the observed data but also predict the effects of new interventions reliably. The underlying assumption is that if the model performs well on out-of-distribution data, it likely captures meaningful aspects of the true data-generating process.
\end{comment}

Consider a model for the SDE in~\cref{eq:SDE} that treats drift and diffusion terms as unknown parameters. Learning a stationary diffusion means adjusting its parameters so that a stationary solution~$P$ exists and matches an observed target distribution~$\mu$.
\Citet{Lorch:2024} propose a loss---\emph{Kernel Deviation from Stationarity} (KDS)---that measures the discrepancy between~$\mu$ and~$P$ using only samples of~$\mu$ and the parameters of~$b$ and~$\sigma$. This enables direct optimization of the SDE parameters.
This approach performs well in small simulations but is computationally expensive and, thus, not feasible for data of larger dimension. %oder so ähnlich
Our main contribution is a novel loss function for learning stationary SDEs---the \emph{Stein-type Kernel Deviation from Stationarity (SKDS)}, which offers significantly lower computational cost at similar estimation performance.
We motivate SKDS from Stein's method, inspired by the work on Stein discrepancies \citep{gorham2015measuring, liu2016kernelized, gorham2017measuring}, and show its practical and theoretical advantages over KDS in learning from interventional data. 
%This approach offers a lower computational cost compared to existing methods such as the Kernel Deviation from Stationarity (KDS).
Before introducing SKDS, we review relevant results from kernel theory, Stein discrepancies, and the martingale characterization of stationarity.

%The remainder of the paper is structured as follows: we first review relevant theory on RKHSs, Stein discrepancies, and the martingale characterization of stationarity. We then introduce SKDS, analyze its properties, and empirically validate its effectiveness in learning from interventional data.
\section{Background}\label{sec:Background}

%\subsection{Notation}
We denote the space of functions from a domain \(U\) to a vector space \(V\) by  \(\Gamma(U, V)\); and we write \(\Gamma(U) := \Gamma(U, \mathbb{R})\) for real-valued functions.
We define $C^n(U, V)$ (respectively, $C_b^n(U, V)$) as the set of $n$-times continuously differentiable functions \(f \in \Gamma(U, V)\) (respectively, $n$-times continuously differentiable with uniformly bounded differentials). If the target space is clear, we write simply $C^n(U)$ and $C_b^n(U)$.
For a differentiable $f \in \Gamma(\R^n, \R^n)$, we write the divergence of $f$ as $\inner{\grad_x}{f(x)}=\operatorname{div}f(x) = \sum_{i \in [n]} \partial_i f_i(x)$.

\paragraph{Kernel Theory.}\label{subsec:Kernel Theory}
Throughout, let $k : \R^d \times \R^d \ra \R$ be a positive definite kernel function, and let $K : \R^d \times \R^d \ra \R^{d\times d}$ be a positive definite matrix-valued kernel function. Let $\cal{H}$ be the reproducing kernel Hilbert space (RKHS) defined by $k$, and let $\cal{H}^d$ be the respective RKHS defined by $K$. The reproducing property in  $\cal{H}^d$ requires that for $f \in \cal{H}^d$ and $x \in \R^d$, we have
\begin{equation}\label{eq:R^d valued kernel reproducing property main}
    \inner{f(x)}{v}_{\R^d} = \inner{f}{K(\cdot, x)v}_{\cal{H}^d} \quad \text{for all }v\in \R^d.
\end{equation}
%We note that any $\R$-valued kernel $k$ of positive type on $Y \times Y$ defines an $\R^d$-valued kernel of positive type on $Y \times Y$ together with a symmetric positive definite matrix $B$ by $K = k B$.
Specifically, we will be interested in the case when $K = k I_d$. Then, it holds that $f \in \cal{H}^d$ if and only if $f_i \in \cal{H}$ for all $i \in [d]$ (see \cref{app:Kernel Theory}).

\paragraph{Characterization of Stationarity.}
The generator $\cal{L}$ of a stochastic process defined by \cref{eq:SDE} is a linear operator describing the infinitesimal expected change of functions $f \in C_b(\R^d)$ along the process.  
For a sufficiently regular SDE \cref{eq:SDE}, $\cal{L}$ acts as a differential operator on $f \in C^2(\R^d)$ and is given by
\begin{equation}\label{eq:differential operator}
    \cal{L} f(x) = \inner{b(x)}{\nabla f(x)} + \frac{1}{2} \operatorname{tr}\big(a(x) \nabla \nabla f(x)\big),
\end{equation}
where $a(x) = \sigma(x)\sigma(x)^T$.  
A probability density $\mu$ is a stationary solution of the SDE \cref{eq:SDE} if and only if it satisfies the martingale problem,
\begin{equation}\label{eq:simplified martingale problem}
    \E_{X \sim \mu}[\cal{L} f(X)] = 0 \quad \text{for all } f \in C^2(\R^d).
\end{equation}

The class of $\mu$-targeted diffusions, i.e., diffusion processes defined by \cref{eq:SDE} with stationary density $\mu$, admits a full characterization: the drift decomposes as
\begin{equation}\label{eq:drift mu targeted}
    b(x) = \frac{1}{2 \mu(x)} \inner{\grad}{\mu(x) \left(a(x) + c(x)\right)},
\end{equation}
with a skew-symmetric stream coefficient $c$ \citep{gorham2019measuring}. In this case, the generator can be written as
\begin{equation}\label{eq:main text generator density version}
    (\cal{L} f)(x) = \frac{1}{2 \mu(x)}\inner{\grad}{\mu(x) (a(x) + c(x)) \nabla f(x)}.
\end{equation}
A stationary diffusion process is called reversible if its law is invariant under time reversal. This holds if and only if detailed balance holds, i.e., $\mu^{-1} \inner{\grad}{\mu c} = 0$ \citep[Prop.~4.5.]{pavliotis2014stochastic}. This term is an additive part of the drift in \cref{eq:drift mu targeted}, and we refer to it as the non-reversible part of the drift.
We refer to the supplement \Cref{app:SDE-Generator-Martingale Problem} for the derivation and technical conditions.

Having characterized stationarity in terms of the generator $\cal{L}$, we now turn to how this condition can be used as a learning criterion.

\paragraph{Identifiability.}\label{subsec:Identifiability}
The parameters of a stationary diffusion cannot be uniquely identified from its stationary density $\mu$ alone. In particular, an SDE of the form $dX_t = sb(X_t)dt + \sqrt{s}\sigma(X_t)dB_t$ has generator $s \mathcal{L}$ and, if a stationary distribution exists, it coincides with that of the original (unscaled) SDE. Hence, at best, $\mathcal{L}$ is identifiable from $\mu$ up to a multiplicative constant, and moreover, only through the product $\sigma \sigma^T$ \citep[Example 5.5]{hansen2014causal}. To resolve this ambiguity in our setting, we fix the speed parameter and model $\sigma \sigma^T$ directly, rather than $\sigma$ itself. For further results on identifiability of SDE parameters we refer to the literature \citep{hansen2014causal, wang2024generator, dettling2023identifiability}.
\begin{comment}
    In general, parameter identification for a stationary diffusion, given a density $\mu$, is not possible. An SDE with modified speed, i.e., $dX_t = sb(X_t)dt + \sqrt{s}\sigma(X_t)dB_t$ has the generator $s \cal{L}$, and, if exists, has the same stationary solution as the non-modified SDE.
%To see this, let $p$ be the solution to the stationary Fokker-Planck equation (FPE), $\cal{L}^\star p (x) = 0$, i.e.,
%\begin{equation} - \inner{\grad_x}{b(x)p(x)} + \frac{1}{2} \sum_{ij} \partial_{ij}((\sigma \sigma^T)_{ij}(x) p(x)) = 0. \end{equation}
%Then $p$ also satisfies $(s\cal{L}^\star)p = 0$ for any $s \in \R$, which corresponds to an SDE with modified speed, i.e., $dX_t = sb(X_t)dt + \sqrt{s}\sigma(X_t)dB_t$. 
The generator $\cal{L}$ can only be identified up to the term $\sigma \sigma^T$ \citep[Example 5.5]{hansen2014causal}.
\end{comment}

\paragraph{Kernel Deviation from Stationarity \citep[KDS]{Lorch:2024}.}
\label{subsec:Kernel Deviation from Stationarity}
The KDS is defined via the supremum in the martingale problem \cref{eq:simplified martingale problem} over the unit ball $\cal{H}_{\leq 1}$ of a reproducing kernel Hilbert space (RKHS) $\mathcal{H}$, yielding
\begin{equation}\label{eq:KDS}
    \sqrt{\operatorname{KDS}(\cal{L}, \mu; \cal{F})} = \sup_{f \in \cal{H}_{\leq 1}} \mathbb{E}_{X \sim \mu}[\mathcal{L}f(X)].
\end{equation}
This expression is meaningful when $\mathcal{H}$ is chosen such that $\operatorname{core}(\mathcal{A}) \subseteq \mathcal{H} \subseteq C^2(\R^d)$, ensuring that the operator $\mathcal{L}$ acts appropriately on elements of $\mathcal{H}$. The \emph{core} is an inclusion-minimal set of functions that guarantees the equivalence between $\mu$ being a stationary solution of \cref{eq:SDE} and of \cref{eq:simplified martingale problem}. For details we refer to the supplement \cref{app:SDE-Generator-Martingale Problem}.

\begin{comment}
    Following the arguments in \citep{Lorch:2024}, the quantity in \cref{eq:KDS} can be rewritten using the Riesz representation theorem as
\[
\sup_{f \in \cal{H}_{\leq 1}} \langle f, g_{\mu, \mathcal{L}} \rangle_{\mathcal{H}},
\]
where the representer $g_{\mu, \mathcal{L}} \in \mathcal{H}$ depends on the kernel $k$ defining $\mathcal{H}$ and is given in closed form in terms of $\mu$, $b$, and $\sigma$. The supremum is achieved by the normalized element $g_{\mu, \mathcal{L}} / \|g_{\mu, \mathcal{L}}\|_{\cal{H}}$.
\end{comment}

The \emph{Stein-type kernel deviation from stationarity} (SKDS), introduced as our proposed loss function in \cref{sec:SKDS}, builds on the same principles but replaces $\nabla f$ in \cref{eq:differential operator} by a function $g$ in some $\cal{H}^d$. This generalization is inspired by the theory of Stein discrepancies, in particular the diffusion kernel Stein discrepancy (DKSD) introduced by \Citet{barp2019minimum}.

\paragraph{Stein Discrepancy.}\label{subsec:Stein Discrepancy}
A Stein discrepancy compares two distributions $Q$ and $P$ by measuring deviations that vanish under $P$. Concretely, given a Stein operator $\cal{T}_P$ such that $\E_{X \sim P}[(\cal{T}_Pg)(X)] = 0$ for all $g$ in a suitable function class $\cal{G}$, one defines
\begin{equation}\label{eq:Stein Discrepancy Generator Approach}
    S(Q, P; \cal{G}; \cal{T}_P) := \sup_{g \in \cal{G}} \abs{\mathbb{E}_{X \sim Q}[(\cal{T}_Pg)(X)]}.
\end{equation}
This yields an integral probability metric (IPM) tailored to $P$, often more tractable than generic IPMs \citep{gorham2015measuring, anastasiou2023stein}.  

A particularly useful choice arises when $P$ is the stationary distribution of an SDE with generator $\mathcal{L}$. In this case, $\mathcal{T}_P = \mathcal{L}$, and the Stein identity $\E_{X \sim P}[(\mathcal{T}_P g)(X)] = 0$ coincides with the martingale problem in \cref{eq:simplified martingale problem}. This generator-based perspective forms the foundation of kernel Stein discrepancies and their diffusion variants, which we build on next.

\section{Stein-Type Kernel Deviation from Stationarity}\label{sec:SKDS}
Building on the generator approach, we obtain from \cref{eq:main text generator density version} the (first-order) \emph{diffusion Stein operator} \citep{gorham2019measuring}
\begin{equation}\label{eq:diffusion stein operator}
    \mathcal{D}^m_\mu g := \frac{1}{\mu}\,\inner{\nabla}{\mu m g},
\end{equation}
where $g \in \Gamma(\R^d,\R^d)$, $m \in \Gamma(\R^d,\R^{d\times d})$ and $\mu$ is a $C^1$-density of $P$.  
Together with a kernel $K$, this yields the \emph{diffusion kernel Stein discrepancy} \citep{barp2019minimum}.  

In our setting, we have the matrix $m = a+c$, with $a$ and $c$ as in \cref{sec:Background}. For $g \in C^1(\R^d,\R^d)$, we rewrite
\begin{equation}\label{eq:DS operator and SKDS operator}
    \begin{aligned}
    \mathcal{D}^{a+c}_\mu g  &= \frac{1}{\mu} \inner{\nabla}{\mu(a + c)g}\\
    &= \frac{1}{\mu} \left( \inner{\inner{\nabla}{\mu(a + c)}}{g} + \operatorname{tr}\left((a + c) \nabla g \right)\right)\\
    &= \inner{2b}{g} + \operatorname{tr}\left(a\nabla g\right) + \operatorname{tr}\left(c\nabla g\right),
\end{aligned}
\end{equation}
using standard vector calculus (see \cref{app:Vector Calculus}).

Note that the additional term $\operatorname{tr}(c\nabla g)$ sets \cref{eq:DS operator and SKDS operator} apart from the diffusion generator 
\(
    \mathcal{L}f = \inner{b}{\nabla f} + \tfrac12 \operatorname{tr}(a \nabla^2 f).
\) 
Indeed, for any $f \in C^2(\R^d)$, one has 
\(
    \mathcal{D}^{a+c}_\mu \nabla f = 2\mathcal{L}f,
\) 
since $\operatorname{tr}(c\nabla \nabla f) = 0$.\footnote{In fact, $\operatorname{tr}(CB) = 0$ for any symmetric matrix $B$ and skew-symmetric matrix $C$, because 
$\operatorname{tr}(CB) = \operatorname{tr}((CB)^T) = -\operatorname{tr}(BC) = -\operatorname{tr}(CB)$.}

In contrast to $b$ and $\sigma$, the skew-symmetric component $c$ is not directly accessible during learning---similar to how the stationary density of the diffusion specified by $b$ and $\sigma$ is unknown.
While $\mu$ and $c$ are both not easily computed, they exist uniquely. In particular, $c$ is uniquely determined by $b$ and $\sigma$ up to $c \nabla \log \mu + \langle \nabla , c \rangle$. The density $\mu$ solves the stationary Fokker-Planck equation \citep{huang2015steady}; and the skew-symmetric matrix $c$ is then defined via \cref{eq:drift mu targeted}.
Dropping $\operatorname{tr}(c\nabla g)$ from $\mathcal{D}^{a+c}_\mu g$, or equivalently replacing $\nabla f$ by $g$ in $\mathcal{L}$, motivates the following operator.

\begin{definition}[SKDS Operator]\label{def:SKDS Operator}
    Let $b \in C^0(\R^d,\R^d)$ and $\sigma \in C^0(\R^d,\R^{d \times d})$.  
    For $g \in C^1(\R^d,\R^d)$ define
    \[
        \mathcal{S}(g)(x) := 2\inner{b(x)}{g(x)} + \operatorname{tr}\!\left(\sigma(x)\sigma(x)^\top \nabla_x g(x)\right).
    \]
    For a matrix-valued function $A \in C^1(\R^d,\R^{d \times d})$ we extend column-wise,
    \[
        (\mathcal{S}(A)(x))_i := \mathcal{S}(A_i)(x),
    \]
    where $A_i$ is the $i$-th column of $A$.  
    We write $\mathcal{S}_P$ (respectively,  $\mathcal{S}_p$) if the operator is associated with an SDE inducing a stationary distribution $P$ (respectively, stationary density $p$).
\end{definition}

\noindent The above operator gives rise to the \emph{Stein-type kernel deviation from stationarity} (SKDS).

\begin{definition}[Stein-type KDS] \label{def:Stein-type Kernel Deviation from Stationarity}
Let $(X_t)_{t \geq 0}$ be a diffusion with drift $b \in \Gamma(\R^d,\R^d)$ and diffusion term $\sigma \in \Gamma(\R^d,\R^{d \times d})$, and let $\mathcal{S}$ be the associated SKDS operator.  
For a scalar-valued kernel $k$ with RKHS $\mathcal{H}$, let $\mathcal{H}^d$ be the RKHS associated with $K = kI_d$ with unit ball $\mathcal{H}^d_{\leq 1}$.  
For a distribution $\mu$, the SKDS is defined as
\begin{equation}\label{eq:SKDS definition}
    \sqrt{\operatorname{SKDS}(\mathcal{S}, \mu; \mathcal{H}^d_{\leq 1})}
    := \sup_{g \in \mathcal{H}^d_{\leq 1}} \E_{X \sim \mu}[\mathcal{S}g(X)].
\end{equation}
\end{definition}

Following the approach of \citet[Example Section 3.3]{Lorch:2024}, we provide a simple illustration of how gradient descent on the Stein-type KDS can be used to infer the correct model parameters.

\begin{example}[SKDS as a learning objective]\label{ex:SKDS as learning obj}
Consider the one-dimensional linear SDE
\( dX_t = b(x-\alpha)\,dt + \sigma\,dB_t \),
which has stationary distribution
\( \mathcal{N}(\alpha, -\tfrac{\sigma^2}{2b}) \) whenever $b<0$, $\sigma>0$~\citep{jacobsen1993brief}.  
Suppose the target distribution is $\mathcal{N}(1, \tfrac12)$, corresponding to the stationary solution of the SDE
\( dX_t = -4(x-1)\,dt + 2\,dB_t \).  
For illustration, in \cref{fig:SKDS as learning objective}, we compare to two alternatives:  
(red) $\alpha=\tfrac14,\ b=-4,\ \sigma=2$, and  
(blue) $\alpha=1,\ b=-4,\ \sigma=1$.  
\cref{fig:SKDS as learning objective} shows that SKDS correctly identifies the true parameters by attaining vanishing partial derivatives at the optimum.
% , illustrating its use as a loss function for learning stationary diffusions.
\end{example}
\begin{figure*}
    \centering
    \includegraphics[width=\textwidth]{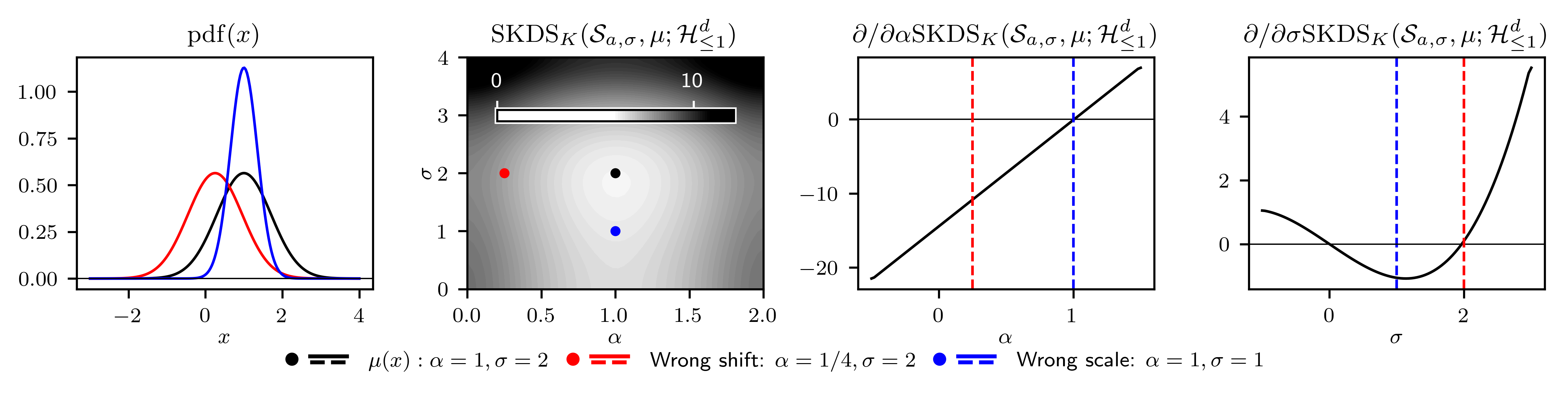}
    %\vspace{-25pt}
    \caption{Stein-Type KDS for a stationary linear SDE: We applied SKDS to $n=5000$ samples from the target distribution $\mu$, using a one-dimensional Gaussian kernel with bandwidth $0.5$. The models correspond to the SDE $dX_t = -4(X_t - \alpha)dt + \sigma dB_t$ for different values of $\alpha$ and $\sigma$. 1) PDFs of the stationary distributions.
    %2) Witness functions for two alternative models, defined as $w_{S, \mu}(x) = g_{S, \mu}/\norm{g_{S, \mu}}_\cal{H}$ where $g_{S, \mu}(\cdot) := \E_{X \sim \mu}[S K(X, \cdot)]$.
    2) An SKDS contour plot in $\alpha$ and $\sigma$.
    3) and 4) Partial derivatives of the SKDS objective function for both model alternatives, showing that the gradient vanishes at the parameter values of the ground-truth model.} \label{fig:SKDS as learning objective}
\end{figure*}

\subsection{Closed Form Representation}
For sufficiently regular drift, diffusion, and kernel functions, the  function in \cref{eq:SKDS definition} can be expressed as an inner product in $\cal{H}^d$ with a representer function $g_{\cal{S}, \mu} \in \cal{H}^d$.
%The following lemma is an adjustment of \citep[Lemma 1]{Lorch:2024}.
To state the fact, we introduce some notation. For a differentiable kernel $k(x,y)$, we write $\partial_{x_j} k(x,x)$ for the derivative with respect to the $j$-th entry of both arguments.
We denote by $\mathcal{S}_i k(x,y)$, $i=\{1,2\}$, the SKDS operator applied to the $i$-th argument. 
Finally, $L^2_p(\R^m, \R^n)$ is the space of measurable functions $f:\R^m \to \R^n$ with $\|f\|_{L^2_p}^2 := \int \|f(x)\|^2 \, \mathrm{d}p(x) < \infty$, for a probability measure $p$ on $\R^m$.

\begin{lemma}[Representer function $g_{\cal{S}, \mu}$]
\label{lem:Representer function SKDS}
    Let $\mu$ be a probability density. Let $b$ and $\sigma$ be $\mu$-square-integrable, i.e., in $L^2_\mu$. Let $k(x,x)$ be continuously differentiable with $k$ and $\partial_{x_j} k(x,x)$ $\mu$-integrable. %Here $\partial_{x_j} k(x,x)$ denotes the derivative with respect to the $j$-th entry of both arguments. 
    Then there exists a unique representer function $g_{\mathcal{S}, \mu} \in \cal{H}^d$ such that
    \begin{equation}\label{eq:SKDS representation function}
        \E_{X \sim \mu}[\mathcal{S} g(X)] = \inner{g}{g_{\mathcal{S}, \mu}}_{\cal{H}^d}
    \end{equation}
    with explicit form $g_{\mathcal{S}, \mu}(\cdot) = \E_{X \sim \mu}[\mathcal{S}_1 K(X, \cdot)]$. 
    %The index in $S_1$ indicates that $S$ is applied to the first argument.
\end{lemma}
The proof of this and later results in this paper are in the supplement (\cref{Supp:PROOFS}).
With the representer function and assumptions on the regularity of the drift and diffusion functions, SDKS admits a closed form.
%The following statement is the adjusted version of \citep[Theorem 2]{Lorch:2024}.
\begin{lemma}[SKDS Closed Form]
    \label{lem:SKDS closed form}
    We adopt the assumptions of \cref{lem:Representer function SKDS} and additionally assume bounded $b$, $\sigma$, $k$ and $\partial_{x_i, y_j} k(x,y)$. Moreover, let $k$ be twice continuously differentiable. Then,
    \begin{equation}\label{eq:SKDS=norm}
        \operatorname{SKDS}(\mathcal{S}, \mu; \cal{H}^d_{\leq 1}) = \norm{g_{\mathcal{S}, \mu}}_{\cal{H}^d}^2,
    \end{equation}
    where the norm takes the following closed expression
    \begin{equation}\label{eq:SKDS-closed-form}
        \norm{g_{\mathcal{S}, \mu}}_{\cal{H}^d}^2 = \E_{X \sim \mu}\left[\E_{Y \sim \mu}\left[ \mathcal{S}_1 \mathcal{S}_2 K(X, Y)\right] \right].
    \end{equation}
\end{lemma}

Note that bounded functions $b$ and $\sigma$ ensure the boundedness of the operator $\mathcal{S}$, which is a sufficient condition to interchange $\mathcal{S}$ and the integral to receive the RHS in \cref{eq:SKDS-closed-form}.

%For an intuition of the SKDS in the form of \cref{eq:SKDS=norm,eq:SKDS-closed-form} we refer to \cref{fig:representer function x SKDS} in the supplement, where we extend \cref{ex:SKDS as learning obj}.

For a matrix-valued kernel of the form $K = k I_d$, we can simplify \cref{eq:SKDS-closed-form} even further as
% \begin{corollary}\label{cor:simple closed form}
%     For $K = k I_d$, we have
    \begin{equation} \label{eq:simple closed form}
        \begin{aligned}
            \mathcal{S}_1 \mathcal{S}_2 K(x, y) =\quad& 4 \inner{b(x)}{b(y) k(x,y)}\\
    \quad +& 2 \inner{b(x)}{a(y)^T \nabla_y k(x, y)}\\
    \quad +& 2 \inner{b(y)}{a(x)^T \nabla_x k(x, y)}\\
    \quad +& \operatorname{tr}\left(a(x) a(y)^T \nabla_x \nabla_y k(x, y)\right).
        \end{aligned}
    \end{equation}
%\end{corollary}

\subsection{Properties}
We define $C^{n,n}$ (respectively, $C_b^{n,n}$) to be the set of functions $k \in \Gamma(\R^{n} \times \R^{n})$ with $(x,y) \mapsto \grad_x^l \grad_x^l k(x,y)$ continuous (respectively, continuous and uniformly bounded) for all $l \in [n]$. 
The Wasserstein-$2$ distance is defined as $d_{\mathcal{W}_2}(\mu, p) = \inf_{X \sim \mu, Z \sim P} \E[\norm{X - Z}_2]$.

The following result is inspired by the diffusion kernel Stein discrepancy, which can be upper bounded by the Wasserstein-$2$ distance \citep[Proposition 9]{gorham2017measuring}. In the respective result on the SKDS, an additional term dependent on the stream-coefficient $c$ appears, which is non-accessible during learning.

\begin{proposition}\label{prop:SKDS upper bound}
    Let $\mathcal{S}_p$ be the SKDS operator associated to an SDE, $dX_t = b(X_t)dt + \sigma(X_t)dB_t$, with stationary density $p$ supported on all of $\R^d$. Assume the following: (i) target probability density $\mu \in C^1$ supported on all of $\R^d$; (ii) $b \in C_b^1(\R^d, \R^d)$, and $a$, $c \in C_b^1(\R^d, \R^{d \times d})$; (iii) $K = k I_d$ with $k \in C_b^{(2,2)}(\R^d \times \R^d, \R)$. Then 
    \begin{multline*}
        \label{eq:SKDS x Wasserstein}
            \sqrt{\mathrm{SKDS}(\mathcal{S}_p, \mu; \cal{H}^d_{\leq 1})}
            \leq  \lambda_k \lambda_{b, \sigma} \gamma({d_{\mathcal{W}_2}(\mu, p)})\\
            + \norm{k}_\infty \norm{c \grad \log p + \inner{\grad}{c}}_{L^2_p(\R^d, \R^d)},
    \end{multline*}
    where $\gamma(x) = x + \sqrt{x}$, and $\lambda_k$, $\lambda_{b, \sigma}$ are constants depending on the regularity of $k$, $b$ and $\sigma$.
\end{proposition}
The additional term $c \grad \log p + \inner{\nabla}{c}$ arises from the skew-symmetric stream coefficient \(c\) in the drift decomposition \cref{eq:drift mu targeted}. It captures precisely the non-reversible part of the drift: when this term is zero,  detailed balance  holds and the diffusion is reversible (cf.~\cref{eq:non-reversible drift}). Note that the stream coefficient $c$ is not a free parameter in the last result. Under these assumptions it is uniquely determined by $b$ and $\sigma$ up to $c \nabla \log \mu + \langle \nabla , c \rangle$. Hence, $c$ is a property of the SDE defined by $b$ and $\sigma$.

The following result shows that the SKDS vanishes if and only if $\cal{S}$ is associated to a reversible diffusion with stationary density $\mu$. In particular, vanishing SKDS implies that the stationary density of the corresponding diffusion matches the target density. 
\begin{theorem}[SKDS characterizes reversible diffusions]
\label{th:SKDS consistency}
    Let $\mathcal{S}$ be the SKDS operator associated to SDE $dX_t = b(X_t)dt + \sigma(X_t)dB_t$.
    Adopt the assumptions (i), (ii) and (iii) of \cref{prop:SKDS upper bound}. Moreover, assume (iv) $\sigma(x)\sigma(x)^T$ is positive definite for all $x \in \mathbb{R}^d$; (v) $C_c^\infty \subseteq \mathcal{H}$. Then $(X_t)_{t \geq 0}$ is a stationary reversible diffusion process with stationary density $\mu$ if and only if $\operatorname{SKDS}(\mathcal{S}, \mu; \cal{H}^d_{\leq 1}) = 0$.
\end{theorem}

The assumptions on $b$ and $\sigma$--namely (ii) and (iv)--ensure that $C_c^\infty(\mathbb{R}^d)$ forms a \emph{core} of the generator $\mathcal{L}$ \citep{ethier2009markov}. For the proof of \cref{th:SKDS consistency}, it is crucial that $\mathcal{H}^d$ contains this core. It is possible that more general core results could relax the required regularity assumptions on the drift and diffusion while still yielding \cref{th:SKDS consistency}. Furthermore, \citet{Lorch:2024} argue in a related setting that, in practice, the conclusion extends informally to the case of merely continuous drift and diffusion functions.  Continuous functions can become unbounded only as $\|x\|\to\infty$, whereas our studied process remains stationary.

Note that, up to this point, we have considered only a single target distribution. When applying SKDS to learn a causal framework, however, a parametrized model is trained jointly on multiple datasets -- both observational and interventional (see \cref{alg:SKDS learning}). In this joint setting, even a sufficiently expressive parametrized model need not admit a reversible solution. Despite a lack of theoretical guarantees characterizing the nature of a global solution in the case of multiple datasets, our approach as well as the approach by \citet{Lorch:2024} perform well in experiments.

\section{Learning Stationary Diffusions}\label{sec:Learning Stationary Diffusions}
\subsection{Parametrization}\label{subsec:Stein-type KDS Parametrization}

Consider a parametrized SDE of form \cref{eq:SDE} with drift function $b_{\theta}$ and diffusion function $\sigma_{\theta}$ depending on parameters $\theta$.
% , with drift and diffusion functions denoted by $b_{\theta}$ and $\sigma_{\theta}$, respectively. 
Learning the parameters of a stationary diffusion in this setting means adjusting $\theta$ so that a stationary solution with distribution $P$ exists and matches an observed target distribution $\mu$. The associated SKDS operator is denoted by $\mathcal{S}^{\theta}$. The following result and definition show the SKDS to be convex for a rich modeling class of drift and diffusion functions (for details see \Cref{Supp:PROOFS}).

\begin{corollary}[Convexity of SKDS]\label{cor:SKDS convex}
    Let $\cal{T} : C^1(\R^d, \R^d) \ra C^0(\R^d, \R^l)$ be a linear operator independent of the parametrization $\theta \in \R^l$, $l \in \N$. Assume that the parametrized SKDS operator $\mathcal{S}^\theta$ takes the form
    \begin{equation}\label{eq:lin operator}
        \mathcal{S}^{\theta}\cdot = \inner{\theta}{\cal{T}\cdot}.
    \end{equation}
    Then $\operatorname{SKDS}(\mathcal{S}^\theta, \mu; \cal{H}^d_{\leq 1})$ is convex in $\theta$.
\end{corollary}

The class of SDEs which admit a linear representation of the SKDS operator is notably larger than affine linear SDEs (which define, e.g., Ornstein-Uhlenbeck processes). We introduce a parameterized class of SDEs for which the SKDS operator becomes linear in the model parameters.

\begin{example}[Linear SDE parametrization]\label{ex:linear SDE parametrization}
Let $\cal{J} = \set{j_i : \R^d \ra \R}_{i \in [l]}$ be a basis of a finite-dimensional function space, e.g., a set of polynomial basis functions. Define the feature map $j(x) = (j_1(x), \dots, j_l(x))^T \in \mathbb{R}^l$, and let $B^\theta \in \mathbb{R}^{d \times l}$ be a parameter matrix. The drift is then modeled as
\[
b(x) = B^\theta j(x).
\]
To preserve linearity in the parameters, we model the \emph{diffusion} via the symmetric positive semi-definite (PSD) matrix function $a(x) = \sigma(x)\sigma(x)^T$ directly (recall the identifiability remarks in \cref{subsec:Identifiability}). Let $\cal{V} = \{v_i\}_{i \in [m]} \subset \Gamma(\mathbb{R}^d, \mathbb{R}^d)$ be a set of vector-valued functions, and consider the set of rank-one PSD matrices $\{v_i(x)v_i(x)^T\}_{i \in [m]}$. Then, write $a(x)$ as a nonnegative combination with parameters $A_i^\theta \geq 0$ as
\[
a(x) = \sum_{i=1}^m A_i^\theta \, v_i(x)v_i(x)^T.
\]
This ensures that $a(x)$ is PSD by construction. The SKDS operator takes then the form from \cref{eq:lin operator} for
\begin{equation}\label{eq:lin operator T main text}
    \mathcal{T}g(x) = 
    \begin{pmatrix}
    \operatorname{vec}\big(j(x) \otimes g(x)\big) \\
    V(x) \operatorname{vec}\big(\nabla g(x)\big)
    \end{pmatrix} \in \Gamma(\R^d, \R^{ld + m}),
\end{equation}
where $\otimes$ denotes the outer product, and $V$ stacks the vectorized basis diffusion matrices, i.e., $V(x) = (\vect{v_1(x)v_1(x)^T}\mid \ldots \mid  \vect{v_m(x)v_m(x)^T})^T$. For the calculations we refer to \cref{app:Calculations for Example 10}.

The representational capacity of the above parameterized class is governed by the choices of basis functions in $\cal{J}$ and $\cal{V}$, which define the model’s expressiveness. The dimensions $l$ and $m$ determine the number of parameters ($dl + m$).
% and hence the complexity of the model class.
\end{example}

\begin{comment}
    Note that in most applications, the functions \( j \) and \( \{v_i\}_{i \in [m]} \) are continuous and may only grow unbounded as \( \|x\| \to \infty \). If the system under consideration is stationary, such large deviations tend to occur with negligible probability. As a result, the boundedness assumptions made in the previous results are often not particularly restrictive in practice, even though they are required for formal correctness.
\end{comment}

\subsection{Stein-Type KDS as a Learning Objective}\label{subsec:Stein-type KDS as a learning objective}

As the exact SKDS requires evaluation over the full support of $\mu$, in applications, we turn to an empirical estimate from discrete samples. Let $D = \set{x_1, \ldots, x_N}$ be an i.i.d. sample from the target distribution $\mu$. An example for an unbiased empirical estimate of \cref{eq:SKDS-closed-form} is
\begin{equation}\label{eq:empirical estimate SKDS}
    \hat{\operatorname{SKDS}}(\mathcal{S}^{\theta}, D; \cal{H}^d_{\leq 1}) = \frac{1}{\floor{N/2}} \sum_{n = 1}^{\floor{N/2}} \mathcal{S}_1^{\theta} \mathcal{S}_2^{\theta} K(x_{2n - 1},x_{2n}).
\end{equation}
The computation of this estimator scales linearly with $N$. It is unbiased by linearity of expectation and the i.i.d.\ sampling. We specifically consider this linear statistic over the also possible U-statistic because its independent summands enable the simple bounds in \cref{prop:SKDS emp convex}. For benchmarking in \cref{sec:Synthetic Experiments}, we also employ this linear estimator in the KDS-based approach proposed by Lorch.

\begin{comment}
    \begin{equation}\label{eq:empirical estimate SKDS}
    \hat{\operatorname{SKDS}}(S^{\theta, \theta}, D; k) = \frac{1}{N(N{-}1)} \sum_{\substack{m,n=1 \\ m \ne n}}^N S_1^{\theta, \theta} S_2^{\theta, \theta} K(x_m, x_n).
\end{equation}
\end{comment}

By \cref{cor:SKDS convex}, the SKDS is convex in terms of a certain linear parametrization of the SDE. The following result shows that the empirical estimate \cref{eq:empirical estimate SKDS} itself is `almost convex' with high probability, depending on the regularity of the drift and diffusion function.

\begin{proposition}[SKDS empirical estimate convex]\label{prop:SKDS emp convex}
    We build on the setup from \cref{ex:linear SDE parametrization} and continue under the assumptions of \cref{prop:SKDS upper bound}. In addition, we assume that the basis functions are continuously differentiable and uniformly bounded, i.e., $j$, $v_i \in C_b^1$ with
    \[
    \|j(x)\|_\infty, \ \|v_i(x)v_i(x)^T\|_\infty \leq C \quad \forall i \in [m],
    \]
    for some constant $C \in [1,\infty)$ and all $i \in [m]$. This ensures that the parametrized drift and diffusion satisfy the conditions of \cref{prop:SKDS upper bound}. Under these assumptions, the empirical estimate $\hat{\operatorname{SKDS}}(\mathcal{S}^{\theta}, D; \cal{H}^d_{\leq 1})$ defined as in \cref{eq:empirical estimate SKDS} is $\varepsilon$-strongly quasiconvex with high probability. In other words, 
    \begin{equation}
        \hat{\operatorname{SKDS}}(\mathcal{S}^{\theta}, D; \cal{H}^d_{\leq 1}) + \varepsilon \norm{\theta}_2
    \end{equation}
    is convex in $\theta$ with probability greater than
    \begin{equation}
        1 - 2 (dl + m) \exp{\left( - \frac{\floor{N/2} \varepsilon^2 / 2}{L_2 + 2 L_1 \varepsilon/3}\right)},
    \end{equation}
    where $L_1$ and $L_2$ are constants depending on the bounds on $j$, $v$ and the kernel $k$.
\end{proposition}

We remark that $d$ enters the constants $L_1$ and $L_2$ with order $O(d^7)$ (see \cref{eq:L_1} and \cref{eq:L_2}). Consequently, for the lower bound in the previous proposition to remain meaningful, the number of samples $N$ must scale on the order of $\Theta(d^8)$ in the dimensionality of the data.

Note that as $\theta \to 0$, the SKDS vanishes. This behavior is closely related to the speed scaling ambiguity in stochastic differential equations (see \cref{subsec:Identifiability}). To avoid this degeneracy and rule out the trivial minimizer $\theta = 0$, one can restrict the parameter space to the convex set $\set{\theta \in \R^{dl + m} \mid \theta_1 = \alpha}$ for some fixed constant $\alpha$. The choice of $\alpha$ should be informed by domain knowledge: it must be non-zero and its sign must be appropriate for the problem at hand. This type of constraint is consistent with previous work, e.g., by fixing the mean reversion \citep[D.4]{Lorch:2024}.

\subsection{Comparison with KDS}
We have $\mathcal{L} u = \frac{1}{2} \mathcal{S}[\nabla u]$, which allows an analogous line of reasoning for both objectives. A comparison of their definitions in terms of operators, RKHSs, and closed-form expressions is given in \cref{tab:kds_vs_skds}. Notably, the {KDS} vanishes for Itô diffusions with stationary density $\mu$, whereas the {SKDS} vanishes for those that are additionally reversible (\cref{th:SKDS consistency}).

\begin{table*}[t]
    \centering
    %\scriptsize
    \small
    \setlength{\tabcolsep}{3pt} % compact column spacing
    \renewcommand{\arraystretch}{1.7} % row spacing
    \begin{tabular}{|c|c|c|}
    \hline
    & \textbf{KDS} & \textbf{Stein-type KDS} \\
    \hline
    Operator &$\begin{aligned}
    \cal{L}h &= \inner{b}{\grad_x h} + \frac{1}{2} \operatorname{tr}\left(\sigma\sigma^T \grad_x \grad_x h\right); \quad h \in \Gamma(\R^d)
    %\\h&: \R^d \to \R; \quad \grad_x h: \R^d \to \R^d
    \end{aligned}$
    &
    $\begin{aligned}
    \cal{S}g &= \inner{b}{g} + \frac{1}{2} \operatorname{tr}\left(\sigma\sigma^T \grad_x g\right); \quad g \in \Gamma(\R^d, \R^d)
    %\\ g&: \R^d \to \R^d; \quad \grad_x g: \R^d \to \R^{d \times d}
    \end{aligned}$
    \\
    \hline
    RKHS &$\text{$\R$-valued kernel } k, \text{ RKHS } \cal{H} \subset \Gamma(\R^d)$
    & 
    $\text{$\R^d$-valued kernel } K, \text{ RKHS } \cal{H}^d \subset \Gamma(\R^d, \R^d)$
    \\
    \hline
    Definition&$\operatorname{KDS}(\cal{L}, \mu; \cal{H}_{\leq 1}) = \sup_{h \in \cal{H}_{\leq 1}} \E_{X \sim \mu}[\cal{L}h]$
    & 
    $\operatorname{SKDS}(\mathcal{S}, \mu; \cal{H}^d_{\leq 1}) = \sup_{g \in \cal{H}^d_{\leq 1}} \E_{X \sim \mu}[\mathcal{S} g]$
    \\
    \hline
    Closed form&
    %$\operatorname{KDS}(\cal{L}, \mu; \cal{H}_{\leq 1}) = 
    $\E_{X \sim \mu}\E_{Y \sim \mu}\left[{\cal{L}_2}{\cal{L}_1}k(X,Y)\right]$&
    %$\operatorname{SKDS}(\mathcal{S}, \mu; \cal{H}^d_{\leq 1}) = 
    $\E_{X \sim \mu}\E_{Y \sim \mu}\left[\mathcal{S}_2{\mathcal{S}_1}K(X,Y)\right]$
    \\
    \hline
    Time (ms) &
    \footnotesize (RBF) $\!117\!\pm\!22$ / (Tilt.RBF) $\!183\!\pm\!27$ / (IMQ+) $\!293\!\pm\!37$
    &
    \footnotesize (RBF) $\boldsymbol{\!22\!\pm\!5}$ / (Tilt.RBF) $\boldsymbol{\!30\!\pm\!1}$ / (IMQ+) $\boldsymbol{\!46\!\pm\!3}$
    \\
    \hline
    \end{tabular}
    \caption{Structural and computational comparison of KDS and Stein-type KDS on $n = 1000$ random samples in $d = 20$ dimensions, averaged over $50$ repetitions.}
    \label{tab:kds_vs_skds}
\end{table*}

We note that the generator $\mathcal{L}$ used in the KDS involves an extra differentiation step compared to $\mathcal{S}$, the operator underlying the SKDS. Hence, $\mathcal{L}_1 \mathcal{L}_2 k(x,y)$ involves forth-order derivatives of $k$, whereas $\mathcal{S}_1 \mathcal{S}_2 k(x,y)$ only needs derivatives of the kernel up to order two. For a direct comparison of the two general explicit forms we refer to \cref{eq:simple closed form} for the SKDS and to \cref{app:KDS} for the KDS. This explains why, as we demonstrate next, the SKDS is computationally cheaper than the KDS. To benchmark computation time, we randomly generate a linear drift $b$ and linear diffusion $\sigma$, and mimic a gradient descent step by evaluating the KDS and SKDS with their respective kernels and computing derivatives with respect to $b$ and $\sigma$. We consider three kernels: an RBF kernel (RBF), a tilted RBF kernel (Tilt. RBF), and an IMQ+ kernel (IMQ+); cf.~\cref{app:Kernel Choices}. Results show that the SKDS achieves a speedup over the KDS by a factor greater than 5 (\cref{tab:kds_vs_skds}).

\begin{comment}
\begin{table}[ht]
\centering
\scriptsize
\renewcommand{\arraystretch}{2}
\begin{tabular}{|l|p{2.1cm}|p{2.1cm}|c|}
\hline
Kernel & KDS Time (ms) & SKDS Time (ms) & Speed-up \\
\hline
RBF & $117 \pm 22$ & $22 \pm 5$ & \textbf{5.40\texttimes} \\
\hline
Tilted RBF & $183 \pm 27$ & $30 \pm 1$ & \textbf{6.01\texttimes} \\
\hline
\end{tabular}
\caption{Execution time comparison between KDS and SKDS losses. The evaluation is performed on $n = 1000$ randomly generated samples. We repeat this procedure $50$ times in $d = 20$ dimensions. }
%Results are reported as mean $\pm$ standard deviation in milliseconds. The speed-up is computed as the ratio of the mean computation times of KDS to SKDS, indicating how many times faster SKDS is compared to KDS.
\label{fig:computational efficiency SKDS vs KDS}
\end{table}
\end{comment}
\section{Synthetic Experiments}\label{sec:Synthetic Experiments}
A key objective of causal modeling is to predict the effects of interventions, including those not observed during training. To evaluate this capability, we assess how well learned models generalize to unseen interventions by comparing their predicted interventional distributions to ground truth. Models are trained on interventional data with known targets and tested on interventions affecting previously unaltered variables.

\subsection{Setup}

We follow the synthetic evaluation framework of \citet{Lorch:2024}; see their Section~6, Appendix, and codebase~\citep{stadion2024} for details. The setup involves simulating dynamical systems, applying both training and test interventions, and evaluating predictive accuracy. We use the KDS-based causal model of \cite{Lorch:2024} and replace it with our SKDS variant. Our implementation is available online \citep{steinstadion2025}.

\paragraph{Data.}
We generate synthetic data from two system types: (i) sparse cyclic linear models (SCMs and stationary SDEs) and (ii) gene expression dynamics simulated with SERGIO \citep{dibaeinia2020sergio}, a stationary nonlinear stochastic process over sparse acyclic networks, grounded in the chemical Langevin equation with Hill-type nonlinearities. Causal structures $G \in \{0,1\}^{d \times d}$ are sampled from Erd\H{o}s–R\'enyi (fixed edge probability, expected degree $3$) and scale-free (preferential attachment, randomly directed) distributions. For each system, we create $50$ datasets with $10$ training and $10$ test interventions, each targeting a distinct variable and containing $1000$ samples. Interventions are additive shifts for both types of linear models and gene overexpression for SERGIO. During learning, we allow the more general shift–scale framework, modifying drift and diffusion $(\delta,\beta)$. All datasets are standardized by the observational mean and variance.

\begin{figure*}[t]
    \centering
    \includegraphics[width=1\linewidth]{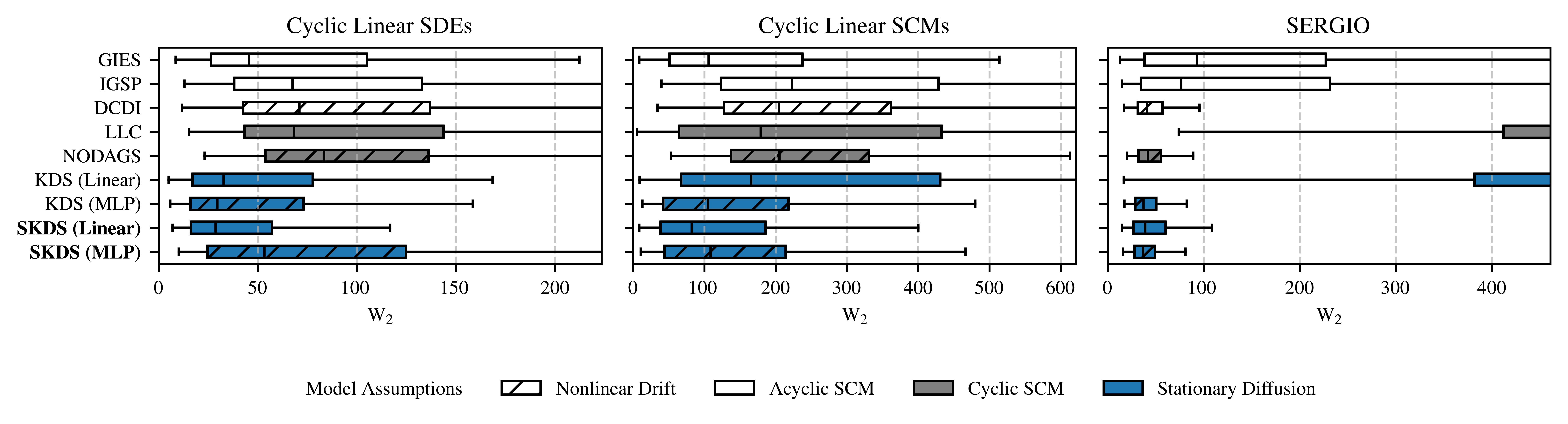}
    \caption{Benchmarking results for $d = 20$ variables with an Erd\H{o}s--R\'enyi causal structure. The Wasserstein-$2$ distance ($\mathcal{W}_2$) was computed from $10$ test interventions on unseen target variables across $50$ randomly generated systems. Box plots depict the medians and interquartile ranges (IQR), with whiskers extending to the largest value within $1.5$ times the IQR from the boxes.}
    \label{fig:Results-W2-ER}
\end{figure*}

\paragraph{Models and interventions.}
The Stein-type KDS plugs naturally into the framework of \citep[Algorithm 1]{Lorch:2024}. We sketch the learning algorithm in \cref{alg:SKDS learning} and give the python code lines in \cref{app:Code}.
The stationary diffusion processes are parameterized by either linear or multi-layer perceptron (MLP) mechanisms. The drift function \( b_\theta(x) \) is defined component-wise for each variable \( x_j \) as follows:
\begin{align*}
\text{Linear model:} \quad & f_{\theta_j}(x)_j = b_j + w_j \cdot x \\
\text{MLP model:} \quad & f_{\theta_j}(x)_j = b_j + w_j \cdot \gamma(U_j x + v_j) - x_j
\end{align*}
where \( \gamma \) denotes the sigmoid activation function. The diffusion matrix is parameterized as a diagonal matrix \( \sigma_\theta(x) = \operatorname{diag}(\exp(\sigma)) \), with learned log-standard deviations.
To remove the speed scaling invariance, we fix $w_j^j = -1$ in the linear and $u_j^j = 0$ in the MLP model (see \cref{subsec:Identifiability} and the note after \cref{prop:SKDS emp convex}).
Note that both model variants violate the boundedness assumptions on the drift and diffusion functions, introduced in \cref{lem:SKDS closed form} and subsequently used throughout all theoretical results. However, as discussed above, these assumptions can be informally relaxed in practice to merely require continuity.

Interventions $\phi$ are modeled as \emph{shift-scale interventions} on the known targets. For a variable \( x_j \), the intervention given by $\phi = (\delta, \beta)$ modifies the drift and diffusion as
\[
b_{\theta,\phi}(x)_j = f_\theta(x)_j + \delta, \qquad \sigma_{\theta,\phi}(x)_j = \beta \cdot \sigma_\theta(x)_j.
\]
%At test time, samples from the learned SDEs are generated using the Euler-Maruyama method.
\begin{algorithm}[ht]
\caption{Learning Causal Stationary Diffusions via the Stein-type KDS}
\label{alg:SKDS learning}
\normalsize %\small % or \footnotesize
\KwIn{Interventional datasets $\set{D_1, \ldots, D_M}$, kernel $k$, sparsity penalty $\lambda$, regularizer $R$}
\KwOut{Learned parameters $\theta$ and interventions $\set{\phi_1, \ldots, \phi_M}$}

Initialize model $\theta$ and interventions $\set{\phi_1, \ldots, \phi_M}$\;

\While{not converged}{
    Sample environment index $i \sim \text{Unif}([M])$\;
    Sample batch $D \sim D_i$\;
    Update $\theta$ and $\phi_i$ via gradient descent step
    $\propto - \nabla_{\theta, \phi_i} \left( \hat{\operatorname{SKDS}}(S^{\phi, \theta}, D; k) + \lambda R(\theta, \phi) \right)$\;
}
\Return{$\theta$, $\set{\phi_1, \ldots, \phi_M}$}
\end{algorithm}

\begin{comment}
    We compare the stationary SDE models to six SCM approaches that make use of interventional data: the score-based \textbf{GIES}~\citep{hauser2012characterization} and the constraint-based \textbf{IGSP}~\citep{wang2017permutation} algorithms, both assuming linear-Gaussian SCMs and are graph discovery methods. Given the DAG estimate, we use a linear-Gaussian
SCM with maximum likelihood parameter and variance estimates as the learned model. and the nonlinear \textbf{DCDI}~\citep{brouillard2020differentiable}. We further include \textbf{LLC}~\citep{hyttinen2012learning} and \textbf{NODAGS}~\citep{sethuraman2023nodags}, which learn cyclic linear and nonlinear SCMs, respectively. All graph discovery methods are paired with a maximum-likelihood linear SCM to define the causal mechanisms. Moreover, we include the \textbf{KDS} baseline, which follows the same SDE modeling framework as our SKDS--based approach. Hyperparameters for all methods are tuned using validation splits from the interventional datasets employed to train the final models.
\end{comment}
We benchmark SKDS against six established baselines. The graph discovery methods GIES \citep{hauser2012characterization} and IGSP \citep{wang2017permutation} assume linear-Gaussian SCMs. They infer graph structure from interventional data, after which parameters are estimated in closed form. DCDI \citep{brouillard2020differentiable} extends this to nonlinear Gaussian SCMs parameterized by neural networks, jointly learning functional relations and noise variances. NODAGS \citep{sethuraman2023nodags} relaxes acyclicity, modeling nonlinear cyclic SCMs via residual normalizing flows. LLC \citep{hyttinen2012learning} also addresses cyclic structures but in a linear-Gaussian setting, with sparsity-regularized optimization. Interventions are simulated in all SCM-based approaches by biasing the distribution of the target variables. Finally, we include the KDS-based stationary diffusion model \citep{Lorch:2024}, which is based on the same SDE framework as SKDS and serves as the closest baseline. We refer to \citet{Lorch:2024} for further implementation and tuning details.

\paragraph{Metrics.}
We evaluate the learned causal models via test interventions that shift the mean of the target variables to match those observed in held-out interventional data. Performance is measured using the Wasserstein-$2$ distance (\( \mathcal{W}_2 \)) between samples generated by the models under these interventions and the corresponding true interventional samples. This metric allows comparison across methods with either explicit or implicit density representations. Additionally, we report the mean squared error (MSE) between the predicted and true empirical means of the interventional distributions; compare \citep[D.2]{Lorch:2024}.

\subsection{Results}

In the main text, we report results for the Erd\H{o}s--R\'enyi (ER) setting with Wasserstein distance as the evaluation metric; the full set of results and significance tests is provided in \cref{app:Results}. This focus is justified as the empirical behavior for ER and scale-free (SF) graphs is qualitatively similar. Moreover, in cyclic linear SDEs or SCMs all methods achieve comparably low MSE, while performance differences become apparent only once nonlinearity is introduced in the data. In the nonlinear case, MSE and Wasserstein distance lead to consistent conclusions.

The cyclic linear SDE setup aligns naturally with our stationary SDE formulation, and both SKDS variants perform competitively in this setting. In \cref{fig:Results-W2-ER}, we observe that the acyclic method GIES remains strong despite model misspecification, whereas cyclic approaches such as LLC and NODAGS underperform. SKDS (MLP) is outperformed by the other stationary diffusion models (see \cref{tab:SKDS significance tests ER-Wasserstein}), which may reflect suboptimal hyperparameters or the inherent tradeoff between model complexity and optimization.

\begin{table}[t]
\centering
\setlength{\tabcolsep}{3pt} % compact column spacing
\renewcommand{\arraystretch}{1.1} % row spacing
\begin{tabular}{lcccccc}
& \multicolumn{2}{c}{SDE-ER} & \multicolumn{2}{c}{SCM-ER} & \multicolumn{2}{c}{SERGIO-ER} \\
Baseline & Lin & MLP & Lin & MLP & Lin & MLP \\
\hline
GIES 
& $\ast$ & 
&  $\ast$ & 
& $\ast$ & $\ast$ \\
IGSP 
& $\ast$ & $\ast$ 
& $\ast$ & $\ast$ 
& $\ast$ & $\ast$ \\
DCDI 
& $\ast$ & $\ast$ 
& $\ast$ & $\ast$ 
& & $\ast$ \\
LLC 
& $\ast$ & $\ast$ 
& $\ast$ & $\ast$ 
& $\ast$ & $\ast$ \\
NODAGS 
& $\ast$ & $\ast$ 
& $\ast$ & $\ast$ 
& $\ast$ & $\ast$ \\
KDS (Linear) 
&  & \textcolor{red}{$\ast$} 
& $\ast$ & $\ast$ 
& $\ast$ & $\ast$ \\
KDS (MLP) 
&  & \textcolor{red}{$\ast$} 
& $\ast$ &
& & \\
%\hline
\end{tabular}
\caption{Significance tests for \textbf{SKDS (Linear)} (Lin) and \textbf{SKDS (MLP)} (MLP) on ER graphs with Wasserstein-$2$ distance. A black star indicates a statistically significant improvement of our method (margin $\geq 5\%$) over the considered baseline method, while a red star indicates the baseline significantly outperforming ours.}
\label{tab:SKDS significance tests ER-Wasserstein}
\end{table}

The results for the cyclic linear SCMs are largely similar to those for the cyclic linear SDEs, with stationary diffusion models and GIES performing best. This is particularly notable because the generated data satisfies the assumptions of LLC and NODAGS, suggesting that while these methods achieve reasonable MSE (see \cref{fig:Results-W2-ER-Full} in \cref{app:Results}) -- possibly capturing mean behavior of interventional distributions -- they struggle to accurately model full distributional properties, as reflected in the Wasserstein distance.

On the SERGIO gene expression data, which introduces realistic nonlinearities and model mismatch, methods with nonlinear drift dominate. Here, SKDS (MLP) is among the top performers, while SKDS (Linear) remains competitive despite its restricted functional form.
\section{Conclusion}
We introduced the Stein-type KDS as a principled framework for learning stationary diffusions. Theoretically, SKDS vanishes precisely for the class of stationary reversible diffusions whose stationary distribution matches the target. Furthermore, for  linear parametrization (including expansions in nonlinear basis functions), the SKDS objective is convex with a quasiconvex empirical estimator. Empirically, SKDS matches or improves upon KDS and other competitive baselines across linear and nonlinear, cyclic and acyclic settings, while achieving a substantial reduction in computation time (\Cref{tab:kds_vs_skds}).
%Although there is no theoretical guarantee that SKDS fully captures distributional differences, future work could investigate a complementary lower bound with respect to the Wasserstein-$2$ distance, analogous to Stein discrepancy results \citep{gorham2019measuring}. Together with the upper bound in \cref{prop:SKDS upper bound}, such results would help explain SKDS’s strong training performance.
In summary, our findings highlight the robustness and flexibility of stationary diffusions as a computationally efficient foundation for causal modeling, providing a principled alternative to existing approaches.

% Future work could concentrate different applications of the SKDS, for example, using it to perform score estimation or designing controllers in control theory. This has already been done: Diffusion Kernel Stein Discrepancy.

\subsubsection*{Acknowledgements}
The authors acknowledge support from the European Research Council (ERC) under the European Union’s Horizon 2020 research and innovation programme (grant agreement No 883818).
Fabian Bleile acknowledges institutional support from the TUM Georg Nemetschek
Institute for Artificial Intelligence for the Built World and the Munich Center for Machine Learning.
Sarah Lumpp was further supported by the Munich Data Science Institute as well as the DAAD programme Konrad Zuse Schools of Excellence in Artificial Intelligence, sponsored by the Federal Ministry of Research, Technology and Space.
%\nocite{*}
% \newpage
%\printbibliography{}

\bibliography{literature}

\begin{thebibliography}{34}
\providecommand{\natexlab}[1]{#1}
\providecommand{\url}[1]{\texttt{#1}}
\expandafter\ifx\csname urlstyle\endcsname\relax
  \providecommand{\doi}[1]{doi: #1}\else
  \providecommand{\doi}{doi: \begingroup \urlstyle{rm}\Url}\fi

\bibitem[Anastasiou et~al.(2023)Anastasiou, Barp, Briol, Ebner, Gaunt,
  Ghaderinezhad, Gorham, Gretton, Ley, Liu, et~al.]{anastasiou2023stein}
Andreas Anastasiou, Alessandro Barp, Fran{\c{c}}ois-Xavier Briol, Bruno Ebner,
  Robert~E Gaunt, Fatemeh Ghaderinezhad, Jackson Gorham, Arthur Gretton,
  Christophe Ley, Qiang Liu, et~al.
\newblock {S}tein's method meets computational statistics: {A} review of some
  recent developments.
\newblock \emph{Statistical Science. A Review Journal of the Institute of
  Mathematical Statistics}, 38\penalty0 (1):\penalty0 120--139, 2023.

\bibitem[Arnold(1974)]{arnold1974stochastic}
Ludwig Arnold.
\newblock \emph{Stochastic differential equations: theory and applications}.
\newblock John Wiley \& Sons, New York-London-Sydney, 1974.

\bibitem[Barp et~al.(2019)Barp, Briol, Duncan, Girolami, and
  Mackey]{barp2019minimum}
Alessandro Barp, Fran{\c{c}}ois{-}Xavier Briol, Andrew~B. Duncan, Mark~A.
  Girolami, and Lester~W. Mackey.
\newblock Minimum stein discrepancy estimators.
\newblock In \emph{Advances in Neural Information Processing Systems},
  volume~32, pages 12964--12976, 2019.

\bibitem[Bleile(2025)]{steinstadion2025}
Fabian Bleile.
\newblock Steinstadion: A framework for learning on diffusions.
\newblock \url{https://github.com/fbleile/steinstadion/tree/SKDS}, 2025.

\bibitem[Brouillard et~al.(2020)Brouillard, Lachapelle, Lacoste,
  Lacoste-Julien, and Drouin]{brouillard2020differentiable}
Philippe Brouillard, S{\'e}bastien Lachapelle, Alexandre Lacoste, Simon
  Lacoste-Julien, and Alexandre Drouin.
\newblock Differentiable causal discovery from interventional data.
\newblock In \emph{Advances in Neural Information Processing Systems},
  volume~33, pages 21865--21877, 2020.

\bibitem[Carmeli et~al.(2006)Carmeli, De~Vito, and Toigo]{carmeli2006vector}
Claudio Carmeli, Ernesto De~Vito, and Alessandro Toigo.
\newblock Vector valued reproducing kernel {H}ilbert spaces of integrable
  functions and {M}ercer theorem.
\newblock \emph{Analysis and Applications}, 4\penalty0 (4):\penalty0 377--408,
  2006.

\bibitem[Cohn(2013)]{wiki:BochnerIntegral}
Donald~L. Cohn.
\newblock \emph{Measure theory}.
\newblock Birkh\"auser Advanced Texts: Basler Lehrb\"ucher.
  Birkh\"auser/Springer, New York, 2 edition, 2013.

\bibitem[Dettling et~al.(2023)Dettling, Homs, Am{\'e}ndola, Drton, and
  Hansen]{dettling2023identifiability}
Philipp Dettling, Roser Homs, Carlos Am{\'e}ndola, Mathias Drton, and
  Niels~Richard Hansen.
\newblock {I}dentifiability in continuous {L}yapunov models.
\newblock \emph{SIAM Journal on Matrix Analysis and Applications}, 44\penalty0
  (4):\penalty0 1799--1821, 2023.

\bibitem[Dibaeinia and Sinha(2020)]{dibaeinia2020sergio}
Payam Dibaeinia and Saurabh Sinha.
\newblock Sergio: A single-cell expression simulator guided by gene regulatory
  networks.
\newblock \emph{Cell Systems}, 11\penalty0 (3):\penalty0 252--271.e11, 2020.

\bibitem[Ethier and Kurtz(1986)]{ethier2009markov}
Stewart~N. Ethier and Thomas~G. Kurtz.
\newblock \emph{Markov processes}.
\newblock Wiley Series in Probability and Mathematical Statistics: Probability
  and Mathematical Statistics. John Wiley \& Sons, Inc., New York, 1986.

\bibitem[Gorham and Mackey(2015)]{gorham2015measuring}
Jackson Gorham and Lester Mackey.
\newblock {M}easuring sample quality with {S}tein's method.
\newblock In \emph{Advances in Neural Information Processing Systems},
  volume~28, pages 226--234, 2015.

\bibitem[Gorham and Mackey(2017)]{gorham2017measuring}
Jackson Gorham and Lester Mackey.
\newblock {M}easuring sample quality with kernels.
\newblock In \emph{Proceedings of the 34th International Conference on Machine
  Learning}, volume~70, pages 1292--1301, 2017.

\bibitem[Gorham et~al.(2019)Gorham, Duncan, Vollmer, and
  Mackey]{gorham2019measuring}
Jackson Gorham, Andrew~B Duncan, Sebastian~J. Vollmer, and Lester Mackey.
\newblock {M}easuring sample quality with diffusions.
\newblock \emph{The Annals of Applied Probability}, 29\penalty0 (5):\penalty0
  2884--2928, 2019.

\bibitem[Hauser and B\"uhlmann(2012)]{hauser2012characterization}
Alain Hauser and Peter B\"uhlmann.
\newblock Characterization and greedy learning of interventional {M}arkov
  equivalence classes of directed acyclic graphs.
\newblock \emph{Journal of Machine Learning Research (JMLR)}, 13:\penalty0
  2409--2464, 2012.

\bibitem[Huang et~al.(2015)Huang, Ji, Liu, and Yi]{huang2015steady}
Wen Huang, Min Ji, Zhenxin Liu, and Yingfei Yi.
\newblock Steady states of {F}okker-{P}lanck equations: {I}. {E}xistence.
\newblock \emph{Journal of Dynamics and Differential Equations}, 27\penalty0
  (3-4):\penalty0 721--742, 2015.

\bibitem[Hyttinen et~al.(2012)Hyttinen, Eberhardt, and
  Hoyer]{hyttinen2012learning}
Antti Hyttinen, Frederick Eberhardt, and Patrik~O. Hoyer.
\newblock Learning linear cyclic causal models with latent variables.
\newblock \emph{The Journal of Machine Learning Research}, 13\penalty0
  (1):\penalty0 3387--3439, 2012.

\bibitem[Jacobsen(1993)]{jacobsen1993brief}
Martin Jacobsen.
\newblock {A} brief account of the theory of homogeneous {G}aussian diffusions
  in finite dimensions.
\newblock \emph{Frontiers in Pure and Applied Probability}, 1:\penalty0 86--94,
  1993.

\bibitem[Kanagawa et~al.(2022)Kanagawa, Barp, Gretton, and
  Mackey]{kanagawa2022controlling}
Heishiro Kanagawa, Alessandro Barp, Arthur Gretton, and Lester Mackey.
\newblock Controlling moments with kernel stein discrepancies.
\newblock \emph{arXiv e-prints}, pages arXiv--2211, 2022.

\bibitem[Liu et~al.(2016)Liu, Lee, and Jordan]{liu2016kernelized}
Qiang Liu, Jason Lee, and Michael Jordan.
\newblock {A} kernelized {S}tein discrepancy for goodness-of-fit tests.
\newblock In \emph{Proceedings of the 33rd International Conference on Machine
  Learning}, pages 276--284, 2016.

\bibitem[Lorch(2024)]{stadion2024}
Lars Lorch.
\newblock Stadion: A framework for learning on diffusions.
\newblock \url{https://github.com/larslorch/stadion}, 2024.
\newblock Accessed: 2024-11-09.

\bibitem[Lorch et~al.(2024)Lorch, Krause, and Sch{\"o}lkopf]{Lorch:2024}
Lars Lorch, Andreas Krause, and Bernhard Sch{\"o}lkopf.
\newblock {C}ausal {M}odeling with {S}tationary {D}iffusions.
\newblock In \emph{Proceedings of The 27th International Conference on
  Artificial Intelligence and Statistics}, volume 238, pages 1927--1935. PMLR,
  2024.

\bibitem[\O~ksendal(2003)]{oksendal2013stochastic}
Bernt \O~ksendal.
\newblock \emph{Stochastic differential equations}.
\newblock Universitext. Springer-Verlag, Berlin, 6 edition, 2003.

\bibitem[Pavliotis(2014)]{pavliotis2014stochastic}
Grigorios~A. Pavliotis.
\newblock \emph{Stochastic processes and applications}, volume~60 of
  \emph{Texts in Applied Mathematics}.
\newblock Springer, New York, 2014.

\bibitem[Pearl(2009)]{Pearl:2009}
Judea Pearl.
\newblock \emph{Causality}.
\newblock Cambridge University Press, Cambridge, 2 edition, 2009.

\bibitem[Pigola and Setti(2014)]{pigola2014global}
Stefano Pigola and Alberto~G. Setti.
\newblock \emph{Global divergence theorems in nonlinear {PDE}s and geometry},
  volume~26 of \emph{Ensaios Matem\'aticos}.
\newblock Sociedade Brasileira de Matem\'atica, Rio de Janeiro, 2014.

\bibitem[Sethuraman et~al.(2023)Sethuraman, Lopez, Mohan, Fekri, Biancalani,
  and Huetter]{sethuraman2023nodags}
Muralikrishnna~G. Sethuraman, Romain Lopez, Rahul Mohan, Faramarz Fekri,
  Tommaso Biancalani, and Jan-Christian Huetter.
\newblock Nodags-flow: Nonlinear cyclic causal structure learning.
\newblock In \emph{Proceedings of The 26th International Conference on
  Artificial Intelligence and Statistics}, volume 206, pages 6371--6387. PMLR,
  2023.

\bibitem[Sokol and Hansen(2014)]{hansen2014causal}
Alexander Sokol and Niels~Richard Hansen.
\newblock Causal interpretation of stochastic differential equations.
\newblock \emph{Electronic Journal of Probability}, 19:\penalty0 no. 100, 24,
  2014.

\bibitem[Spirtes et~al.(2000)Spirtes, Glymour, and Scheines]{Spirtes:2000}
Peter Spirtes, Clark Glymour, and Richard Scheines.
\newblock \emph{Causation, prediction, and search}.
\newblock Adaptive Computation and Machine Learning. MIT Press, Cambridge, MA,
  2 edition, 2000.

\bibitem[Stein(1972)]{stein1972bound}
Charles Stein.
\newblock {A} bound for the error in the normal approximation to the
  distribution of a sum of dependent random variables.
\newblock \emph{Proceedings of the sixth Berkeley symposium on mathematical
  statistics and probability, volume 2: Probability theory}, 6:\penalty0
  583--603, 1972.

\bibitem[Steinwart and Christmann(2008)]{christmann2008support}
Ingo Steinwart and Andreas Christmann.
\newblock \emph{Support vector machines}.
\newblock Information Science and Statistics. Springer, New York, 2008.

\bibitem[Tropp(2015)]{tropp2015introduction}
Joel~A. Tropp.
\newblock \emph{An introduction to matrix concentration inequalities},
  volume~8.
\newblock Now Publishers, Inc., 2015.

\bibitem[Varando and Hansen(2020)]{varando2020graphical}
Gherardo Varando and Niels~Richard Hansen.
\newblock Graphical continuous {L}yapunov models.
\newblock In \emph{Proceedings of the Thirty-Sixth Conference on Uncertainty in
  Artificial Intelligence, {UAI} 2020}, volume 124, pages 989--998. {AUAI}
  Press, 2020.

\bibitem[Wang et~al.(2023)Wang, Geng, Huang, Huang, and
  Gong]{wang2024generator}
Yuanyuan Wang, Xi~Geng, Wei Huang, Biwei Huang, and Mingming Gong.
\newblock {G}enerator identification for linear {SDE}s with additive and
  multiplicative noise.
\newblock In \emph{Advances in Neural Information Processing Systems},
  volume~36, pages 64103--64138, 2023.

\bibitem[Wang et~al.(2017)Wang, Solus, Yang, and Uhler]{wang2017permutation}
Yuhao Wang, Liam Solus, Karren~D. Yang, and Caroline Uhler.
\newblock Permutation-based causal inference algorithms with interventions.
\newblock In \emph{Advances in Neural Information Processing Systems},
  volume~30, pages 5822--5831, 2017.

\end{thebibliography}

% \include{checklist}

% \documentclass[twoside]{article}

% \include{settings}

% \begin{document}

% \include{macros}

% If your paper is accepted and the title of your paper is very long,
% the style will print as headings an error message. Use the following
% command to supply a shorter title of your paper so that it can be
% used as headings.
%
%\runningtitle{I use this title instead because the last one was very long}

% If your paper is accepted and the number of authors is large, the
% style will print as headings an error message. Use the following
% command to supply a shorter version of the authors names so that
% they can be used as headings (for example, use only the surnames)
%
%\runningauthor{Surname 1, Surname 2, Surname 3, ...., Surname n}

% Supplementary material: To improve readability, you must use a single-column format for the supplementary material.
\onecolumn
\thispagestyle{empty}
\appendixtitle{%Instructions for Paper Submissions to AISTATS 2025: \\
Supplementary Materials}
\appendix

\section{ADDITIONAL BACKGROUND}

\subsection{Vector Calculus}\label{app:Vector Calculus}
We give notation and important identities from vector calculus (based upon \citep[A.1]{barp2019minimum}). For $f \in \Gamma(\R^n, \R^n)$, we denote $\inner{\grad_x}{f(x)}$ as the divergence of $f$,  $\operatorname{div}f(x) = \sum_{i \in [n]} \partial_i f_i(x)$. For a function \( g \in \Gamma(\mathcal{X}) \), \( v \in \Gamma(\mathcal{X}, \mathbb{R}^d) \) and \( A \in \Gamma(\mathcal{X}, \mathbb{R}^{d \times d}) \) with components \( A_{ij} \), \( v_i \), \( g \), we have $(\nabla g)_i = \partial_i g$, $(\inner{\grad}{A})_i = \inner{\grad}{A_i}$, where $A_i$ is the $i$-th column of $A$. We define $A^j$ as the $j$-th row of $A$ respectively. 
Moreover, we have \( (\nabla v)_{ij} = \partial_j v_i \), \( \nabla^2 f \equiv \nabla (\nabla f) \). We have the following identities,
\begin{align*}
\inner{\nabla}{gv} &= \partial_i (g v_i) = v_i \partial_i g + g \partial_i v_i = \inner{\grad g}{v} + g \inner{\grad}{v},\\
\inner{\grad}{gA} &= \partial_i (g A_{ij}) e_j = (A_{ij} \partial_i g + g \partial_i A_{ij}) e_j = \inner{\nabla g}{A} + g \inner{\grad}{A},\\
\inner{\nabla}{Av} &= \partial_i (A_{ij} v_j) = \inner{\inner{\grad}{A}}{v} + \operatorname{Tr} (A \nabla v),
\end{align*}
where we treat $\inner{\nabla}{A}$ and \( \nabla g \) as column vectors. For a matrix $M \in \mathbb{R}^{m \times n}$, the column-wise vectorization of $M$ is obtained by stacking its columns on top of each other, i.e., \[
\operatorname{vec}(M) = 
\begin{bmatrix}
M_{1} \\
M_{2} \\
\vdots \\
M_{n}
\end{bmatrix} \in \mathbb{R}^{mn}.
\]

\subsection{Kernel Theory}\label{app:Kernel Theory}
This section reviews key concepts from kernel theory  \citep{christmann2008support,carmeli2006vector}.

\begin{definition}[Positive Semidefinite Kernel]\label{def:PSD Kernel}
    Let $X$ be an arbitrary set. A map $k: X \times X \ra \R$ is called \emph{positive semidefinite kernel} (PSD kernel) if and only if for all $n \in \N$ and all $x \in X^n$ the $n \times n$ matrix $G$ with entries $G_{ij} := k(x_i, x_j)$ is positive semidefinite, i.e., \( c^T G c \geq 0 \) for all \( c \in \mathbb{R}^n \).
\end{definition}

\begin{theorem}[PSD Kernels and Feature Maps]\label{th:PSD Kernels and Feature Maps}
    Let $X$ be any set, and let $k:X \times X \ra \R$.
    \begin{enumerate}
        \item[(i)] Then $k$ is a PSD kernel, if there is an inner product space $\cal{H}$ and a map $\phi: X \ra \cal{H}$ such that
        \begin{equation}\label{eq:kernel x feature map}
            k(x,y) = \inner{\phi(x)}{\phi(x)} \quad \text{for all }x,y\in X.
        \end{equation}
        \item[(ii)] Conversely, if $k$ is a PSD kernel, then there exists a Hilbert space $\cal{H}$ and a map $\phi : X \ra \cal{H}$ such that \cref{eq:kernel x feature map} holds.
    \end{enumerate}
\end{theorem}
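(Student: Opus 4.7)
This direction is a one-line calculation. Given $x_1,\dots,x_n \in X$ and $c \in \R^n$, I expand
\[
c^{\top} G c \;=\; \sum_{i,j} c_i c_j \inner{\phi(x_i)}{\phi(x_j)} \;=\; \Bigl\langle \sum_i c_i \phi(x_i),\, \sum_j c_j \phi(x_j)\Bigr\rangle \;=\; \Bigl\|\sum_i c_i \phi(x_i)\Bigr\|^2 \;\geq\; 0,
\]
using bilinearity of the inner product on $\cal H$.

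\textbf{Part (ii): PSD $\Rightarrow$ feature map (Moore--Aronszajn style construction).}
The plan is to construct the canonical RKHS and use the feature map $\phi(x) = k(\cdot,x)$. First I define the pre-Hilbert space
\[
\cal H_0 \;:=\; \operatorname{span}\{k(\cdot,x) : x \in X\},
\]
consisting of finite linear combinations $f = \sum_{i=1}^n a_i\, k(\cdot, x_i)$. On $\cal H_0$ I set
\[
\Bigl\langle \sum_i a_i k(\cdot,x_i),\, \sum_j b_j k(\cdot,y_j)\Bigr\rangle_{\cal H_0} \;:=\; \sum_{i,j} a_i b_j\, k(x_i, y_j).
\]
The first task is to check this is well-defined (independent of the chosen representation); this follows by rewriting the sum as $\sum_j b_j f(y_j) = \sum_i a_i g(x_i)$, which depends only on the functions $f,g$.

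Next, I must verify the inner product axioms. Bilinearity and symmetry are immediate from the definition and symmetry of $k$. Positive semidefiniteness, $\langle f,f\rangle_{\cal H_0} \geq 0$, is exactly the PSD hypothesis applied to $x_1,\dots,x_n$ and $c = a$. The nontrivial step---and the main obstacle---is strict definiteness: $\langle f,f\rangle_{\cal H_0}=0$ implies $f\equiv 0$. For this I first establish the reproducing property $f(x) = \langle f, k(\cdot,x)\rangle_{\cal H_0}$, directly from the definition. Then Cauchy--Schwarz (which holds for any symmetric PSD bilinear form) gives
\[
|f(x)|^2 \;=\; |\langle f, k(\cdot,x)\rangle_{\cal H_0}|^2 \;\leq\; \langle f,f\rangle_{\cal H_0}\cdot k(x,x) \;=\; 0,
\]
so $f(x)=0$ for every $x\in X$, i.e.\ $f = 0$ as a function.

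\textbf{Completion and definition of $\phi$.}
Having shown $(\cal H_0, \langle\cdot,\cdot\rangle_{\cal H_0})$ is a genuine inner product space, I take its metric completion $\cal H$, which is a Hilbert space (the standard construction via equivalence classes of Cauchy sequences, or equivalently realising $\cal H_0$ as a space of functions and closing under pointwise limits of Cauchy sequences---the reproducing property makes pointwise evaluation continuous, so this embedding is consistent). Finally I define $\phi : X \to \cal H$ by $\phi(x) := k(\cdot,x)$. The reproducing-kernel identity
\[
\langle \phi(x), \phi(y)\rangle_{\cal H} \;=\; \langle k(\cdot,x), k(\cdot,y)\rangle_{\cal H_0} \;=\; k(x,y)
\]
holds by construction, completing the proof.
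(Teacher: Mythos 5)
The paper itself gives no proof of this theorem: it is stated as standard background and delegated to the cited references (it is essentially Theorem~4.16 in Steinwart--Christmann), so there is nothing in the paper to compare against line by line. Your proof is the standard argument for exactly this result --- direction (i) by expanding $c^{\top}Gc$ as a squared norm, direction (ii) by the Moore--Aronszajn construction (pre-Hilbert space spanned by kernel sections, well-definedness of the bilinear form, positive semidefiniteness from the PSD hypothesis, definiteness via the reproducing property plus Cauchy--Schwarz for semidefinite forms, then completion and $\phi(x)=k(\cdot,x)$) --- and it is correct. Two small remarks. First, you invoke symmetry of $k$ (for well-definedness and for the reproducing property); the paper's Definition of a PSD kernel only requires $c^{\top}Gc\ge 0$ and does not state symmetry, so strictly speaking direction (ii) needs symmetry as an additional (standard) hypothesis --- a real inner product $\langle\phi(x),\phi(y)\rangle$ is automatically symmetric, so without it the claim would fail; this is an imprecision of the statement, not of your argument, but it is worth making the assumption explicit. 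Second, since the theorem only asserts the existence of \emph{some} Hilbert space and feature map with $k(x,y)=\langle\phi(x),\phi(y)\rangle$ (note the statement's display has a typo, $\langle\phi(x),\phi(x)\rangle$, which you correctly read as $\langle\phi(x),\phi(y)\rangle$), the abstract metric completion suffices; your parenthetical about realizing the completion as a function space via continuity of point evaluations is the extra step needed to get the RKHS itself, but it is not required for the statement as posed.
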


For a given PSD kernel, the corresponding \emph{feature map} $\phi$ and \emph{feature space} $\cal{H}$ are not unique. However, there is a canonical choice for the feature space, a so-called reproducing kernel Hilbert space.

\begin{definition}[Reproducing Kernel Hilbert Space]
Let \( X \) be a set, and let \( \mathcal{H} \subseteq \R^X \) be an $\R$-Hilbert space\footnote{i.e., inner-product space which is a complete metric space with its norm induced by the inner product.} of functions on \( X \) with addition \( (f + g)(x) := f(x) + g(x) \) and multiplication \( (\lambda f)(x) := \lambda f(x) \). \( \mathcal{H} \) is called a \emph{reproducing kernel Hilbert space} (RKHS) on \( X \) if for all \( x \in X \) the linear functional \( \delta_x : \mathcal{H} \rightarrow \R \), \( \delta_x(f) := f(x) \) is bounded (i.e., \( \sup_{f \in \mathcal{H} \setminus \{0\}} \frac{|f(x)|}{\|f\|} < \infty \)).
\end{definition}

Note that since $\delta_x$ is linear, boundedness is equivalent to continuity. That is, the defining property of a RKHS is that evaluation of its functions at arbitrary points is continuous with respect to varying the function. Moreover, the RKHS $\cal{H}$ to a kernel $k$ satisfies the \emph{reproducing property} that $f(x) = \inner{f}{k(\cdot, x)}_{\cal{H}}$ for all $x \in \R^d$.

\begin{definition}[Universal Kernel]\label{def:universal kernel}
A PSD kernel $k: X \times X \to \mathbb{R}$ on a metric space $X$ is called \emph{universal} if
\[
E := \operatorname{span}\{ k_x : X \to \mathbb{R} \mid k_x(y) = k(x,y), \, x \in X \}
\]
is dense in the set of continuous functions with compact support, $C_c(X, \mathbb{R})$, i.e., for any $f \in C_c(X, \mathbb{R})$ and $\varepsilon > 0$, there exists $g \in E$ such that $\|g - f\|_\infty < \varepsilon$.

Note that if $\phi: X \to \mathcal{H}$ is a feature map corresponding to $k$, then for every $g \in E$ there exists $w \in \mathcal{H}$ such that $g = \langle w, \phi(\cdot) \rangle$.
\end{definition}

This notion extends naturally to the vector-valued case. 
If $K(x,y) = k(x,y) I_d$ with $k$ universal, then the associated RKHS is 
$\mathcal{H}^d := \mathcal{H} \times \cdots \times \mathcal{H}$, 
and finite linear combinations of kernel sections $K_x(\cdot)$ are dense in 
$C_c(X, \mathbb{R}^d)$. Hence $K$ is universal in the vector-valued sense.

\begin{definition}[Differentiability of a Kernel]\label{def:Differentiability of a Kernel}
    Let $k$ be a kernel on $\R^n$ with RKHS $\cal{H}$. We adopt the notation from \citep{christmann2008support} and regarding differentiability we interpret $k$ as a function $\Tilde k : \R^{2n} \ra \R$. Then define $\partial_{i}\partial_{i+n} k = \partial_{i}\partial_{i+n} \Tilde k$. More generally, define $\partial^{\alpha, \alpha} = \partial_1^{\alpha_1} \ldots \partial_n^{\alpha_n}\partial_{n+1}^{\alpha_{n+1}} \ldots \partial_{2n}^{\alpha_{2n}}$, where $\alpha \in \N^n$. We say $k$ is $m$-times continuously differentiable, and write $k \in C^{(m, m)}$, if $\partial^{\alpha, \alpha} k$ exists and is continuous for all $\sum_{i \in [n]} \alpha_i \leq m$. 
\end{definition}

The theory of kernels can be extended to the case where $k$ maps to a Hilbert space $\cal{K}$ \citep{carmeli2006vector}. The resulting RKHS consists of functions from $X$ to $\cal{K}$. In particular, we are interested in the setting with $\R^d$-valued functions.

\begin{definition}[RKHS for $\R^d$-valued functions]\citep[Def.~2.2]{carmeli2006vector}
    An $\R^d$-valued kernel of positive type on $X \times X$ is a map $K: X \times X \ra \R^{d \times d}$ such that for all $N \in \N$, $x_1, \ldots, x_N \in X$ and $c_1, \ldots, c_N \in \R$, we have
    \begin{equation}
        \sum_{i,j \in [N]} c_i c_j \inner{K(x_j, x_i)v}{v}_{\R^d} \geq 0 \quad \text{for all }v\in \R^d,
    \end{equation}
    or equivalently, $K(x,y)$ is positive semi-definite for any $x$, $y \in X$. 
\end{definition}

Consider the pre-Hilbert space
    \begin{equation}
        \begin{aligned}
            \cal{H}_0^d &= \operatorname{span} \set{K(\cdot, x)v \mid x \in X, v \in \R^d},
        \end{aligned}
    \end{equation}
    with $f = \sum_i^n a_i K(\cdot, x_i)v_i \in \cal{H}_0^d$ for $a_i \in \R$, $x_i \in X$, and $v_i \in \R^d$ for all $i \in [n]$. A sesquilinear form on $\cal{H}_0^d \times \cal{H}_0^d$ is defined by 
    \begin{equation}
        \begin{aligned}
            \inner{f}{g}_{\cal{H}_0^d} &= \sum_{i}^n \sum_j^m a_i b_j \inner{ K(y_j, x_i)v_i}{w_j}_{\R^d};
        \end{aligned}
    \end{equation}
  compare \citep[Proposition 2.3]{carmeli2006vector}. Then, the unique RKHS $\cal{H}^d$ with reproducing kernel $K$ is defined as the completion of $\cal{H}_0^d$ with respect to $\inner{\cdot}{\cdot}_{\cal{H}_0^d}$. This means
    \begin{equation}
        \begin{aligned}
            \cal{H}^d = \Big\{f = \sum_i a_i K(\cdot, x_i)v_i &\mid a_i \in \R \text{, } x_i \in X \text{, and } v_i \in \R^d \text{ for all }i \in \N \text{ such that}\\
            &\norm{f}^2_{\cal{H}^d} := \lim_{n \ra \infty} \norm{\sum_i^n a_i K(\cdot, x_i)v_i}^2_{\cal{H}_0^d} = \sum_{i,j} a_i a_j \inner{ K(x_j, x_i)v_i}{v_j}_{\R^d} <\infty \Big\}.
        \end{aligned}
    \end{equation}

The reproducing property in an RKHS $\cal{H}^d$ defined by an $\R^d$-valued kernel $K$ reads as follows. For $f \in \cal{H}^d$ and $x \in X$,
\begin{equation}\label{eq:R^d valued kernel reproducing property}
    \inner{f(x)}{v}_{\R^d} = \inner{f}{K(\cdot, x)v}_{\cal{H}^d}, \quad \text{for all }v\in \R^d.
\end{equation}
We note that any $\R$-valued kernel $k$ of positive type on $X \times X$ defines an $\R^d$-valued kernel of positive type on $X \times X$ together with a symmetric positive definite matrix $B$ by $K = k B$. We denote the RKHS of $k$ with $\cal{H}$. Specifically, we will be interested in the case when $B = I_d$. Then, for $f \in \cal{H}^d$, it holds that $f_i \in \cal{H}$ as
\begin{align*}
    \norm{f}^2_{\cal{H}^d} &= \sum_{i,j} a_i a_j \inner{ K(x_j, x_i)v_i}{v_j}_{\R^d}\\
    &= \sum_{i,j} a_i a_j k(x_j, x_i) \sum_{k \in [d]}(v_i)_k (v_j)_k\\
    &= \sum_{k \in [d]} \sum_{i,j} a_i a_j k(x_j, x_i) (v_i)_k (v_j)_k\\
    &= \sum_{k \in [d]} \norm{f_k}^2_{\cal{H}}.
\end{align*}

\subsection{Stochastic Differential Equations}

Next, we provide a short introduction to stochastic differential equations; for further details we refer to \citep{oksendal2013stochastic}.

Differential equations (DEs) are a natural choice to model deterministic processes and put (physical) quantities and their rates of change in relation. This approach may fail to accurately model a process if its randomness effects quantities too much. Discovered by the botanist Robert Brown, Brownian motion is the irregular thermal movement of small particles in liquids and gases. The physical quantities of such particles like speed and position were observed to be ``too random'' to be accurately modeled with DEs. A model that adjusts for randomness in a differential equation is called a Stochastic Differential Equation (SDE), and a common class of SDEs--Itô diffusions--extends ordinary differential equations by explicitly incorporating stochastic terms.

\begin{definition}[Itô diffusion]
    An Itô diffusion is a stochastic process $(X_t)_t$ that can be written in the form of \begin{equation}\label{ito diffusion}
        dX_t = b(X_t, t)dt + \sigma(X_t, t)dB_t, \quad t \in [0,T],\, X_0 = Z,
    \end{equation}
    with $(B_t)_{t \geq 0}$ Brownian motion, $T > 0$, $b,\sigma :\R^n \times [0,T] \ra \R^n$ measurable, and $Z \in L^2(\Omega)$ a square integrable random variable independent of the driving noise process.
    Define an autonomous (or time-homogeneous) Itô diffusion through
    \begin{equation}
        \label{auto ito process}
        dX_t = b(X_t)dt + \sigma(X_t)dB_t,
    \end{equation}
    where the drift $b$ and the diffusion (or noise) $\sigma$ only depend on $X_t$ and not on past values $X_s$ for $s < t$. This ensures a strong solution $(X_t)_{t \geq 0}$ of \cref{ito diffusion}, if it exists, to be Markov. 
    % \footnote{For various examples of Non-Markov Itô processes we refer to \citep{mccauley2010nonmarkov}.}
\end{definition}
If $\sigma(x) = \sigma$ then the noise is called additive, otherwise multiplicative. In the modeling of physical systems using SDEs, additive noise is usually due to thermal fluctuations, whereas multiplicative noise is due to noise in some control parameter.

\begin{example}[Ornstein-Uhlenbeck Process]
\label{ex:ornstein-uhlenbeck process}
    An Ornstein-Uhlenbeck process is defined by the SDE
    \begin{equation}
        dX_t = M (X_t - a) dt + D dB_t
    \end{equation}
    with invertible drift matrix $M \in \R^{n \times n}$ and diffusion matrix $D \in \R^{n \times n}$. If the matrix $M$ is \emph{stable}, i.e., its eigenvalues have negative real parts, the affine linear drift introduces a restoring force towards the mean $a \in \R^n$.  For stable $M$ and positive definite $DD^T$, the Ornstein-Uhlenbeck process has a unique stationary solution $\cal{N}(a, \Sigma)$, where  $\Sigma$ solves $M \Sigma + \Sigma M^T = - DD^T$ \citep[Theorem (8.2.12)]{arnold1974stochastic}.
\end{example}

\subsection{Generator and the Martingale Problem}\label{app:SDE-Generator-Martingale Problem}
The generator $\cal{A}$ of a stochastic process defined by \cref{eq:SDE} = \cref{auto ito process} is a linear operator that acts on a function $f\in C_b(\R^n)$ describing the infinitesimal expected change when it is applied to the stochastic process.
%It evolves from the conditional expectation operator $(P_t f)(\cdot) := \E[f(X_t)\mid X_0 = \cdot]$, which is linear %and forms a semigroup, i.e., $P_0 = I$ and $P_{t+s} = P_t\circ P_s$.
%The generator is defined as $(\cal{A} f)(x) := \lim_{t \ra 0}t^{-1}\left((P_t f)(x) - (P_0 f)(x)\right)$. Then $\cal{A}:\cal{D} \ra C_b(\R^d)$ where $\cal{D} \subseteq C_b(\R^d)$ is the domain of $\cal{A}$ such that for $f \in \cal{D}$ the strong limit $(\cal{A} f)(x)$ exists for all $x \in \R^n$.
%For $f \in C^2(\R^n)$ and sufficiently regular $b$ and $\sigma$, Itô's formula applies, and the generator takes the form of a differential operator,
For $f \in C^2(\R^n)$ and sufficiently regular $b$ and $\sigma$, the generator takes the form of a differential operator,
\begin{equation}\label{differential operator}
\begin{aligned}
    \cal{L} (f) (x) &= \inner{b(x)}{\grad f(x)} + \frac{1}{2} \operatorname{tr}\left(a(x)\nabla \nabla f(x)\right),
\end{aligned}
\end{equation}
where $a(x) = \sigma(x)\sigma(x)^T$ is called the \emph{covariance coefficient function}.
The martingale problem (see \citep{ethier2009markov}) connects the generator $\cal{A}$ of an SDE \cref{eq:SDE} with its stationary solution $P$. In specific, by the arguments in \citep[chapter~4]{ethier2009markov}, if \cref{eq:SDE} has a unique strong solution then $\mu$ is a stationary solution if and only if
\begin{equation}\label{eq:martingale problem}
    \E_{X \sim \mu}[\cal{A}f(X)] = 0 \text{ for all }f \in \cal{D}.
\end{equation}
Importantly, the domain $\cal{D}$ of $\cal{A}$ can be replaced by a smaller subset, called the \emph{core} of $\cal{A}$, without affecting the result. For sufficiently regular drift $b$ and diffusion $\sigma$ such a core can be shown to be the space of smooth functions with compact support $C^\infty_c$ (see \citep[Appendix B.3]{Lorch:2024}). As $C^\infty_c \subseteq C^2$, it follows that $\cal{A}$ takes the form of $\cal{L}$ in \cref{eq:martingale problem}, which further simplifies the characterization.

The family of SDEs that admit a stationary solution with density $\mu$ are referred to as \emph{$\mu$-targeted diffusions}. Notably, the class of $\mu$-targeted diffusions can be characterized: for continuously differentiable drift and diffusion functions, $\mu$ is a stationary density of a process solving \cref{eq:SDE} if and only if $b$ takes the form $b(x) = \frac{1}{2 \mu(x)} \inner{\grad}{\mu(x)a(x)} + h(x)$ for a non-reversible component $h \in C^1$ satisfying $\inner{\grad}{\mu(x)h(x)} = 0$ for all $x \in \R^d$. Moreover, if $h$ is $\mu$-integrable, then there exists a differentiable, $\mu$-integrable, and skew-symmetric $d\times d$ matrix-valued function $c$, called \emph{stream coefficient}, such that the differential operator $\cal{L}$ takes the form
\begin{equation}\label{eq:generator density version}
    (\cal{L} f)(x) = \frac{1}{2 \mu(x)} \inner{\grad}{\mu(x)(a(x) + c(x)) \grad f(x)}
\end{equation}
on $f \in C^2 \cap \operatorname{dom}(\cal{L})$.
The drift then has the form 
\begin{equation}\label{eq:drift density version}
    \begin{aligned}
        b(x) &= \frac{1}{2 \mu(x)} \inner{\grad}{\mu(x)(a(x) + c(x))} \\
        &= \frac{1}{2 \mu(x)} \inner{\grad}{\mu(x)a(x)} + \underbrace
{\frac{1}{2 \mu(x)} \inner{\grad}{\mu(x)c(x)}}_{= h}.
\end{aligned}
\end{equation}
The non-reversible $h$ can be related to the stationary probability flux $J = \mu h = \frac{1}{2} \inner{\grad}{\mu(x)c(x)}$, which -- in stationarity -- is the divergence-free part of the drift. The condition that the stationary probability flux vanishes, i.e.,
\begin{equation}\label{eq:detailed balance condition}
    J = 0
\end{equation}
is called \emph{detailed balance condition}. The detailed balance condition holds if and only if the diffusion is reversible \citep[Prop. 4.5.]{pavliotis2014stochastic}.

For later use, we can write $h$ also in the form
\begin{equation}\label{eq:non-reversible drift}
    h = \frac{1}{2}\left(c \grad \log \mu + \inner{\grad}{c} \right).
\end{equation}

We refer to \Citet{gorham2019measuring, pavliotis2014stochastic} for the details.
%The term $\inner{\grad}{\cdot}$ denotes the divergence (see \cref{app:Vector Calculus}).

\subsection{Stein Discrepancy}\label{app:Stein Discrepancy}
Comparing probability distributions is essential for tasks like model evaluation, data generation, and hypothesis testing. A natural starting point is the maximum deviation between their expectations over a class of real-valued test functions $\cal{F}$ \citep{gorham2015measuring}, given by
\begin{equation}\label{eq:max expected deviation}
    d_\cal{F}(Q,P) := \sup_{f \in \cal{F}} \abs{\mathbb{E}_{X \sim Q}[f(X)] - \mathbb{E}_{X \sim P}[f(X)]}.
\end{equation}

Stein discrepancies are a family of statistical divergences, defined as maximal deviations arising from Stein's method~\citep{stein1972bound}, that are particularly useful due to their computability. The following definition makes precise the notion of Stein discrepancy alluded to in \cref{subsec:Stein Discrepancy}.

\begin{definition}[Stein Discrepancy \citep{gorham2015measuring}]
    Let $P$, $Q$ be probability distributions on a set $\cal{X}$.
    A Stein discrepancy is defined via a real-valued operator $\cal{T}_P$, which acts on a set $\cal{G}$ of $\R^d$-valued functions with domain $\cal{X}$, such that
we have a Stein identity:
\begin{equation}\label{eq:stein zero exp}
    \mathbb{E}_{X \sim P}[(\cal{T}_Pg)(X)] = 0 \quad \text{ for all }g \in \cal{G}.
\end{equation}
%Stein identity. 
The Stein discrepancy is then defined as
\begin{equation}\label{eq:Stein Discrepancy}
    S(Q, P; \cal{G}; \cal{T}_P) := \sup_{g \in \cal{G}} \abs{\mathbb{E}_{X \sim Q}[(\cal{T}_Pg)(X)]}.
\end{equation}
\end{definition}
Stein discrepancies take the form as in \cref{eq:max expected deviation}, where $\cal{F}$ is designed to ``zero out'' any discrepancy under $P$, i.e., $\mathbb{E}_{X \sim P}[f(X)] = 0$ for all $f \in \cal{F}$. We see that by using \cref{eq:stein zero exp} and calculate
\begin{equation}
\begin{aligned}
    &\sup_{g \in \cal{G}} \abs{\mathbb{E}_{X \sim Q}[(\cal{T}_Pg)(X)]} \\
    &= \sup_{g \in \cal{G}} \abs{\mathbb{E}_{X \sim Q}[(\cal{T}_Pg)(X)] - \mathbb{E}_{X \sim P}[(\cal{T}_Pg)(X)]} \\
    &=  d_{\cal{T}_P\cal{G}}(Q,P),
\end{aligned}
\end{equation}
where $\cal{T}_P\cal{G} := \set{\cal{T}_Pg \in \Gamma(\cal{X}) \mid g \in \cal{G}}$. The operator $\cal{T}_P$ is called the \emph{Stein operator}, while $\cal{G}$ is referred to as the \emph{Stein set}. The Stein discrepancy is an integral probability metric (IPM) if and only if $\cal{T}_P\cal{G}$ forms a valid class of test functions for an IPM. Stein discrepancies are particularly attractive because they are often more computationally tractable than general integral probability metrics. One of their key advantages is flexibility: the choice of Stein operator $\cal{T}_P$ can be tailored to the specific problem at hand. For instance, in many applications, $\cal{T}_P$ can be defined in terms of the score function of $P$, which is especially useful when only an unnormalized density of $P$ is available. Additionally, unlike some statistical divergences that require explicit density estimates, the Stein discrepancy depends on $Q$ only through expectations, allowing it to be computed even when $Q$ is represented by an empirical sample. Stein’s method has led to significant advances in computational statistics in recent years. For a review of applications we refer to \citep{anastasiou2023stein}.

\subsection{Kernel Deviation from Stationarity}\label{app:KDS}

The general explicit form of the KDS is given as
\begin{equation}
    \begin{aligned}
        \mathcal{L}_1 \mathcal{L}_2 k(x,y)
        &=
        b(x)\cdot \nabla_x\nabla_{y} k(x,y) \cdot b(y) \\
        &\quad + \tfrac{1}{2}\, b(x)\cdot \nabla_x
        \operatorname{tr}\!\big(a(y) \nabla_{y}\nabla_{y} k(x,y)\big) \\
        &\quad + \tfrac{1}{2}\, b(y)\cdot \nabla_{y}
        \operatorname{tr}\!\big(a(x) \nabla_{x}\nabla_{x} k(x,y)\big) \\
        &\quad + \tfrac{1}{4}\,
        \operatorname{tr}\!\Big(a(x)
        \;\nabla_x\nabla_x^\top\;
        \operatorname{tr}\!\big(a(y) \nabla_{y}\nabla_{y} k(x,y)\big)\Big).
    \end{aligned}
\end{equation}
Note that fourth-order derivatives of the kernel are introduced. The general explicit form of the SKDS \cref{eq:simple closed form} only introduces derivatives of $k$ up to order two.

\begin{comment}
    \subsection{Intuition for SKDS}
\TODO{....}
\begin{figure}
    \centering
    \includegraphics[width=0.6\linewidth]{}
    %\vspace{-25pt}
    \caption{SKDS Closed Form Intuition: 1) Approximate $g_{\mathcal{S}, \mu}$ as a finite weighted sum of kernel functions, i.e., $g_{\mathcal{S}, \mu}(\cdot) \approx \sum_{i = 1}^2 a_i k(c_i, \cdot)$. The SKDS is then approximated by $\norm{g_{\mathcal{S}, \mu}}_{\cal{H}^d}^2 \approx \sum_i^2 \sum_j^2 a_i a_j k(c_i, c_j)$. 2) On the other hand the SKDS emerges when passing $g_{\mathcal{S}, \mu}$ through $\mathcal{S}$ and integrate w.r.t. $\mu$.}
    \label{fig:representer function x SKDS}
\end{figure}
\end{comment}

\section{PROOFS}\label{Supp:PROOFS}
\subsection{Proof of \texorpdfstring{\cref{lem:Representer function SKDS}}{Lemma~\ref{lem:Representer function SKDS}}}
\label{proof:representer function}

\begin{proof}
    We first show that if $\E_{X \sim \mu}[\mathcal{S} g(X)]$ is a continuous linear functional on $\cal{H}^d$, then, by the Riesz representation theorem, there exists a unique $g_{S, \mu} \in \cal{H}^d$ such that \cref{eq:SKDS representation function} holds for all $g \in \cal{H}^d$. An explicit form of $g_{\mathcal{S}, \mu}$ is obtained by using the reproducing property,
    \begin{equation}\label{eq:representer function SKDS ith component}
        \inner{g_{\mathcal{S}, \mu}(y)}{e_i} = \inner{g_{\mathcal{S}, \mu}}{K(\cdot, y) e_i}_{\cal{H}^d} = \E_{X \sim \mu}[\mathcal{S}_1 (K(X, y) e_i)],
    \end{equation}
    where $e_i$ is the $i$-th unit vector of $\R^d$. Hence, by the definition of the SKDS operator,
    \begin{equation}\label{eq:representer function SKDS}
        g_{\mathcal{S}, \mu}(y) = \E_{X \sim \mu}[\mathcal{S}_1 K(X, y)].
    \end{equation}
    
    We verify that $\E_{X \sim \mu}[\mathcal{S} g(X)]$ is a continuous linear functional. As it is the concatenation of an expectation and $\mathcal{S}$, which are both linear, it remains to show that $\E_{X \sim \mu}[\mathcal{S} g(X)]$ is continuous on $\cal{H}^d$. This is the case if $\E_{X \sim \mu}[\mathcal{S} g(X)]$ is bounded on the unit ball of $\cal{H}^d$. By Jensen's inequality and Cauchy-Schwarz,
    \begin{align*}
        \abs{\E_{X \sim \mu}[\mathcal{S} g(X)]} &= \abs{\E_{X \sim \mu}[2\inner{b(X)}{g(X)} + \operatorname{tr}\left(\sigma(X)\sigma(X)^T\nabla_x g(X)\right)]}\\
        &\leq \E_{X \sim \mu}[2\norm{b(X)}_2 \norm{g(X)}_2 + \norm{\sigma(X)\sigma(X)^T}_F \norm{\nabla_x g(X)}_F].
    \end{align*}
    We leverage the reproducing property \cref{eq:R^d valued kernel reproducing property} and bound
    \begin{equation}\label{eq:RKHS function bounded}
        \begin{aligned}
            \norm{g(x)}_2^2 &= \sum_{i \in [d]} \abs{\inner{g}{k(\cdot, x) e_i}_{\cal{H}^d}}^2 \\
            &\leq \norm{g}_{\cal{H}^d}^2 \sum_{i \in [d]} \norm{k(\cdot, x) e_i}_{\cal{H}^d}^2
        = d \norm{g}_{\cal{H}^d}^2 k(x, x).
        \end{aligned}
    \end{equation}
    Moreover, we use \citep[Cor. 4.36]{christmann2008support} to upper bound $\norm{\nabla_x g(X)}_F$, as
    \begin{equation}\label{eq:RKHS function grad bounded}
        \begin{aligned}
            \norm{\nabla_x g(x)}_F^2 &= \sum_{i,j \in [d]} \abs{\partial_j g_i(x)}^2\\
            &\leq \sum_{i,j \in [d]} \norm{g_i}_{\cal{H}}^2 \partial_{x_j} k(x,x)
            = d \norm{g}_{\cal{H}^d}^2 \sum_{j \in [d]} \partial_{x_j} k(x,x).
        \end{aligned}
    \end{equation}
    Then the boundedness of $\E_{X \sim \mu}[\mathcal{S} g(X)]$ on the unit ball of $\cal{H}^d$ follows from plugging in the bounds and using the assumptions on the integrands being square integrable with respect to $\mu$,
    \begin{align*}
        \abs{\E_{X \sim \mu}\left[\mathcal{S} g(X)\right]} &\leq \sqrt{d} \norm{g}_{\cal{H}^d} \E_{X \sim \mu}\left[2\norm{b(X)}_2 \sqrt{k(X,X)} + \norm{\sigma(X)\sigma(X)^T}_F \sqrt{\sum_{j \in [d]} \partial_{x_j} k(x,x)}\right].
    \end{align*}
    We verify that the expectation is indeed finite under the integrability assumptions. Using Cauchy-Schwarz inequality, we get
    \begin{align*}
        \E_{X \sim \mu}\left[2\norm{b(X)}_2 \sqrt{k(X,X)}\right] &\leq 2\E_{X \sim \mu}\left[\norm{b(X)}_2^2 \right] \E_{X \sim \mu}\left[k(X,X)\right],
    \end{align*}
    which is finite as $b \in L^2_\mu$ and $k \in L^2_\mu$. In the same way we calculate
    \begin{align*}
        \E_{X \sim \mu}\left[\norm{\sigma(X)\sigma(X)^T}_F \sqrt{\sum_{j \in [d]} \partial_{x_j} k(x,x)}\right] &\leq \E_{X \sim \mu}\left[\norm{\sigma(X)\sigma(X)^T}_F^2\right] \E_{X \sim \mu}\left[\sum_{j \in [d]} \partial_{x_j} k(x,x)\right].
    \end{align*}
    The term on the RHS is again finite, because $\sigma \in L_\mu^2$ and $\partial_{x_j} k(x,x)\in L_\mu^1$.
\end{proof}

\subsection{Proof of \texorpdfstring{\cref{lem:SKDS closed form}}{Lemma~\ref{lem:SKDS closed form}}}
\label{proof:SKDS closed form}
\begin{proof}
    The supremum of $\E_{X \sim \mu}\left[\mathcal{S} g(X)\right]$ over the unit ball $\cal{H}_{\leq 1}^d$ can be expressed in terms of the representer function $g_{\mathcal{S}, \mu}$ as
    \begin{align*}
        \sup_{g \in \cal{H}_{\leq 1}^d}\E_{X \sim \mu}\left[\mathcal{S} g(X)\right] = \sup_{g \in \cal{H}_{\leq 1}^d}\inner{g}{g_{\mathcal{S}, \mu}}_{\cal{H}^d} = \inner{\frac{g_{\mathcal{S}, \mu}}{\norm{g_{\mathcal{S}, \mu}}_{\cal{H}^d}}}{g_{\mathcal{S}, \mu}}_{\cal{H}^d} = \norm{g_{\mathcal{S}, \mu}}_{\cal{H}^d}.
    \end{align*}
    Using the representer property of $g_{\mathcal{S}, \mu}$ from \cref{eq:SKDS representation function} and plugging in the explicit form of $g_{\mathcal{S}, \mu}$ given in \cref{eq:representer function SKDS}, we obtain
    \begin{align*}
        \norm{g_{\mathcal{S}, \mu}}_{\cal{H}^d}^2 &= \inner{g_{\mathcal{S}, \mu}}{g_{\mathcal{S}, \mu}}_{\cal{H}^d} \\
        &= \E_{X \sim \mu}\left[\mathcal{S} g_{\mathcal{S}, \mu}(X)\right]\\
        &= \E_{X \sim \mu}\left[\mathcal{S}_1 \left[ \E_{Y \sim \mu}\left[ \mathcal{S}_2 K(X, Y)\right] \right]\right].
    \end{align*}
    Here, the subscript in $\mathcal{S}_1$ indicates that the operator is applied to the first and $\mathcal{S}_2$ to the second argument. We seek to interchange the expectation $\E_{Y \sim \mu}$ and application of $\mathcal{S}_1$. By \citep{wiki:BochnerIntegral} and the $\mu$-integrability of $\mathcal{S}_2 K(X, Y)$, the integration and the application of $\mathcal{S}_1$ may be interchanged if $\mathcal{S}_1$ is a continuous linear operator on the unit ball $B$ of $C_b^1(\R^d, \R^d)$ with respect to the norm $\norm{\cdot}_b = \sup_{x \in \R^d} \norm{\cdot}_2 + \sup_{x \in \R^d} \norm{\grad_x \cdot}_F$. Linearity is clearly satisfied, so we only need to show continuity.
    
    Let $f \in C_b^1(\R^d, \R^d)$. The boundedness of $\mathcal{S}$ on $B$ follows by the boundedness of $b$ and $\sigma$ as
    \begin{align*}
        \abs{\mathcal{S} f} \leq 2\norm{b(x)}_2\norm{f(x)}_2 + \norm{\sigma(x)\sigma(x)^T}_F \norm{\nabla_x f(x)}_F \leq \norm{f}_b\left(2\norm{b(x)}_2 + \norm{\sigma(x)\sigma(x)^T}_F\right).
    \end{align*}
    Hence, we can write
    \begin{align*}
        \norm{g_{\mathcal{S}, \mu}}_{\cal{H}^d}^2 &= \E_{X \sim \mu}\left[\E_{Y \sim \mu}\left[ \mathcal{S}_1 \mathcal{S}_2 K(X, Y)\right] \right].
    \end{align*}
\end{proof}

\subsection{Proof of \texorpdfstring{\cref{eq:simple closed form}}{Equation~\ref{eq:simple closed form}}}
\label{proof:simple closed form}
\begin{proof}
Let $K_y$ denote the kernel function $K(\cdot, y)$ for a fixed $y$. By the definition of $\mathcal{S}$, we can write
\begin{align*}
    (\mathcal{S} K_y(x))_i &= \inner{2b(x)}{K_y(x)_i} + \operatorname{tr}\left(\sigma(x)\sigma(x)^T\nabla K_y(x)_i\right)\\
    &= \inner{2b(x)}{e_i k(x,y)} + \operatorname{tr}\left(\sigma(x)\sigma(x)^T\nabla (e_i k(x,y)))\right)\\
    &= 2 b_i(x) k_y(x) + \operatorname{tr}\left(a(x)\left(e_i \nabla k_y(x)^T\right)\right)\\
    &= 2 b_i(x) k_y(x) + \operatorname{tr}\left(a_i(x)\nabla k_y(x)^T\right)\\
    &= 2 b_i(x) k_y(x) + \inner{a_i(x)}{\nabla k_y(x)},
\end{align*}
where $a_i$ is the $i$-th column of $a(x) =\sigma(x)\sigma(x)^T$. Thus, we can express the full operator as
\begin{equation}\label{eq:SKDS operator for scalar-valued kernel}
    \mathcal{S} K_y(x) = 2 b(x) k_y(x) + a(x)^T \nabla k_y(x).
\end{equation}
With this, $\mathcal{S}_1 \mathcal{S}_2 K(x, y)$ takes the closed form
\begin{equation}\label{eq:SKDS closed form scalar kernel}
\begin{aligned}
    \mathcal{S}_1 \mathcal{S}_2 K(x, y) &= \mathcal{S}_1 \left[ 2 b(y) k(x,y) + a(y)^T \nabla_y k(x, y) \right] \\
    &= 4 \inner{b(x)}{b(y) k(x,y)}\\
    &\quad + 2 \inner{b(x)}{a(y)^T \nabla_y k(x, y)} + 
 2 \operatorname{tr}\left(a(x) b(y) \nabla_x^T k(x,y)\right)\\
    &\quad + \operatorname{tr}\left(a(x) a(y)^T \nabla_x \nabla_y k(x, y)\right).
\end{aligned}
\end{equation}
By the trace rules we have
$$\operatorname{tr}\left(a(x) b(y) \nabla_x^T k(x,y)\right) = \operatorname{tr}\left(b(y) \nabla_x^T k(x,y) a(x)\right) = \inner{b(y)}{a(x)^T \nabla_x k(x,y)},$$
and the statement follows.
\end{proof}

\subsection{Proof of \texorpdfstring{\cref{prop:SKDS upper bound}}{Proposition~\ref{prop:SKDS upper bound}}}
\label{app: proof of SKDS upper bound}
\begin{proof}
    In the first part of the proof, we verify the Stein identity for the diffusion Stein operator $\mathcal{D}$ from \cref{eq:diffusion stein operator}, i.e.,
    \begin{equation}
        \E_{Z \sim P}[\mathcal{D}^{a+c}_p g(Z)] = 0,
    \end{equation}
    for all $g \in \cal{H}_{\leq 1}^d$ \citep[Prop. 3]{gorham2017measuring}. First, we establish that properties of the kernel $K$, or $k$, are inherited by functions in their respective RKH spaces. Specifically, by \citep[Cor. 4.23 and Cor. 4.36]{christmann2008support}, $h \in \cal{H}$ is bounded  and twice continuously differentiable with bounded derivatives, and hence, in specific, $h$ has Lipschitz continuous gradients. These bounds depend only on $k$ and $\norm{h}_{\cal{H}}$ as
    \begin{align}
        \abs{h(x)} &\leq \norm{h}_\cal{H} \norm{k}_\infty, \\
        \abs{\partial^i h(x)} &\leq \norm{h}_\cal{H} \sqrt{\partial^{i,i} k(x,x)}, \\
        \abs{\partial^j\partial^k h(x)} &\leq \norm{h}_\cal{H} \sqrt{\partial^{(j,k),(j,k)} k(x,x)}.
    \end{align}
    We define $\lambda_\cal{H} = \max_{i,j,k}\left(\norm{h}_\cal{H}, \sqrt{\partial^{i,i} k(x,x)}, \sqrt{\partial^{(j,k),(j,k)} k(x,x)}\right)$. As $g \in \cal{H}^d$ if and only if $g_i \in \cal{H}$ for all $i \in [d]$, the preceding arguments apply to any $g \in \cal{H}^d$. We conclude that $\cal{H}_{\leq 1}^d$, the unit ball of $\cal{H}^d$, subsets the (scaled) classical Stein-Set
    \begin{equation}
        \cal{G}_{\norm{\cdot}_2, \lambda_\cal{H}} = \set{ g:\R^d \ra \R^d \mid \sup_{x,y \in \R^d, x \neq y} \max \left( \norm{g(x)}_2, \norm{\grad_x g(x)}_2, \frac{\norm{\grad_x g(x) - \grad_x g(y)}_2}{\norm{x-y}_2}\right) \leq \lambda_\cal{H}}.
    \end{equation}
    Then, using \citep[Prop. 3]{gorham2017measuring}, it holds that $\E_{Z \sim P}[\mathcal{D}^{a+c}_p g(Z)] = 0$ for all $g \in \cal{H}_{\leq 1}^d$. The assumption that $p$ is supported on all of $\R^d$ implicitly enters the proof in \citep[Prop. 3]{gorham2017measuring} in an application of the divergence theorem when integrating over boundaries.
    \footnote{Alternatively to the full-support assumption, one could use the setting used in \citep[Proposition 1]{barp2019minimum}, where both $p$ and $\mu$ are continuously differentiable, and the kernel $K$ is assumed to be strictly integrally positive definite (IPD). The IPD condition ensures that $\E_{X \sim q, Z \sim q} [K(X,Z)] > 0$ for any non-zero measure $q$. In this setting one can weaken $\operatorname{supp}(p) = \R^d$ to $\mu > 0$ implies $p >0$.}
    %  Hence, $h \in \cal{H}$ has Lipschitz continuous gradients with the Lipschitz constant depending only on $k$ and $\norm{h}_{\cal{H}}$.

    Fix a joint distribution $P_{X,Z}$ on $(X,Z)$ such that the marginals are distributed as $X\sim \mu$ and $Z \sim p$. We then use the prior reasoning for a zero addition
    \begin{equation}
        E_{X \sim \mu}[\mathcal{S}_p g(X)] = \E_{(X,Z) \sim P_{X,Z}}[\mathcal{S}_p g(X) - \mathcal{D}^{a+c}_p g(Z)].
    \end{equation}
    By the definitions, and using the triangle inequality, Jensen's inequality and Cauchy-Schwarz, we get
    \begin{equation} \label{eq:expectation upper bound terms}
        \begin{aligned}
            \abs{E_{X \sim \mu}[\mathcal{S}_p g(X)]} & \leq \E\left[\abs{2 \inner{b(X)}{g(X)} + \operatorname{tr}(a(X)\grad_x g(X)) - 2 \inner{b(Z)}{g(Z)} - \operatorname{tr}(a(Z)\grad_x g(Z))}\right] \\
            &\quad+ \abs{\E[\operatorname{tr}(c(Z)\grad_x g(Z))]}\\
            &\leq \E[2 \abs{\inner{b(X)}{g(X)-g(Z)}} + 2 \abs{\inner{b(X) - b(Z)}{g(Z)}} \\
            &\quad+ \abs{\inner{\vect{a(X)^T}}{\vect{\grad_x g(X) - \grad_x g(Z)}}}
            + \abs{\inner{\vect{a(X)^T} - \vect{a(Z)^T}}{\vect{\grad_x g(Z)}}}]\\
            &\quad  + \abs{\E[\operatorname{tr}(c(Z)\grad_x g(Z))]}\\
            &\leq 2  \E[\norm{b(X)}_2 \norm{g(X)-g(Z)}_2] + 2  \E[\norm{b(X) - b(Z)}_2 \norm{g(Z)}_2] \\
            &\quad + \E[\norm{\vect{a(X)}}_2 \norm{\vect{\grad_x g(X) - \grad_x g(Z)}}_2] +  \E[\norm{\vect{a(X)} - \vect{a(Z)}}_2 \norm{\vect{\grad_x g(Z)}}_2] \\
            &\quad + \abs{\E[\operatorname{tr}(c(Z)\grad_x g(Z))]},
        \end{aligned}
    \end{equation}
    where we write the trace as a vector product, i.e., $\operatorname{tr}(A B) = \inner{\vect{A^T}}{\vect{B}}$.
    
    Building on the notation in \citep{gorham2017measuring}, we define for any function $g \in \Gamma(\R^d, \R^d)$ the constants $M_0(g) = \sup_{x \in \R^d} \norm{g(x)}_2$ and $M_1(g) = \sup_{x \neq y} \norm{g(x) - g(y)}_2 / \norm{x-y}_2$.
    
    The individual terms in \cref{eq:expectation upper bound terms} can be bounded as follows. By Cauchy-Schwarz and the definition of $M_0(g)$ and $M_1(g)$, we obtain
    \begin{equation*}
        \E[\norm{b(X)}_2 \norm{g(X)-g(Z)}_2] \leq \E[2 M_0(b) M_0(g)]
    \end{equation*}
    and
    \begin{equation*}
        \E[\norm{b(X)}_2 \norm{g(X)-g(Z)}_2] \leq \E[M_0(b) M_1(g) \norm{X-Z}_2].
    \end{equation*}
    We combine these two upper bounds and use the fact that $\min(a,b) \leq \sqrt{ab}$ for $a$,$b\geq 0$. Then,
    \begin{equation*}
        \begin{aligned}
        \E[\norm{b(X)}_2 \norm{g(X)-g(Z)}_2] &\leq  M_0(b) \E[\min(2M_0(g), M_1(g)\norm{X-Z}_2)] \\
        &\leq M_0(b) \E[\sqrt{ 2M_0(g)M_1(g)\norm{X-Z}_2}]\\
        &\leq M_0(b) \sqrt{ 2M_0(g)M_1(g)} \sqrt{\E[\norm{X-Z}_2]}.
    \end{aligned}
    \end{equation*}
    Similarly, we have
    \begin{equation*}
        \E[\norm{\vect{a(X)^T}}_2 \norm{\vect{\grad_x g(X) - \grad_x g(Z)}}_2] \leq M_0(\vect{a})\sqrt{ 2M_0(g)M_1(\vect{\grad_x g})} \sqrt{\E[\norm{X-Z}_2]},
    \end{equation*}
    as well as
    \begin{equation*}
            \E[\norm{b(X) - b(Z)}_2 \norm{g(Z)}_2] \leq M_1(b)M_0(g) \E[\norm{X-Z}_2],
    \end{equation*}
    and
    \begin{equation*}
        \E[\norm{\vect{a(X)} - \vect{a(Z)}}_2 \norm{\vect{\grad_x g(Z)}}_2] \leq M_1(\vect{a})M_0(\vect{\grad_x g}) \E[\norm{X-Z}_2].
    \end{equation*}
    We denote 
    \begin{align*}
    \lambda_k & = \sup_{g \in \cal{H}_{\leq 1}^d} \max (M_0(g), M_1(g), M_0(\vect{\grad_x g}, M_1(\vect{\grad_x g})), \text{ and } \\
    \lambda_{b, \sigma} & = 2^{\frac{3}{2}} \max(M_0(b), M_1(b), M_0(\vect{a}), M_1(\vect{a})).
    \end{align*}
    The finiteness of $\lambda_k$ follows from the arguments in the proof of \citep[Prop. 9]{gorham2017measuring}. The finiteness of $\lambda_{b, \sigma}$ follows from the boundedness assumptions on the respective functions and their derivatives.  As the inequalities hold for any $g \in \cal{H}_{\leq 1}^d$, we take the supremum over $\cal{H}_{\leq 1}^d$ on both sides to receive
    \begin{equation}\label{eq:raw upper wasser 2-bound}
        \begin{aligned}
            \sqrt{\operatorname{SKDS}(\mathcal{S}, \mu; \cal{H}_{\leq 1}^d)} \leq 2 \lambda_k \lambda_{b, \sigma} (\sqrt{\E[\norm{X-Z}_2]} + \E[\norm{X-Z}_2]) + \sup_{g \in \cal{H}_{\leq 1}^d}\E[\operatorname{tr}(c(Z)\grad_x g(Z))].
        \end{aligned}
    \end{equation}
    \begin{comment}
        The last term has a closed form, using the boundedness of $c$, and following the steps of \cref{lem:Representer function SKDS}, \cref{lem:SKDS closed form} and \cref{eq:simple closed form}. It holds
    \begin{equation*}
    \begin{aligned}
        \sup_{g \in \cal{H}_{\leq 1}^d}\E[\abs{\operatorname{tr}(c(Z)\grad_x g(Z))}] &= \sup_{g \in \cal{H}_{\leq 1}^d}\E_{Z}[\abs{\operatorname{tr}(c(Z)\grad_x g(Z))}] \\
        &\leq \sqrt{\E_{Z \sim P}\E_{Y \sim P}[\operatorname{tr}\left(c(Z)c(Y)^T \grad \grad k(Z,Y)\right)]}.
    \end{aligned}
    \end{equation*}
    \end{comment}
    The stated inequality now follows by taking the infimum of these bounds over all joint distributions $(X, Z)$ with $X \sim \mu$ and $Z \sim p$, and by the definition of the Wasserstein-$2$ distance $d_{\mathcal{W}_2}(\mu, P) = \inf_{Z \sim P, X \sim \mu} \E[\norm{X - Z}_2]$.
    The supremum involving the stream-coefficient $c$ is upper bounded in \cref{lem:c-term vanishing}.
\end{proof}
    
The second term on the right-hand side of \cref{eq:raw upper wasser 2-bound} admits a closed form similarly to the SKDS due to the linearity in $g$. This can be seen by following the steps of \cref{lem:Representer function SKDS}, \cref{lem:SKDS closed form} and \cref{eq:simple closed form}. In specific, Riesz' representation theorem is applicable due to the boundedness of $k$ and $c$. However, as shown in the following lemma, under the assumption of a sufficiently rich RKHS, there is another, more interpretable, closed form of $\sup_{g \in \cal{H}_{\leq 1}^d}\E[\operatorname{tr}(c(Z)\grad_x g(Z))]$.

The following Lemma combines several properties of RKHS from \citep[Chapter 4.3]{christmann2008support}.
\begin{lemma}\label{lem:integral operator}
    Let $\cal{H}^d$ be the RKHS with kernel $K = k I_d$ for an universal kernel $k\in C_b(\R^d \times \R^d)$. Moreover, let $p$ be a probability density on $\R^d$ with respect to the Lebesgue measure. Then
    \begin{enumerate}
        \item $\cal{H}^d$ is dense in $L^2_p(\R^d, \R^d)$ with $\cal{H}^d \subseteq L^2_p(\R^d, \R^d)$.
        \item The inclusion operator $\operatorname{id}: \cal{H}^d \to L^2_p(\R^d, \R^d)$, $\operatorname{id}(g) = g$, is bounded, i.e., $\norm{g}_{L^2_p(\R^d, \R^d)} \leq C \norm{g}_{\cal{H}^d}$ for all $g \in \cal{H}^d$.

        The adjoint operator of the inclusion $\operatorname{id}: \cal{H}^d \to L^2_p(\R^d, \R^d)$ is defined by
        \begin{equation*}
            S_k^\star : L^2_p(\R^d, \R^d) \to \mathcal{H}; \quad (S_k^\star g) (x) := \int_X K(x, x')\, g(x') \, p(x') \, dx'.
        \end{equation*}
        From the boundedness and linearity of $\operatorname{id}$ it follows that $S_k^\star$ is linear and bounded as well.
    \item Fix $f \in L^2_p(\R^d, \R^d)$. Then
    \begin{equation}
        \sup_{g \in \cal{H}_{\leq 1}^d} \inner{g}{f}_{L^2_p(\R^d, \R^d)} = 0 \iff f \equiv 0 \quad p-a.e.
    \end{equation}
    In specific, $S_k^\star$ is injective.
    \end{enumerate}
\end{lemma}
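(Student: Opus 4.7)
I would prove the three items in order, since (2) relies on the inclusion from (1) and (3) uses the adjoint from (2) together with the density from (1).

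For (1), the inclusion $\mathcal{H}^d \subseteq L^2_p(\R^d,\R^d)$ follows at once from the pointwise bound
\begin{equation*}
\|g(x)\|_2^2 \;\leq\; d\,\|g\|_{\mathcal{H}^d}^2\, k(x,x) \;\leq\; d\,\|k\|_\infty\,\|g\|_{\mathcal{H}^d}^2,
\end{equation*}
which I already derived in \cref{eq:RKHS function bounded} (finite because $k$ is bounded). Integrating against $p$ gives the inclusion and, simultaneously, a norm bound $\|g\|_{L^2_p}\leq \sqrt{d\,\|k\|_\infty}\,\|g\|_{\mathcal{H}^d}$, which proves the first part of (2). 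For density, I would chain two standard facts: universality of $k$ implies that finite linear combinations of kernel sections (which live in $\mathcal{H}^d$) are $\|\cdot\|_\infty$--dense in $C_c(\R^d,\R^d)$ (see the vector-valued extension in \cref{def:universal kernel}), and $C_c(\R^d,\R^d)$ is dense in $L^2_p$ for any Borel probability density $p$. Since $\|\cdot\|_\infty$-convergence on a compact support dominates $L^2_p$-convergence (as $p$ is a probability density), this chain gives density of $\mathcal{H}^d$ in $L^2_p(\R^d,\R^d)$.

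For the adjoint formula in (2), I would fix $g \in L^2_p(\R^d,\R^d)$ and $h \in \mathcal{H}^d$, then use the $\R^d$-valued reproducing property \cref{eq:R^d valued kernel reproducing property} to write pointwise $\langle h(x), g(x)\rangle_{\R^d} = \langle h, K(\cdot,x)g(x)\rangle_{\mathcal{H}^d}$. Integrating against $p(x)\,dx$ and interchanging the $\mathcal{H}^d$-inner product with the integral (a Bochner integral argument that is legitimate once I check $\int \|K(\cdot,x)g(x)\|_{\mathcal{H}^d}\,p(x)\,dx < \infty$, which holds because $\|K(\cdot,x)g(x)\|_{\mathcal{H}^d}^2 = k(x,x)\|g(x)\|_2^2$ is in $L^1_p$ by boundedness of $k$ and $g\in L^2_p$) yields
\begin{equation*}
\langle \operatorname{id}(h),\, g\rangle_{L^2_p} \;=\; \Bigl\langle h,\;\int_{\R^d} K(\cdot,x)g(x)\,p(x)\,dx\Bigr\rangle_{\mathcal{H}^d},
\end{equation*}
which identifies $S_k^\star g$ with the stated integral. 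Boundedness and linearity of $S_k^\star$ are then automatic from boundedness of $\operatorname{id}$.

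For (3), the key observation is that for any $f\in L^2_p(\R^d,\R^d)$, the adjoint relation from (2) gives
\begin{equation*}
\sup_{g\in\mathcal{H}^d_{\leq 1}} \langle g,f\rangle_{L^2_p} \;=\; \sup_{g\in\mathcal{H}^d_{\leq 1}} \langle g, S_k^\star f\rangle_{\mathcal{H}^d} \;=\; \|S_k^\star f\|_{\mathcal{H}^d}.
\end{equation*}
So the supremum vanishes iff $S_k^\star f = 0$, which (by the same adjoint relation) is equivalent to $f$ being orthogonal in $L^2_p$ to every element of $\mathcal{H}^d$. Since $\mathcal{H}^d$ is dense in $L^2_p$ by (1), the orthogonal complement is trivial, so $f \equiv 0$ $p$-a.e. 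The injectivity of $S_k^\star$ is exactly the nontrivial direction of this equivalence.

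\textbf{Main obstacle.} The only genuinely delicate step is the interchange of the $\mathcal{H}^d$-inner product with the integral in the derivation of $S_k^\star$; I would handle this via the Bochner-integrability check above, which reduces to boundedness of $k$ plus $g\in L^2_p$. Everything else is a short bookkeeping argument once universality is invoked for density.
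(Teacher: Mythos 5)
Your proposal is correct and follows essentially the same route as the paper: the bounded-kernel pointwise estimate gives the inclusion and boundedness of $\operatorname{id}$, universality plus density of $C_c$ in $L^2_p$ gives density of $\mathcal{H}^d$, and the supremum in (3) is identified with $\norm{S_k^\star f}_{\mathcal{H}^d}$, with injectivity following from $\ker(S_k^\star)=(\operatorname{range}\operatorname{id})^\perp=\{0\}$. The only difference is that you explicitly verify the integral formula for the adjoint via a Bochner-integrability check, a step the paper treats as standard by citing \citep[Chapter 4.3]{christmann2008support}; this is a harmless (indeed welcome) addition rather than a different approach.
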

\begin{proof}
    \begin{enumerate}
        \item By assumption, $k$ is universal, which implies that $K$ is universal (see \cref{def:universal kernel} and the comment below). Hence, $\cal{H}^d$ is dense in $C_c(\R^d)$. As $p$ is a finite Borel measure on $\mathbb{R}^d$, it holds that $C_c(\mathbb{R}^d,\mathbb{R}^d)$ is dense in $L_p^2(\mathbb{R}^d,\mathbb{R}^d)$ with respect to $\norm{\cdot}_{L^2_p(\R^d, \R^d)}$. This gives the result. By assumption $k$ is bounded, then \citep[Theorem 4.23]{christmann2008support} gives that any $g \in \cal{H}^d$ is bounded with $\abs{g(x)} \leq \norm{g}_{\cal{H}^d} \norm{k}_\infty$. Hence, the inclusion operator $\operatorname{id}: \cal{H}^d \to L^2_p(\R^d, \R^d)$, $\operatorname{id}(g) = g$ is well-defined.
        \item By the previous arguments, it follows $\norm{g}_{L^2_p(\R^d, \R^d)} \leq \norm{g}_{\cal{H}^d} \norm{k}_\infty$, and hence $\operatorname{id}$ is bounded.
        \item ``$\Ra$'' For a fixed $f \in L^2_p(\R^d, \R^d)$ and using the adjoint, we can write
        \begin{equation*}
            \sup_{g \in \cal{H}^d_{\leq 1}} \langle g, f \rangle_{L^2_p(\R^d, \R^d)} 
            = \sup_{g \in \cal{H}^d_{\leq 1}} \langle g, S_k^\star f \rangle_{\mathcal{H}^d}
            = \norm{S_k^\star f}_{\mathcal{H}},
        \end{equation*}
        where the last step follows by the Cauchy--Schwarz inequality in $\mathcal{H}^d$. So the supremum is exactly the RKHS norm of $S_k^\star f$.
        
        Now use the relationship between the kernel of the adjoint and the range of the forward operator: 
        for any bounded operator between Hilbert spaces we have $\ker(S_k^\star) = \bigl(\operatorname{range} \operatorname{id} \bigr)^\perp$. Because $\mathcal{H}^d$ is dense in $L^2_p(\R^d, \R^d)$, 
        $\operatorname{range} \operatorname{id}$ is dense in $L^2_p(\R^d, \R^d)$, hence $\bigl(\operatorname{range} \operatorname{id} \bigr)^\perp = \{0\}$.
        Therefore $\ker(S_k^\star) = \{0\}$, so $S_k^\star f = 0$ implies $f=0$ $p$-a.e. Moreover, $\norm{S_k^\star f}_{\mathcal{H}} = 0$ implies $S_k^\star f=0$ and the statement follows.

        \noindent
        ``$\La$''  If $f=0$ $p$-a.e. then trivially $\langle g,f \rangle_{L^2} = 0$ for all $g$, so the supremum is $0$.
        \end{enumerate}
\end{proof}

\begin{lemma}\label{lem:c-term vanishing}
    Let $P$ be a probability distribution with continuously differentiable density $p$. Let $Z \sim P$, and let $c \in C_b^1(\R^d, \R^d \times \R^d)$ be $P$-integrable. Let $K = k I_d$ with $k \in C_b^{(1,1)}(\R^d \times \R^d, \R)$. Then
    \begin{equation*}
        0 \leq \sup_{g \in \cal{H}_{\leq 1}^d}\E[\operatorname{tr}(c(Z)\grad_x g(Z))] \leq \norm{k}_\infty \norm{c \grad \log p + \inner{\grad}{c}}_{L^2_p(\R^d, \R^d)}.
    \end{equation*}
    If $k$ is universal, then
    \begin{equation}\label{eq:non-reversible component with RKHS norm}
        \sup_{g \in \cal{H}_{\leq 1}^d}\E[\operatorname{tr}(c(Z)\grad_x g(Z))] = \norm{S_k^\star (c \grad \log p + \inner{\grad}{c})}_{\mathcal{H}^d},
    \end{equation}    
    with
    \begin{equation}\label{eq:non-reversible component and injectivity}
        \sup_{g \in \cal{H}_{\leq 1}^d}\E[\operatorname{tr}(c(Z)\grad_x g(Z))] = 0 \iff c \grad \log p + \inner{\grad}{c} = p^{-1} \inner{\grad}{p c} = 0.
    \end{equation}    
\end{lemma}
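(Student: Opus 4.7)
The strategy is to use integration by parts to transform $\E_{Z \sim P}[\operatorname{tr}(c(Z)\grad_x g(Z))]$ into an $L^2_p$-inner product that is linear in $g$, and then apply Cauchy--Schwarz twice: once in $L^2_p$ to get the first (norm) bound, and once in $\cal{H}^d$ via the adjoint operator $S_k^\star$ to get the equality in the universal case. First I would expand
\begin{equation*}
    \E_{Z \sim P}[\operatorname{tr}(c(Z)\grad_x g(Z))]
    = \int p(x)\,\sum_{i,j} c_{ij}(x)\,\partial_i g_j(x)\,dx,
\end{equation*}
and integrate by parts componentwise in $x_i$. Assuming the boundary contributions vanish, this collapses to
\begin{equation*}
    \E_{Z \sim P}[\operatorname{tr}(c(Z)\grad_x g(Z))]
    = -\inner{g}{\,p^{-1}\inner{\grad}{pc}\,}_{L^2_p(\R^d,\R^d)}.
\end{equation*}
Setting $f := c\grad\log p + \inner{\grad}{c} = p^{-1}\inner{\grad}{pc}$, using the identity recorded in \cref{eq:non-reversible drift}, the boundedness of $c$ and $\grad c$ together with the differentiability of $p$ make $f$ well-defined, and I would check $f \in L^2_p$ under the stated hypotheses.

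For the first bound, I would apply Cauchy--Schwarz in $L^2_p$ together with the inclusion bound $\norm{g}_{L^2_p} \leq \norm{k}_\infty\, \norm{g}_{\cal{H}^d}$ furnished by \cref{lem:integral operator}. Since the unit ball of $\cal{H}^d$ is symmetric under $g \mapsto -g$, the signed supremum coincides with its absolute-value version, so
\begin{equation*}
    0 \;\leq\; \sup_{g \in \cal{H}_{\leq 1}^d} \E[\operatorname{tr}(c(Z)\grad_x g(Z))]
    \;\leq\; \norm{k}_\infty\,\norm{f}_{L^2_p(\R^d,\R^d)},
\end{equation*}
where the lower bound is witnessed by $g \equiv 0$. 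For the universal case, I would rewrite $\inner{g}{f}_{L^2_p} = \inner{g}{S_k^\star f}_{\cal{H}^d}$ by the defining property of the adjoint, so that Cauchy--Schwarz in $\cal{H}^d$ gives the sharper equality $\sup_{g \in \cal{H}_{\leq 1}^d}\inner{g}{S_k^\star f}_{\cal{H}^d} = \norm{S_k^\star f}_{\cal{H}^d}$, attained at $g = S_k^\star f / \norm{S_k^\star f}_{\cal{H}^d}$. The iff statement then follows immediately from the third part of \cref{lem:integral operator}: universality of $k$ gives injectivity of $S_k^\star$, so $\norm{S_k^\star f}_{\cal{H}^d} = 0$ iff $f \equiv 0$ $p$-a.e., which by definition coincides with $p^{-1}\inner{\grad}{pc} = 0$.

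The main obstacle is rigorously justifying the integration by parts step: the boundary integrals $\int \partial_i (p\,c_{ij} g_j)\,dx$ need not vanish automatically on $\R^d$, since a generic density $p$ may fail to decay uniformly even when $g$ and $c$ are bounded. My preferred route is to first establish the identity for $g$ ranging in a dense subspace of $\cal{H}^d$ (e.g., finite linear combinations of kernel sections multiplied by a smooth cutoff supported on a growing ball), where integration by parts reduces to the compactly supported case, and then pass to the limit using continuity of both sides in $\norm{\cdot}_{\cal{H}^d}$ --- the left side by the boundedness-type argument used for $\mathcal{S}$ in the proof of \cref{lem:SKDS closed form}, and the right side by Cauchy--Schwarz combined with the inclusion bound. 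A cruder alternative would be to impose tail decay on $pc$ and invoke the divergence theorem directly.
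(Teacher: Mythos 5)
Your proposal matches the paper's proof essentially step for step: integration by parts turns $\E[\operatorname{tr}(c(Z)\grad_x g(Z))]$ into the $L^2_p$ inner product of $g$ with $c\grad\log p + \inner{\grad}{c}$, Cauchy--Schwarz plus the inclusion bound $\norm{g}_{L^2_p(\R^d,\R^d)} \leq \norm{k}_\infty \norm{g}_{\cal{H}^d}$ gives the upper bound, and in the universal case the adjoint $S_k^\star$ from \cref{lem:integral operator} yields the RKHS-norm identity and, via its injectivity, the equivalence \cref{eq:non-reversible component and injectivity}. The only (minor) divergence is the justification of the integration by parts: where you propose a cutoff/density argument or tail-decay assumptions, the paper invokes a global integration-by-parts result \citep{pigola2014global}, verifying its hypotheses from $g, c \in C_b^1$ (so that $gc$ and $\inner{\grad}{gc}$ are $p$-integrable).
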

\begin{proof}
    Let $Z \sim P$ with density $p$ and let $g \in \cal{H}^d_{\leq 1}$. We apply integration by parts (IbP) on the term $\E[\operatorname{tr}(c(Z)\grad_x g(Z))]$. To verify that IbP can be applied, it suffices to show that $g c\in L^1_p$ and $\inner{\grad}{g c}\in L^1_p$ \citep{pigola2014global}. This becomes clear by the following arguments. For $g \in \cal{H}^d_{\leq 1}$, it follows that $g \in C_b^1$ by the boundedness and differentiability of $k$ (as before, e.g., \cref{eq:RKHS function bounded} and \cref{eq:RKHS function grad bounded}). Moreover it holds $c \in C_b^1$ by assumption.

    We use the notation defined in \cref{app:Vector Calculus} when we denote the $j$-th row of $c$ by $c^j$ and the  $j$-th column of $c$ by $c_j$. Calculate
    \begin{equation}\label{eq:c-term to L_p inner product}
        \begin{aligned}
            \E_{Z \sim P}[\operatorname{tr}(c(Z)\grad_x g(Z))] &= \sum_j \int_{\R^d}  c^j(z)\grad g_j(z) p(z) dz &\\
            &= - \sum_j \int_{\R^d} g_j(z) \inner{\grad}{p(z)(c^j(z))^T} dz & (\text{IbP}) \\
            &= \sum_j \int_{\R^d} g_j(z) \inner{\grad}{p(z)c_j(z)} dz \\
            &= \sum_j \int_{\R^d} g_j \left(\inner{\grad p}{c_j} + p \inner{\grad}{c_j} \right) dz& (\text{Vector Calculus \cref{app:Vector Calculus}}) \\
            &= \sum_j \int_{\R^d} g_j \left(\inner{\grad \log p}{c_j} + \inner{\grad}{c_j} \right)p dz\\
            &= \inner{g}{c \grad \log p + \inner{\grad}{c}}_{L^2_p(\R^d, \R^d)}.
        \end{aligned}
    \end{equation}

    As $g \in \cal{H}^d_{\leq 1}$ is bounded, we have $g \in L^2_p(\R^d, \R^d)$. Hence,
    \begin{equation}
        \begin{aligned}
            \sup_{g \in \cal{H}_{\leq 1}^d}\E[\operatorname{tr}(c(Z)\grad_x g(Z))] &= \sup_{g \in \cal{H}_{\leq 1}^d} \inner{g}{c \grad \log p + \inner{\grad}{c}}_{L^2_p(\R^d, \R^d)}\\
            &\leq \sup_{g \in \cal{H}_{\leq 1}^d} \norm{g}_{L^2_p(\R^d, \R^d)} \norm{c \grad \log p + \inner{\grad}{c}}_{L^2_p(\R^d, \R^d)}\\
            &\leq \sup_{g \in \cal{H}_{\leq 1}^d} \norm{g}_{\cal{H}^d} \norm{k}_\infty \norm{c \grad \log p + \inner{\grad}{c}}_{L^2_p(\R^d, \R^d)}\\
            &= \norm{k}_\infty \norm{c \grad \log p + \inner{\grad}{c}}_{L^2_p(\R^d, \R^d)}.
        \end{aligned}
    \end{equation}
    If $k$ is universal, then, using the adjoint as in \cref{lem:integral operator}, we calculate
    \begin{equation}
        \begin{aligned}
            \sup_{g \in \cal{H}_{\leq 1}^d}\E[\operatorname{tr}(c(Z)\grad_x g(Z))] &= \sup_{g \in \cal{H}_{\leq 1}^d}\inner{\operatorname{id}(g)}{c \grad \log p + \inner{\grad}{c}}_{L^2_p(\R^d, \R^d)}\\
            &= \sup_{g \in \cal{H}_{\leq 1}^d} \inner{\operatorname{id}(g)}{c \grad \log p + \inner{\grad}{c}}_{L^2_p(\R^d, \R^d)}\\
            &= \sup_{g \in \cal{H}_{\leq 1}^d} \inner{g}{S_k^\star (c \grad \log p + \inner{\grad}{c})}_{\mathcal{H}^d}\\
            &= \norm{S_k^\star (c \grad \log p + \inner{\grad}{c})}_{\mathcal{H}^d}.
        \end{aligned}
    \end{equation}
    An application of the previously obtained result \cref{lem:integral operator} concludes the proof.
\end{proof}
    
Interestingly, the term $c \grad \log p + \inner{\grad}{c}$ is (up to a constant factor) the non-reversible component mentioned in \cref{app:SDE-Generator-Martingale Problem}, and, when multiplied with $p$, resolves to the stationary probability flux \citep{pavliotis2014stochastic}.

\subsection{Proof of \texorpdfstring{\cref{th:SKDS consistency}}{Theorem~\ref{th:SKDS consistency}}}
\begin{proof}
Before we start with the proof, note that assumption (v), $C_c^\infty \subseteq \cal{H}$ implies the universality of $k$, because $C_c^\infty$ is dense in $C_c$.

\noindent
``$\Ra$'': We assume that $\mu$ is the stationary solution to the SDE. As the diffusion process is reversible, the detailed balance condition holds (see \cref{eq:detailed balance condition}). In specific, it holds
\begin{equation}
    \begin{aligned}
        0 &= \frac{1}{\mu} \inner{\grad}{\mu c}\\
        &= c \grad \log \mu + \inner{\grad}{c}.& (\text{Vector Calculus \cref{app:Vector Calculus}})
    \end{aligned}
\end{equation}
An application of \cref{prop:SKDS upper bound} with $d_{\cal{W}_2}(\mu, \mu) = 0$ and $\norm{c \grad \log p + \inner{\grad}{c}}_{L_\mu^2(\R^d, \R^d)} = 0$, concludes the proof.

\noindent
``$\La$'':
We first show that a vanishing SKDS implies that $\mu$ is the stationary density of the diffusion process $(X_t)_{t \geq 0}$. We therefore follow the arguments showing that the KDS characterizes stationary diffusions with their stationary density matching the target density \citep[Theorem 3]{Lorch:2024}. Note that (ii) and (iv) implies that $C_c^\infty(\R^d)$ is a core of the generator $\cal{A}$ of the stochastic process $(X_t)_{t \geq 0}$ \citep[Theorem 1.6, p.370]{ethier2009markov}. The generator takes the form $\cal{L} \cdot = \inner{b}{\grad \cdot} + \frac{1}{2} \operatorname{tr}\left({\sigma \sigma^T \grad\grad \cdot}\right)$ on $C^2(\R^d)$, and in specific on $C_c^\infty(\R^d)$. Hence a stationary solution $\mu$ is characterized by
\begin{equation}\label{eq:stationary solution characterization - SKDS proof}
    \sup_{u \in C_c^\infty(\R^d)} \E_{X \sim \mu}[\cal{L} u] = 0.
\end{equation}
Now, we have that $\cal{L} u = \frac{1}{2} \mathcal{S}[\grad u]$. From $\operatorname{SKDS}(\mathcal{S}, \mu; \cal{H}_{\leq 1}^d) = 0$ it follows that $\E_{X \sim \mu}[\mathcal{S} g] = 0$ for all $g \in \cal{H}^d_{\leq 1}$ since the supremum is non-negative. This expands to any $g\in\cal{H}$, as 
$$\E_{X \sim \mu}[\mathcal{S} g] = \norm{g}_{\cal{H}^d} \inner{g / \norm{g}_{\cal{H}^d}}{g_{\mathcal{S}, \mu}} = \norm{g}_{\cal{H}^d} \cdot 0 = 0,$$
where we used that $g / \norm{g}_{\cal{H}^d} \in  \cal{H}^d_{\leq 1}$.
By the assumptions on the kernel, (iii) and (v), we have that $C_c^\infty(\R^d, \R^d) \subseteq \cal{H}^d$. Equation \cref{eq:stationary solution characterization - SKDS proof} then follows from the observation that $\set{\grad_x u \mid u \in C_c^\infty(\R^d)} \subseteq C_c^\infty(\R^d, \R^d)$.

It remains to show that $(X_t)_{t \geq 0}$ is a \emph{reversible} stationary diffusion. We use the Stein identity of the diffusion Stein operator $\cal{D}_\mu^{a+c}$ in the same way as before in \cref{app: proof of SKDS upper bound}, i.e., $\E_{Z \sim \mu}[\mathcal{D}^{a+c}_\mu g(Z)] = 0$ for all $g \in \cal{H}_{\leq 1}^d$. Then, we calculate
\begin{equation}\label{eq:SKDS zero-addition under stationarity}
    \begin{aligned}
        E_{X \sim \mu}[\mathcal{S}_\mu g(X)] &= \E_{X \sim \mu}[\mathcal{S}_\mu g(X)] - \E_{Z \sim \mu}[\mathcal{D}^{a+c}_\mu g(Z)]\\
        &= \E_X\left[2 \inner{b(X)}{g(X)} + \operatorname{tr}(a(X)\grad_x g(X))\right]\\
        &\quad - \E_Z\left[2 \inner{b(Z)}{g(Z)} + \operatorname{tr}(a(Z)\grad_x g(Z)) + \operatorname{tr}(c(Z)\grad_x g(Z))\right]\\
        &= - \E_Z\left[\operatorname{tr}(c(Z)\grad_x g(Z))\right].
    \end{aligned}
\end{equation}
This sets us up for
\begin{equation}
    \begin{aligned}
        0 &= \sqrt{\operatorname{SKDS}(\mathcal{S}, \mu; \cal{H}_{\leq 1}^d)} \\
        &= \sup_{g \in \cal{H}_{\leq 1}^d} E_{X \sim \mu}[\mathcal{S}_\mu g(X)] \\
        &= \sup_{g \in \cal{H}_{\leq 1}^d} \E_Z\left[\operatorname{tr}(c(Z)\grad_x g(Z))\right] & \cref{eq:SKDS zero-addition under stationarity}\\
        &= \norm{S_k^\star (c \grad \log p + \inner{\grad}{c})}_{\mathcal{H}^d}. & \cref{eq:non-reversible component with RKHS norm}
    \end{aligned}
\end{equation}
We conclude $\mu^{-1} \inner{\grad}{\mu c} = 0$ using \cref{lem:c-term vanishing} (specifically \cref{eq:non-reversible component and injectivity}), and hence $(X_t)_{t \geq 0}$
is a reversible diffusion process with stationary density $\mu$.
\end{proof}

\subsection{Proof of \texorpdfstring{\cref{cor:SKDS convex}}{Corollary~\ref{cor:SKDS convex}}}
    \begin{proof}
    Assume that we have a linear parametrization $\mathcal{S}^{\theta}\cdot = \inner{\theta}{\cal{T}\cdot}$. We aim to show that $$\operatorname{SKDS}(\mathcal{S}^\theta, \mu; \cal{H}_{\leq 1}^d) = (\sup_{g \in \cal{H}_{\leq 1}^d}\E_{X \sim \mu}\left[\mathcal{S}^\theta g(X)\right])^2$$ is convex in the parameters $\theta$. This follows from standard results on convexity preserving functions as follows. 
    %By definition, we have $\operatorname{SKDS}(\mathcal{S}^\theta, \mu; \cal{H}_{\leq 1}^d) = (\sup_{g \in \cal{H}_{\leq 1}^d}\E_{X \sim \mu}\left[\mathcal{S}^\theta g(X)\right])^2$. 
    %The result follows from standard results on convexity preserving functions. 
    Firstly, due to the linearity of the expectation, $$\E_{X \sim \mu}\left[\mathcal{S}^\theta g(X)\right] = \inner{\theta}{\E_{X \sim \mu}\left[\cal{T} g(X) \right]}$$ is linear in $\theta$. Then it holds generally that a point wise supremum of linear (convex) functions is convex. In other words, a function $f(x) = \sup_{y \in \mathcal{A}} h(x, y)$, which is convex in $x$ for each $y$, is convex in $x$. Lastly, squaring a non-negative convex function preserves convexity.
    Non-negativity follows by the linearity of the functional $\cal{T}$ (and the linearity of $\cal{S}^\theta$ respectively), as $$\E_{X \sim \mu}\left[\mathcal{S}^\theta (-g)(X)\right] = -\E_{X \sim \mu}\left[\mathcal{S}^\theta g(X)\right].$$ Since $g \in \cal{H}_{\leq 1}^d$ if and only if $-g \in \cal{H}_{\leq 1}^d$, the supremum over $\cal{H}_{\leq 1}^d$ is non-negative.
\end{proof}

\subsection{Calculations in \texorpdfstring{\cref{ex:linear SDE parametrization}}{Example~\ref{ex:linear SDE parametrization}}}

For $b(x) = B^\theta j(x)$, we calculate
\begin{align*}
    \inner{b(x)}{g(x)} &= g(x)^T B^\theta j(x)\\
    &= \operatorname{tr}\left(g(x)^T B^\theta j(x)\right)\\
    &= \operatorname{tr}\left(B^\theta j(x) g(x)^T \right)\\
    &= \mathrm{vec}\left((B^\theta)^T \right)^T \mathrm{vec}\left(j(x) g(x)^T\right) \\
    &= \inner{\mathrm{vec}\left((B^\theta)^T \right)}{\mathrm{vec}(j(x) \otimes g(x))},
\end{align*}
where $j(x) \otimes g(x) = j(x) g(x)^T$ denotes the outer product. We defined $V$ to stack the vectorized basis diffusion matrices, i.e., $V(x) = (\vect{v_1(x)v_1(x)^T}\mid \ldots \mid  \vect{v_m(x)v_m(x)^T})^T$. Then
\begin{align*}
    \operatorname{tr}(a(x)\grad_x g(x)) &= \operatorname{tr}\left(\sum_{i=1}^m A_i^\theta \, v_i(x)v_i(x)^T\grad_x g(x)\right)\\
    &= \sum_{i=1}^m A_i^\theta \operatorname{tr}\left((v_i(x)v_i(x)^T\grad_x g(x)\right)\\
    &= \sum_{i=1}^m A_i^\theta \operatorname{vec}\left(v_i(x)v_i(x)^T\right)^T\operatorname{vec}\left(\grad_x g(x)\right)\\
    &= \inner{\left( A_i^\theta \right)_{i \in [m]}}{V(x) \operatorname{vec}\left(\grad_x g(x)\right)}.
\end{align*}

\subsection{Proof of \texorpdfstring{\cref{prop:SKDS emp convex}}{Proposition~\ref{prop:SKDS emp convex}}}\label{app:Calculations for Example 10}
From \cref{ex:linear SDE parametrization} we have
\begin{equation}\label{eq:lin operator T}
    \mathcal{T}g(x) = 
    \begin{pmatrix}
    \operatorname{vec}\big(j(x) \otimes g(x)\big) \\
    V(x) \operatorname{vec}\big(\nabla g(x)\big)
    \end{pmatrix} \in \Gamma(\R^d, \R^{ld + m}),
\end{equation}
where $V$ stacks the vectorized basis diffusion matrices, i.e., $V(x) = (\vect{v_1(x)v_1(x)^T}\mid \ldots \mid  \vect{v_m(x)v_m(x)^T})^T$. We let $\cal{T}^{(i)}$ denote the projection onto the $i$-th entry and $\cal{T}_1$ indicate the application to the first argument. The operator $\cal{T}$, in an abuse of notation, is expanded to act on $C^1(\R^d, \R^{d \times d})$ as in \cref{def:SKDS Operator} defined for the SKDS operator $\mathcal{S}$.
 We define the following square matrices for the scope of this section;
\begin{equation}
    R := \left(\inner{\E_{X \sim \mu}\left[(\cal{T}_1^{(i)} K(X,\cdot))^T\right]}{\E_{X^\prime \sim \mu}\left[(\cal{T}_1^{(j)} K(X^\prime,\cdot))^T\right]}_{\mathcal{H}^d}\right)_{i,j \in [dl + m]}
\end{equation}
and define
\begin{equation}
    M(x,y) := \cal{T}_2 \left(\left(\cal{T}_1 K(x, y)\right)^T\right)
\end{equation}
as the consecutive application of $\mathcal{T}$ to both arguments of a matrix-valued kernel $K$. For notational convenience, we sometimes omit the arguments of $M(x,y)$ and only write $M$.
The proof of \cref{prop:SKDS emp convex} further requires the following two lemmas.
The next result shows that in $R$ the expectation and the inner product of $\cal{H}^d$ commute.
\begin{lemma}\label{lem:helper T operator}
    Let $\cal{T}$ be defined as in \cref{ex:linear SDE parametrization}, and let the basis functions $j$, $v_i v_i^T$ be bounded and continuous. In specific, $\norm{j}_\infty$, $\norm{v_i v_i^T}_\infty \leq C$ for all $i \in [m]$, and a $C \in [1, \infty)$. Moreover, let $K = k I_d$ with $k \in C_b^{(2,2)}(\R^d \times \R^d, \R)$. Then $\E_{X \sim \mu}{[\mathcal{T}g]}$ is a continuous linear functional on $\cal{H}^d$, and it holds that
    \begin{equation}\label{eq:matrix equality T operator}
            R = \E_{X \sim \mu}\left[\E_{Y \sim \mu}\left[ \cal{T}_2 \left(\left(\cal{T}_1 K(X,Y)\right)^T\right) \right] \right] = \E_{X \sim \mu}\left[\E_{Y \sim \mu}\left[ M(X,Y) \right] \right].
    \end{equation}
\end{lemma}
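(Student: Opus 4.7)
The plan is to mirror the proofs of Lemma~\ref{lem:Representer function SKDS} and Lemma~\ref{lem:SKDS closed form}, applied componentwise to each of the $(dl+m)$ scalar functionals $L_i(g) := \E_{X \sim \mu}[\mathcal{T}^{(i)} g(X)]$ obtained by projecting $\mathcal{T}$ onto its $i$-th coordinate. Linearity of $L_i$ in $g$ is immediate from \eqref{eq:lin operator T}, since every component of $\mathcal{T} g$ is either of the form $j_k(x) g_l(x)$ or of the form $\operatorname{tr}(v_s(x) v_s(x)^T \nabla g(x))$. For continuity, I would combine the uniform bounds $\|j\|_\infty, \|v_s v_s^T\|_\infty \le C$ with Cauchy--Schwarz to obtain $|\mathcal{T}^{(i)} g(x)| \le C (\|g(x)\|_2 + \|\nabla g(x)\|_F)$, and then invoke the RKHS estimates \eqref{eq:RKHS function bounded}--\eqref{eq:RKHS function grad bounded} to control both terms by $\sqrt{d}\,\|g\|_{\cal{H}^d}$ times $\sqrt{k(x,x)}$ or $\sqrt{\sum_{j} \partial_{x_j} k(x,x)}$. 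Because $k \in C_b^{(2,2)}$, both diagonal quantities are uniformly bounded, so integration against $\mu$ yields $|L_i(g)| \le c_i \|g\|_{\cal{H}^d}$ with a finite constant $c_i$.

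By the Riesz representation theorem, each $L_i$ admits a representer $g_{\mathcal{T}^{(i)}, \mu} \in \cal{H}^d$. Testing against the kernel sections $K(\cdot, y) e_k$ and using the reproducing property exactly as in \eqref{eq:representer function SKDS ith component} extracts the explicit form
\[
    g_{\mathcal{T}^{(i)}, \mu}(y) = \E_{X \sim \mu}\bigl[(\mathcal{T}^{(i)}_1 K(X, y))^T\bigr],
\]
which is precisely the vector whose inner product defines $R_{ij}$. Evaluating one representer through the other functional, and using the symmetry $\mathcal{T}^{(j)}_1 K(Y, X) = \mathcal{T}^{(j)}_2 K(X, Y)$ that follows from $K = k I_d$ with $k$ symmetric, then gives
\[
    R_{ij} = \langle g_{\mathcal{T}^{(j)}, \mu}, g_{\mathcal{T}^{(i)}, \mu}\rangle_{\cal{H}^d} = L_i(g_{\mathcal{T}^{(j)}, \mu}) = \E_{X \sim \mu}\Bigl[\mathcal{T}^{(i)}_1 \, \E_{Y \sim \mu}[(\mathcal{T}^{(j)}_2 K(X, Y))^T]\Bigr].
\]

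The main obstacle, exactly as in the proof of Lemma~\ref{lem:SKDS closed form}, is to justify pulling $\mathcal{T}^{(i)}_1$ inside the inner expectation via the Bochner-integral argument cited there. I would verify the required hypothesis by showing that each $\mathcal{T}^{(i)}$ is a continuous linear operator on $(C_b^1(\R^d, \R^d), \|\cdot\|_b)$: the basis-function bounds yield $|\mathcal{T}^{(i)} f(x)| \le 2 C \|f\|_b$ for every $f \in C_b^1(\R^d, \R^d)$, while the regularity $k \in C_b^{(2,2)}$ together with the boundedness of $j$ and $v_s v_s^T$ secures the $\mu$-integrability of the integrand $(\mathcal{T}^{(j)}_2 K(X, \cdot))^T$ that the Bochner theorem requires. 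A final application of Fubini and the column-wise convention under which $\mathcal{T}$ acts on matrix-valued kernels rewrites the iterated expectation as $\E_{X \sim \mu}\E_{Y \sim \mu}[(M(X, Y))_{ij}]$, establishing the matrix identity \eqref{eq:matrix equality T operator}.
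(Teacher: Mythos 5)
Your proposal is correct and follows essentially the same route as the paper: linearity plus the RKHS bounds \eqref{eq:RKHS function bounded}--\eqref{eq:RKHS function grad bounded} with the uniform bounds on $j$ and $v_i v_i^T$ give continuity of $\E_{X\sim\mu}[\mathcal{T}g]$ on $\cal{H}^d$, Riesz representation yields the componentwise representers $g_{\mu,\cal{T}^{(i)}}(\cdot)=\E_{X\sim\mu}[(\cal{T}_1^{(i)}K(X,\cdot))^T]$, and the identity \eqref{eq:matrix equality T operator} follows by commuting $\cal{T}$ with the inner expectation. The only difference is that you spell out the interchange via the Bochner-integral/continuity-on-$(C_b^1,\norm{\cdot}_b)$ argument of \cref{lem:SKDS closed form}, which the paper compresses into the remark that $\cal{T}$ is a continuous linear operator, so $\E$ and $\cal{T}$ commute.
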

\begin{proof}%[Proof of \cref{lem:helper T operator}]
    The linearity follows immediately from the linearity of both the expectation and $\cal{T}$.
    We further show that the operator $\cal{T}$ is bounded on $\cal{H}_{\leq 1}^d$ as follows.
    The first component $\operatorname{vec}\big(j(x) \otimes g(x)\big)$ is bounded due to the boundedness assumptions on $j$ and $k(x,x)$ using \cref{eq:RKHS function bounded} since $\norm{j}_\infty \norm{g}_\infty \leq C \sup_{x \in \R^d} k(x,x)$.
    Boundedness of the second component $V(x) \operatorname{vec}\big(\nabla g(x)\big)$ follows similarly from the boundedness of $V$ and $\partial_{x_j} k(x,x)$ using \cref{eq:RKHS function grad bounded} as $\norm{v}^2_\infty \norm{\grad g}_\infty \leq C^2 \sup_{x \in \R^d} \sqrt{\partial_{x_j} k(x,x)}$.
    As $\norm{\E_{X \sim \mu}{[\mathcal{T}g]}}_2 \leq (dl + m)\norm{\mathcal{T}g}_\infty$, we obtain that $\E_{X \sim \mu}{[\mathcal{T}g]}$ is bounded on $\cal{H}_{\leq 1}^d$, and hence continuous. 
    
    By Riesz representation theorem, there exists $(g_{\mu, \cal{T}^{(i)}})_{i \in [dl + m]} \subset \cal{H}^d$ such that $\E_{X \sim \mu}{[\cal{T}^{(i)}g]} = \inner{g_{\mu, \cal{T}^{(i)}}}{g}_{\cal{H}^d}$. Following the same steps as in \cref{proof:representer function} we conclude
    \begin{equation}
        g_{\mu, \cal{T}^{(i)}}(\cdot) = \E_{X \sim \mu}{[\cal{T}_1^{(i)} K(X, \cdot)^T]}.
    \end{equation}
    Hence, $R_{ij} = \E_{X \sim \mu}{\left[\cal{T}^{(i)}\left(\E_{X^\prime \sim \mu}\left[(\cal{T}_1^{(j)} K(X^\prime,\cdot))^T\right]\right)\right]}$. Since $\cal{T}$ is a continuous linear operator, $\E$ and $\cal{T}$ commute, and the statement follows.
\end{proof}

The following lemma bounds the entries of $M$.
\begin{lemma}\label{lem: helper for Error Estimate for Matrix Sampling Estimators}
Under the assumptions of \cref{lem:helper T operator}, for all $x, y \in \R^d$ it holds that
\begin{align}
    \norm{M(x,y)}_F &\le L_1 := (dl + m)^2 d^2 C^2 C_k,\\
    m_2(M(x,y)) &:= \max\big(\norm{\E[M M^T]}_F, \norm{\E[M^T M]}_F\big) \le L_2 := (dl + m)^3 (d^2 C^2 C_k)^2,
\end{align}
where $m_2(M(x,y))$ denotes the per-sample second moment and $C_k$ is a constant dependent on $k$.
\end{lemma}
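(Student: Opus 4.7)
The plan is to decompose $M(x,y)$ into a natural $2\times 2$ block structure induced by the two components of $\mathcal{T}$---the polynomial-basis part $\operatorname{vec}(j(x)\otimes\cdot)$ and the diffusion-basis part $V(x)\operatorname{vec}(\nabla\cdot)$---and then bound each entry uniformly using the boundedness assumptions on $k$, $j$, and the $v_iv_i^T$. First I would write $\mathcal{T}_1 K(x,y)$ explicitly: since $K=kI_d$, applying $\mathcal{T}_1$ to the $i$-th column $k(\cdot,y)e_i$ gives $k(x,y)\operatorname{vec}(j(x)e_i^T)$ in the first $dl$ rows and $V(x)\operatorname{vec}(e_i\nabla_x k(x,y)^T)$ in the last $m$ rows, producing a matrix in $\R^{(dl+m)\times d}$. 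Transposing and applying $\mathcal{T}_2$ column-wise then yields $M(x,y)\in\R^{(dl+m)\times(dl+m)}$.

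Next I would classify the entries of $M(x,y)$ according to the $2\times 2$ block of $(\mathcal{T}_1,\mathcal{T}_2)$-indices in which they sit. The $(j,j)$ block only sees $k(x,y)$ multiplied by entries of $j(x)$ and $j(y)$; the $(j,v)$ and $(v,j)$ blocks see first-order derivatives $\partial_{y_a}k$ or $\partial_{x_a}k$ multiplied by entries of $j$ and $V$; the $(v,v)$ block sees mixed second derivatives $\partial_{x_a}\partial_{y_b}k$ multiplied by entries of $V(x)$ and $V(y)$. All four types involve a bounded product of one derivative of $k$ of total order at most two in each argument with one entry of $j$ and one entry of $V$, together with a sum (from the vectorization/trace) of at most $d^2$ such scalar terms. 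Under $k\in C_b^{(2,2)}$ and $\|j\|_\infty,\|v_iv_i^T\|_\infty\le C$, a single constant $C_k$ controls $k$ and all its mixed partial derivatives uniformly on $\R^d\times\R^d$, giving the pointwise entry bound $|M(x,y)_{ij}|\le d^2 C^2 C_k$.

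The Frobenius bound $L_1$ then follows from the crude $\ell_1$-style inequality $\|M\|_F\le\|\operatorname{vec}(M)\|_1\le (dl+m)^2\max_{i,j}|M_{ij}|$, yielding $\|M\|_F\le(dl+m)^2 d^2 C^2 C_k$. For $m_2(M)$, write $(MM^T)_{ij}=\sum_{k}M_{ik}M_{jk}$, bound each summand by $(d^2C^2C_k)^2$, sum over $k\in[dl+m]$ to get $|(MM^T)_{ij}|\le(dl+m)(d^2C^2C_k)^2$, and apply the same $\ell_1$-style bound to obtain $\|MM^T\|_F\le(dl+m)^3(d^2C^2C_k)^2$; the bound for $M^TM$ is identical by symmetry. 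Taking expectations and using Jensen's inequality $\|\E[\cdot]\|_F\le\E[\|\cdot\|_F]$ gives the stated $L_2$.

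The main obstacle is the bookkeeping of the four blocks together with the vectorization/outer-product/trace structure of $\mathcal{T}$, i.e. tracking precisely which factors of $d$ and $(dl+m)$ enter at each stage; once the block decomposition is laid out, the analytic content reduces to elementary $\ell_\infty$--$\ell_1$ inequalities and the assumed uniform bounds on $k$, $j$, and $V$.
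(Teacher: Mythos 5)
Your proposal is correct and follows essentially the same route as the paper's proof: write out $\mathcal{T}_1 K(x,y)$ and $\mathcal{T}_2$ of its transpose explicitly, bound each entry by the worst-case $d^2 C^2 C_k$ (the $(v,v)$-block case) using $k \in C_b^{(2,2)}$ and the bounds on $j$ and $v_iv_i^T$, and then count the $(dl+m)^2$ entries for the Frobenius bound. Your treatment of the second-moment bound, via $(MM^T)_{ij}=\sum_k M_{ik}M_{jk}$ and Jensen's inequality $\norm{\E[\cdot]}_F \le \E[\norm{\cdot}_F]$, simply spells out the step the paper states without derivation, arriving at the same $L_2$.
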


\begin{proof}
    %\textbf{Unbiased Estimator:} Note that $\E[M] = R$ and as $R = R^T$, we have $\E[M^T] = \E[M]^T = R$ as well.
    
    %\textbf{Bound entries of $M$:} 
    To bound $\norm{M(x,y)}_F$,  we explicitly write down $M(x,y) = \cal{T}_2 \left(\left(\cal{T}_1 K(x, y)\right)^T\right)$ as
    \begin{equation*}
        \begin{aligned}
            \cal{T}_1 K(x, y) &= \left( \cal{T}_1 k(x, y)e_1 \mid \ldots \mid \cal{T}_1 k(x, y)e_d \right) \quad \text{with}\quad \cal{T}_1 k(x, y)e_i = \begin{pmatrix}
        \operatorname{vec}\big(j(x) \otimes k(x, y)e_i\big) \\
        V(x) \operatorname{vec}\big(e_i \otimes \nabla_x k(x, y)\big)
        \end{pmatrix}.
        \end{aligned}
    \end{equation*}
    The transpose of the full matrix is then
    \begin{equation*}
        \begin{aligned}
            \left(
        \mathcal{T}_1 K(x, y)
        \right)^T &=
            \left[
            \begin{array}{ccccccc}
            k(x, y) j(x)^T & 0 & \cdots & 0 & \multicolumn{1}{|c}{(v_1 v_1^T)_{1 \cdot} \nabla_x k(x,y)} & \cdots & (v_m v_m^T)_{1 \cdot} \nabla_x k(x,y) \\
            0 & k(x, y) j(x)^T & \cdots & 0 & \multicolumn{1}{|c}{(v_1 v_1^T)_{2 \cdot} \nabla_x k(x,y)} & \cdots & (v_m v_m^T)_{2 \cdot} \nabla_x k(x,y) \\
            \vdots & \vdots & \ddots & \vdots & \multicolumn{1}{|c}{\vdots} & \ddots & \vdots \\
            0 & 0 & \cdots & k(x, y) j(x)^T & \multicolumn{1}{|c}{(v_1 v_1^T)_{d \cdot} \nabla_x k(x,y)} & \cdots & (v_m v_m^T)_{d \cdot} \nabla_x k(x,y)
            \end{array}
            \right]\\
        &=  \left[\begin{array}{cccc|ccc}
            k(x, y) j(x)^T & 0 & \cdots & 0 & &\\
            0 & k(x, y) j(x)^T & \cdots & 0 & &\\
            \vdots & \vdots & \ddots & \vdots & &\\
            0 & 0 & \cdots & k(x, y) j(x)^T & &\\
            \end{array}
            \begin{array}{c|c|c}
            v_1 v_1^T \grad_x k(x,y) & \ldots & v_m v_m^T \grad_x k(x,y)
            \end{array}
            \right].
        \end{aligned}
    \end{equation*}
    When applying $\cal{T}_2$, there are two cases to distinguish. First, calculate
    \begin{equation*}
        \begin{aligned}
            \cal{T}_2 (k(x,y) j(x)_i e_j ) &= \begin{pmatrix}
                \operatorname{vec}\big(j(y) \otimes (k(x,y) j(x)_i e_j )\big) \\
                V(y) \cdot \operatorname{vec}\big(\nabla_y (k(x,y) j(x)_i e_j )\big)
            \end{pmatrix}\\
                &= j(x)_i \begin{pmatrix}
                k(x,y)\operatorname{vec}\big(  j(y) \otimes e_j \big) \\
                V(y) \cdot \operatorname{vec}\big(e_j  \otimes \nabla_y k(x,y) \big)
            \end{pmatrix},
        \end{aligned}
    \end{equation*}
    and for the second case
    \begin{equation*}
        \begin{aligned}
            \cal{T}_2 (v_i(x) v_i(x)^T \grad_x k(x,y)) &= \begin{pmatrix}
                \operatorname{vec}\big(j(y) \otimes (v_i(x) v_i(x)^T \grad_x k(x,y))\big) \\
                V(y) \cdot \operatorname{vec}\big(\nabla_y (v_i(x) v_i(x)^T \grad_x k(x,y))\big)
            \end{pmatrix}\\
            &= \begin{pmatrix}
                \operatorname{vec}\big(j(y) \otimes (v_i(x) v_i(x)^T \grad_x k(x,y))\big) \\
                V(y) \cdot \operatorname{vec}\big(v_i(x) v_i(x)^T \nabla_y \grad_x k(x,y)\big)
            \end{pmatrix}.
        \end{aligned}
    \end{equation*}
    We bound every entry to obtain
    \begin{align}
        \norm{j(x)_i k(x,y)\operatorname{vec}\big(  j(y) \otimes e_j \big)}_\infty &\leq C^2 \sup_{x,y \in \R^d}k(x,y),\\
        \norm{j(x)_i V(y) \cdot \operatorname{vec}\big(e_j  \otimes \nabla_y k(x,y) }_\infty  &\leq d C^2 \sup_{x,y \in \R^d}\norm{\nabla_y k(x,y)}_\infty,\\
        \norm{\operatorname{vec}\big(j(y) \otimes (v_i(x) v_i(x)^T \grad_x  k(x,y))\big)}_\infty
        &\leq d C^2 \sup_{x,y \in \R^d}\norm{\nabla_x k(x,y)}_\infty,\\
        \norm{V(y) \cdot \operatorname{vec}\big(v_i(x) v_i(x)^T \nabla_y \grad_x k(x,y)\big)}_\infty
        &\leq d^2 C^2 \sup_{x,y \in \R^d}\norm{\nabla_y \nabla_x k(x,y)}_\infty.
    \end{align}
    We define $C_k := \sup_{x,y \in \R^d} \max(k(x,y), \norm{\nabla_x k(x,y)}_\infty, \norm{\nabla_y k(x,y)}_\infty, \norm{\nabla_y \nabla_x k(x,y)}_\infty)$ and bound the Frobenius norm of $M(x,y)$ with the entrywise bound for each of the $(dl + m)^2$ entries. Then, it holds that
    \begin{equation}\label{eq:L_1}
        \norm{M(x,y)}_F \leq (dl + m)^2 d^2 C^2 C_k = L_1
    \end{equation}
    for all $x$, $y \in \R^d$.
    We further bound the per-sample second moment by
    \begin{equation}\label{eq:L_2}
        m_2(M(x,y)) \leq (dl + m)^3 (d^2 C^2 C_k)^2 = L_2.
    \end{equation}
\end{proof}

\begin{proof}[Proof of \cref{prop:SKDS emp convex}]
    By \cref{lem:SKDS closed form} and substituting the linear parametrization, we have
    \begin{equation}
        \begin{aligned}
            \operatorname{SKDS}(\mathcal{S}^\theta, \mu; \cal{H}_{\leq 1}^d) &= \E_{X \sim \mu}\left[\E_{Y \sim \mu}\left[ \mathcal{S}^\theta_2 \mathcal{S}^\theta_1 K(X, Y)\right] \right]\\
            &= \E_{X \sim \mu}\left[\E_{Y \sim \mu}\left[ \inner{\theta}{\cal{T}_2\left(\inner{\theta}{\cal{T}_1(K(X, Y))}^T\right)}\right] \right]\\
            &= \inner{\theta}{\E_{X \sim \mu}\left[\E_{Y \sim \mu}\left[ \cal{T}_2 \left(\left(\cal{T}_1 K(x,y)\right)^T\right) \right] \right]\theta}\\
            &= \inner{\theta}{R \theta}.
        \end{aligned}
    \end{equation}
    Note that $R$ is a Gram-matrix, hence PSD. This is an alternative proof for the convexity of the SKDS, although with stronger regularity assumptions.

    Similarly, the empirical estimator in \cref{eq:empirical estimate SKDS} can be written using the identity $\theta^T M(x,y) \theta = \frac12 \theta^T (M(x,y) + M(x,y)^T) \theta$, which holds for any real vector $\theta$ and square matrix, yielding
    \begin{equation}
        \begin{aligned}
            \hat{\operatorname{SKDS}}(\mathcal{S}^{\theta}, D; \cal{H}^d_{\leq 1}) &= \frac{1}{\floor{N/2}} \sum_{n = 1}^{\floor{N/2}} \mathcal{S}_1^{\theta} \mathcal{S}_2^{\theta} K(x_{2n - 1}, x_{2n})\\
            &= \inner{\theta}{ \frac{1}{\floor{N/2}} \sum_{n = 1}^{\floor{N/2}} \cal{T}_1 \left(\left(\cal{T}_2 K(x_{2n - 1}, x_{2n})\right)^T\right) \theta}\\
            &= \inner{\theta}{ \frac{1}{\floor{N/2}} \sum_{n = 1}^{\floor{N/2}} \frac{1}{2}\left(\cal{T}_1 \left(\left(\cal{T}_2 K(x_{2n - 1}, x_{2n})\right)^T\right) + \cal{T}_1 \left(\left(\cal{T}_2 K(x_{2n - 1}, x_{2n})\right)^T\right)^T\right)\theta}\\
            &=: \inner{\theta}{\frac{1}{\floor{N/2}} \sum_{n = 1}^{\floor{N/2}} R_n \theta}.
        \end{aligned}
    \end{equation}

    Note that $R_n$ are independent copies of $\frac{1}{2}(M + M^T)$. From \cref{lem:helper T operator}, we know $\E[M] = R$, and as $R = R^T$, we have $\E[M^T] = \E[M]^T = R$ as well. We conclude
    \begin{equation*}
        \E[R_n] = \E[\frac{1}{2}(M(x_{2n - 1}, x_{2n}) + M(x_{2n - 1}, x_{2n})^T)] = R,
    \end{equation*}
    with $(R_n)_{n \in \floor{N/2}}$ independent.
    Using \cref{lem: helper for Error Estimate for Matrix Sampling Estimators}, we apply the error estimation for matrix samples from \citep[Cor 6.2.1]{tropp2015introduction} and get for any $\epsilon > 0$ that
    \begin{equation}
        \begin{aligned}
            \P\left[\norm{\frac{1}{\floor{N/2}} \sum_{n = 1}^{\floor{N/2}} R_n - R}_F \geq \epsilon\right] \leq 2 (dl + m) \exp{\left( - \frac{\floor{N/2} \epsilon^2 / 2}{L_2 + 2 L_1 \epsilon/3}\right)},
        \end{aligned}
    \end{equation}
    for constants $L_1$ and $L_2$ as specified in \cref{lem: helper for Error Estimate for Matrix Sampling Estimators}.
    The Frobenius norm can be lower bounded with basic linear algebra and Weyl's inequality for spectral stability, yielding
    \begin{equation}\label{eq:spectral stability}
        \begin{aligned}
            \norm{\frac{1}{\floor{N/2}} \sum_{n = 1}^{\floor{N/2}} R_n - R}_F &\geq \norm{\frac{1}{\floor{N/2}} \sum_{n = 1}^{\floor{N/2}} R_n - R}_{\text{op}, 2}\\
            &\geq \max_{k \in [dl + m]} \abs{\lambda_{(k)}\left(\frac{1}{\floor{N/2}} \sum_{n = 1}^{\floor{N/2}} R_n\right) - \lambda_{(k)}(R)},
        \end{aligned}
    \end{equation}
    where $\lambda_{(k)}(\cdot)$ denotes the $k$-th eigenvalue. Note that Weyl's inequality is only applicable to symmetric (or hermitian) matrices. We argued that $R$ is PSD and, in specific, symmetric. The estimation matrix $R_n$ is constructed to be symmetric as well. We conclude from \cref{eq:spectral stability} and the positive semi-definiteness of $R$ that the empirical estimator $\hat{\operatorname{SKDS}}(\mathcal{S}^{\theta}, D; \cal{H}^d_{\leq 1})$ is $\epsilon$-strongly quasiconvex with high probability, In other words, it holds that
    \begin{equation}
        \hat{\operatorname{SKDS}}(\mathcal{S}^{\theta}, D; \cal{H}^d_{\leq 1}) + \epsilon \norm{\theta}_2
    \end{equation}
    is convex in $\theta$ with probability greater than $1 - 2 (dl + m) \exp{\left( - \frac{\floor{N/2} \epsilon^2 / 2}{L_2 + 2 L_1 \epsilon/3}\right)}$. The result follows from rearranging for $N$.
\end{proof}

\section{EXPERIMENTS}
\subsection{Kernel Choices}\label{app:Kernel Choices}
In this work, we use the following kernels for model construction and benchmarking. Each kernel emphasizes different properties of the input space or adjusts standard kernels to better suit our setting.

\begin{enumerate}
    \item \textbf{RBF Kernel:} 
    \[
        k_{\text{RBF}}(x,y) = \exp{\left(- \frac{\norm{x-y}_2^2}{2 \sigma^2} \right)}.
    \]
    This standard Gaussian (Radial Basis Function) kernel promotes smoothness by measuring similarity based on the Euclidean distance between inputs. It is widely used for its universal approximation properties and strong empirical performance.

    \item \textbf{Tilted RBF Kernel:}
    \[
        k_{\text{RBF, tilted}}(x,y) = \frac{1}{w_1(x) w_1(y)}\exp{\left(- \frac{\norm{x-y}_2^2}{2 \sigma^2} \right)}.
    \]
    This variant of the RBF kernel downweights regions far from the origin using the weight function \( w_1(x) = (1 + \|x\|_2^2)^{1/2} \). It effectively biases the kernel toward the origin and suppresses the influence of distant samples, which can be beneficial in high-dimensional or heavy-tailed settings.

    \item \textbf{IMQ+ Kernel:}
    \[
        k_{\text{IMQ}^+}(x,y) = \frac{1}{w_1(x) w_1(y)}\left(\frac{1}{w_1(x-y)} +  \left(1 +  \inner{x}{y}\right)\right),
    \]
    where \( w_1(x) = (1 + \|x\|_2^2)^{1/2} \). This kernel combines a repulsive inverse multiquadric term and an attractive linear term, both scaled by polynomially decaying weights. It was proposed in \citep[Corollary 3.4]{kanagawa2022controlling} and is designed to control the spectral decay of the associated integral operator, improving generalization in certain distributional regimes.
\end{enumerate}

\subsection{Compute Infrastructure}\label{subsec:Compute Infrastructure}
All experiments were conducted on a high-performance computing (HPC) cluster equipped with \textit{Intel(R) Xeon(R) Platinum 8480+ (Sapphire Rapids)} processors.  

Jobs were scheduled using \texttt{SLURM} in combination with the \texttt{jobfarm} framework to parallelize and manage large batches of experiments. A typical job allocation requested \texttt{2} nodes with \texttt{200} tasks (\texttt{2} CPUs per task) and a maximum runtime of 24 hours. The runtime environment was managed with \texttt{Conda}, and all dependencies were contained in a dedicated environment to ensure reproducibility.  

For transparency, the following example job script illustrates the scheduling setup:

\begin{verbatim}
#!/bin/bash
#SBATCH -J JobFarm
#SBATCH --nodes=2
#SBATCH --ntasks=200
#SBATCH --cpus-per-task=2
#SBATCH --time=24:00:00
...
jobfarm start experiment/command_list.txt
\end{verbatim}

This configuration enabled efficient distribution of workloads across multiple CPUs while maintaining consistent software environments.

\subsection{Baselines}\label{subsec:Baselines}

We compare against several established methods for causal discovery:  
\begin{itemize}
    \item \textbf{GIES} \citep{hauser2012characterization}, implemented via the Causal Discovery Toolbox (MIT License)\footnote{\url{https://github.com/FenTechSolutions/CausalDiscoveryToolbox}}.  
    \item \textbf{IGSP} \citep{wang2017permutation}, using the CausalDAG package (3-Clause BSD License)\footnote{\url{https://github.com/uhlerlab/causaldag}}.  
    \item \textbf{DCDI} \citep{brouillard2020differentiable}, with the authors' Python implementation (MIT License).  
    \item \textbf{NODAGS} \citep{sethuraman2023nodags}, using the official implementation (Apache 2.0 License).  
    \item \textbf{LLC} \citep{hyttinen2012learning}, based on the NODAGS code (Apache 2.0 License) with extensions by \citet{Lorch:2024} with code published under MIT License \citep{stadion2024}.  
    \item \textbf{KDS} (\citep{stadion2024}), using the official repository (MIT License)\footnote{\url{https://github.com/larslorch/stadion}}.  
\end{itemize}

\subsection{Results}\label{app:Results}

To evaluate whether our method is significantly better than a baseline method by a relative margin of $5\%$, we perform a paired Wilcoxon signed-rank test on the log-transformed metric ratios. In specific, for any method and data generation process we calculate the Wasserstein-$2$ distance and mean squared error (MSE) in every dimension. For one of our methods and a baseline method we define a paired array $(\operatorname{our}_i, \operatorname{baseline}_i)_{i \in [50 * 10]}$ of metric values, where $50$ is the amount of generated datasets and $10$ is the number of test interventions per dataset. By using the log transform, multiplicative differences become additive. Let $r_i = \log(\text{our}_i) - \log(\text{baseline}_i)$. A 5\% improvement (ours is 5\% smaller) means 
\(\text{our}_i \le 0.95 \cdot \text{baseline}_i\), which is equivalent to $\log(\text{our}_i) - \log(\text{baseline}_i) \le \log(0.95)$.
So we test
\[
H_0: \text{median}(r_i) = 0 \quad \text{vs} \quad H_1: \text{median}(r_i) < \log(0.95).
\]

Equivalently, we can define
\[
d_i = r_i - \log(0.95)
\]
and run a Wilcoxon signed-rank test on \(d_i\) versus 0 (i.e., test whether \(\text{median}(d_i) < 0\)). We run this as well with roles of our method and the baseline method switched. The results for both of our methods are reported in \cref{tab:SKDS Linear significance tests} and \cref{tab:SKDS MPL significance tests}.

As detailed in \cref{tab:SKDS Linear significance tests}, \textbf{SKDS (Linear)} achieves statistically significant improvements (a relative margin of at least $5\%$) over most baselines across all datasets, while performing on par with \textbf{KDS}. \textbf{SKDS (MLP)} also performs strongly and is competitive with all baselines, but shows significant losses in purely linear settings, where simpler models are inherently better suited. This reflects a natural trade-off between flexibility and inference difficulty: more expressive models adapt well to nonlinear data but may underperform when the ground truth is linear.

In the significance tests, the \textbf{SKDS} variants are only outperformed by the baseline \textbf{KDS} and only for one data-generation setting (linear SDEs with Erd\H{o}s--R\'enyi structure) when evaluated with the Wasserstein-$2$ distance. This finding aligns with \cref{prop:SKDS upper bound}, which establishes that the SKDS objective can be upper bounded by the Wasserstein-$2$ distance. In this sense, the observed gap may reflect the tighter sensitivity of Wasserstein-$2$ distance to distributional differences beyond the mean, whereas MSE primarily emphasizes mean accuracy.

\begin{table}[ht]
\centering
\begin{tabular}{lcccccc}
\hline
Baseline / Dataset & SDE-ER & SDE-SF & SCM-ER & SCM-SF & SERGIO-ER & SERGIO-SF \\
\hline
GIES &  / $\ast$ &  / $\ast$ &  / $\ast$ &  /  & $\ast$ / $\ast$ & $\ast$ / $\ast$ \\
IGSP & $\ast$ / $\ast$ &  / $\ast$ & $\ast$ / $\ast$ & \textcolor{red}{$\ast$} / $\ast$ & $\ast$ / $\ast$ & $\ast$ / $\ast$ \\
DCDI & $\ast$ / $\ast$ & $\ast$ / $\ast$ & $\ast$ / $\ast$ & $\ast$ / $\ast$ & \textcolor{red}{$\ast$} /  &  /  \\
LLC & $\ast$ / $\ast$ &  / $\ast$ & $\ast$ / $\ast$ & $\ast$ / $\ast$ & $\ast$ / $\ast$ & $\ast$ / $\ast$ \\
NODAGS & $\ast$ / $\ast$ &  / $\ast$ & $\ast$ / $\ast$ &  / $\ast$ & \textcolor{red}{$\ast$} / $\ast$ &  /  \\
KDS (Linear) &  /  &  /  & $\ast$ / $\ast$ & $\ast$ / $\ast$ & $\ast$ / $\ast$ & $\ast$ / $\ast$ \\
KDS (MLP) & \textcolor{red}{$\ast$} /  & $\ast$ / $\ast$ & $\ast$ / $\ast$ &  /  &  /  &  /  \\
\hline
\end{tabular}
\caption{Significance tests for \textbf{SKDS (Linear)}. Black star indicates our method significantly outperforms the baseline; red star indicates baseline significantly outperforms ours. Each cell shows MSE / Wasserstein results.}
\label{tab:SKDS Linear significance tests}
\end{table}
\begin{table}[ht]
\centering
\begin{tabular}{lcccccc}
\hline
Baseline / Dataset & SDE-ER & SDE-SF & SCM-ER & SCM-SF & SERGIO-ER & SERGIO-SF \\
\hline
GIES & \textcolor{red}{$\ast$} /  &  / $\ast$ & \textcolor{red}{$\ast$} /  & \textcolor{red}{$\ast$} / $\ast$ & $\ast$ / $\ast$ & $\ast$ / $\ast$ \\
IGSP & \textcolor{red}{$\ast$} / $\ast$ &  / $\ast$ & $\ast$ / $\ast$ &  / $\ast$ & $\ast$ / $\ast$ & $\ast$ / $\ast$ \\
DCDI &  / $\ast$ & $\ast$ / $\ast$ & $\ast$ / $\ast$ & $\ast$ / $\ast$ &  / $\ast$ &  / $\ast$ \\
LLC & \textcolor{red}{$\ast$} / $\ast$ &  / $\ast$ & $\ast$ / $\ast$ &  / $\ast$ & $\ast$ / $\ast$ & $\ast$ / $\ast$ \\
NODAGS & \textcolor{red}{$\ast$} / $\ast$ &  / $\ast$ & $\ast$ / $\ast$ &  / $\ast$ &  / $\ast$ &  / $\ast$ \\
KDS (Linear) & \textcolor{red}{$\ast$} / \textcolor{red}{$\ast$} & $\ast$ /  & $\ast$ / $\ast$ & $\ast$ / $\ast$ & $\ast$ / $\ast$ & $\ast$ / $\ast$ \\
KDS (MLP) & \textcolor{red}{$\ast$} / \textcolor{red}{$\ast$} &  /  &  /  &  /  & $\ast$ /  &  /  \\
\hline
\end{tabular}
\caption{Significance tests for \textbf{SKDS (MLP)}. Black star indicates our method significantly significantly outperforms the baseline by at least 5\%; red star indicates baseline significantly outperforms ours. Each cell shows MSE / Wasserstein results.}
\label{tab:SKDS MPL significance tests}
\end{table}

The results in \cref{fig:Results-W2-ER-Full} provide a complementary perspective on model performance. They show that the choice of causal structure does not substantially affect the comparative behavior of the methods. In the linear setting, all approaches achieve similar accuracy with respect to MSE. By contrast, in the gene expression experiments, only \textbf{SKDS (Linear)} remains competitive as a linear model, whereas approaches with nonlinear drift functions exhibit a clear advantage.

\begin{figure}
    \centering
    \includegraphics[width=0.96\linewidth]{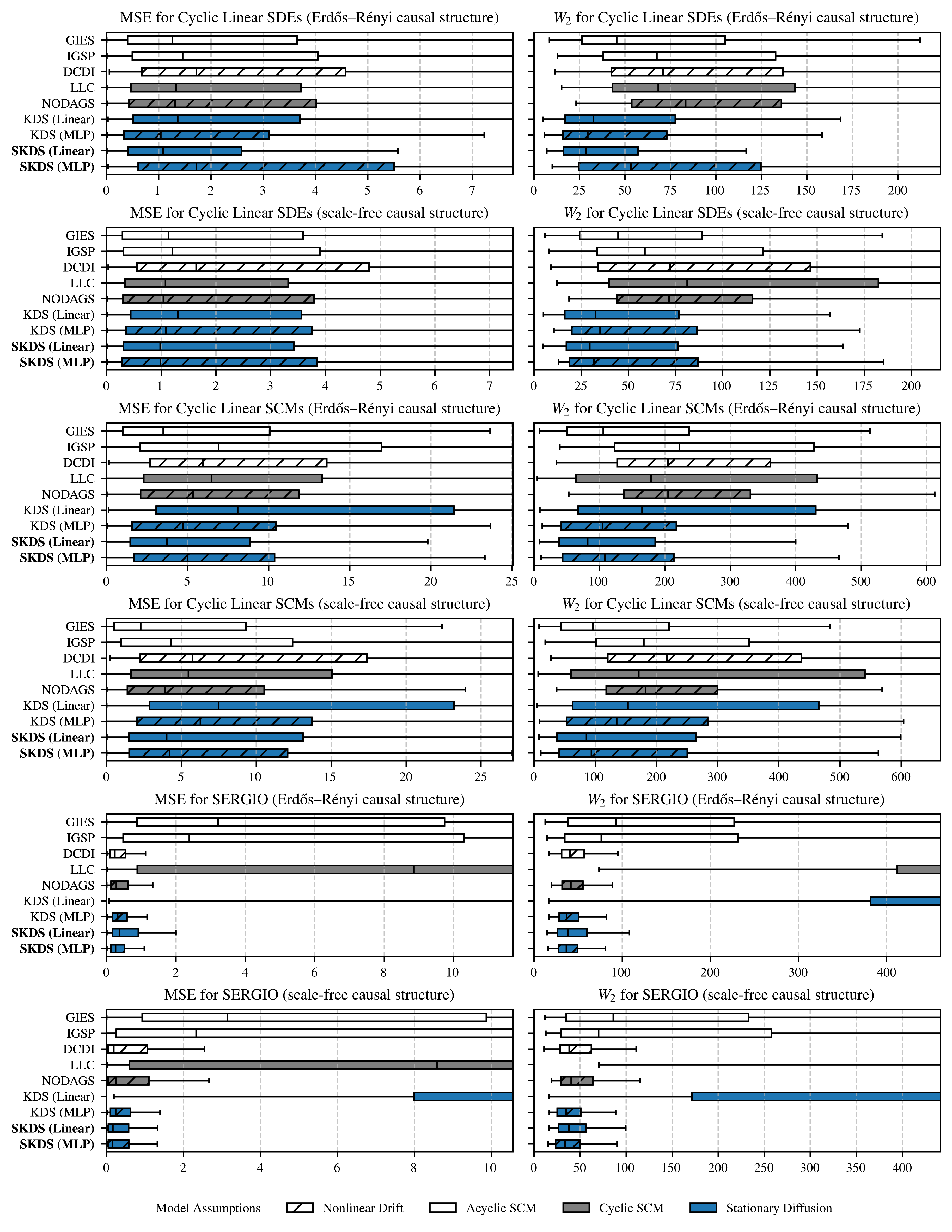}
    \caption{Benchmarking results for $d = 20$ variables. The MSE (left column) and the Wasserstein-$2$ distance (right column) distance MSE were computed from $10$ test interventions on unseen target variables across $50$ randomly generated systems and $6$ different data generating methods (rows). Box plots depict the medians and interquartile ranges (IQR), with whiskers extending to the largest value within $1.5$ times the IQR from the boxes.}
    \label{fig:Results-W2-ER-Full}
\end{figure}

\section{Code}\label{app:Code}

The Stein-type kernel deviation from stationarity can be used as a drop-in replacement for the framework by \citet{stadion2024}. The critical code snippet is to be found in \cref{lst:stein-objective}.

\begin{figure}[ht]
\begin{lstlisting}
def stein_type_kds_operator(h, argnum, hdim):
    def h_out(x, y, *args):
        assert x.ndim == y.ndim == 1
        assert x.shape == y.shape
        z = x if argnum == 0 else y
        f_x = f(z, *args)
        sigma_x = sigma(z, *args)
        grad_h = jax.jacfwd(h, argnums=argnum)(x, y, *args)
        if hdim == 0:
            return h(x,y,*args) * f_x + 0.5 * sigma_x.T @ sigma_x @ grad_h
        if hdim == 1:
            return f_x @ h(x,y,*args) + 0.5 * jnp.trace(sigma_x @ sigma_x.T @ grad_h)
    return h_out

def loss_term(x, y, *args):
    return stein_type_kds_operator(
        stein_type_kds_operator(_kernel, 0, 0), 1, 1
    )(x, y, *args)
\end{lstlisting}
\caption{Nested Stein-type KDS operator defining the custom objective function integrated into the \texttt{stadion} framework~\citep{stadion2024}.}
\label{lst:stein-objective}
\end{figure}

\vfill

% \end{document}

\end{document}